\useunder{\uline}{\ul}{}
\pgfplotsset{compat=1.17}
\theoremstyle{plain}
\theoremstyle{definition}
\theoremstyle{remark}
\icmltitlerunning{Moderately Distributional Exploration for Domain Generalization}
\begin{document}

\twocolumn[
\icmltitle{Moderately Distributional Exploration for Domain Generalization}

% It is OKAY to include author information, even for blind
% submissions: the style file will automatically remove it for you
% unless you've provided the [accepted] option to the icml2023
% package.

% List of affiliations: The first argument should be a (short)
% identifier you will use later to specify author affiliations
% Academic affiliations should list Department, University, City, Region, Country
% Industry affiliations should list Company, City, Region, Country

% You can specify symbols, otherwise they are numbered in order.
% Ideally, you should not use this facility. Affiliations will be numbered
% in order of appearance and this is the preferred way.
\icmlsetsymbol{equal}{*}
\icmlsetsymbol{dagger}{$\dagger$}

\begin{icmlauthorlist}
\icmlauthor{Rui Dai}{yyy}
\icmlauthor{Yonggang Zhang}{bbb,dagger}
\icmlauthor{Zhen Fang}{aaa}
\icmlauthor{Bo Han}{bbb}
\icmlauthor{Xinmei Tian}{yyy,ccc,dagger}
% \icmlauthor{Firstname6 Lastname6}{sch,yyy,comp}
% \icmlauthor{Firstname7 Lastname7}{comp}
%\icmlauthor{}{sch}
% \icmlauthor{Firstname8 Lastname8}{sch}
% \icmlauthor{Firstname8 Lastname8}{yyy,comp}
%\icmlauthor{}{sch}
%\icmlauthor{}{sch}
\end{icmlauthorlist}

\icmlaffiliation{yyy}{University of Science and Technology of China, Hefei, China}
\icmlaffiliation{aaa}{Australian Artificial Intelligence Institute, University of Technology Sydney, Sydney, Australia}
\icmlaffiliation{ccc}{Institute of Artificial Intelligence, Hefei Comprehensive National Science Center, Hefei, China}
\icmlaffiliation{bbb}{Department of Computer Science, Hong Kong Baptist University, HongKong, China}

\icmlcorrespondingauthor{Xinmei Tian}{xinmei@ustc.edu.cn}
\icmlcorrespondingauthor{Yonggang Zhang}{csygzhang@comp.hkbu.edu.hk}

% You may provide any keywords that you
% find helpful for describing your paper; these are used to populate
% the "keywords" metadata in the PDF but will not be shown in the document
\icmlkeywords{Domain Generalization, Machine Learning, ICML}

\vskip 0.3in
]

% this must go after the closing bracket ] following \twocolumn[ ...

% This command actually creates the footnote in the first column
% listing the affiliations and the copyright notice.
% The command takes one argument, which is text to display at the start of the footnote.
% The \icmlEqualContribution command is standard text for equal contribution.
% Remove it (just {}) if you do not need this facility.

\printAffiliationsAndNotice{}  % leave blank if no need to mention equal contribution
% \printAffiliationsAndNotice{\icmlEqualContribution} % otherwise use the standard text.

\begin{abstract}
Domain generalization (DG) aims to tackle the distribution shift between training domains and unknown target domains. 
Generating new domains is one of the most effective approaches, yet its performance gain depends on the distribution discrepancy between the generated and target domains.
Distributionally robust optimization is promising to tackle distribution discrepancy by exploring domains in an uncertainty set. 
However, the uncertainty set may be overwhelmingly large, leading to low-confidence prediction in DG.
It is because a large uncertainty set could introduce domains containing semantically different factors from training domains.
To address this issue, we propose to perform a \emph{mo}derately \emph{d}istributional \emph{e}xploration (MODE) for domain generalization. Specifically, MODE performs distribution exploration in an uncertainty \textit{subset} that shares the same semantic factors with the training domains. 
We show that MODE can endow models with provable generalization performance on unknown target domains.
The experimental results show that MODE achieves competitive performance compared to state-of-the-art baselines.
\end{abstract}

\section{Introduction}
\label{Introduction}

Deep neural networks (DNNs) have achieved exciting performance on various tasks. The successes of DNNs heavily depend on an underlying assumption that the training domains and target domain share the same distribution. 
However, this assumption may not hold in some practical scenarios, which leads to the failure of DNNs. To release this assumption, researchers have studied a more practical learning setting called \textit{Domain Generalization} (DG) \cite{muandet2013domain,ye2021towards,shen2021towards}. The goal of DG is to train models using training domains such that these models can generalize well in the unknown target domain which shares the same semantics with the training domains.

To generalize well on the unknown target domains, previous works introduce a domain generation strategy, enhancing the performance of DNNs by generating new domains \cite{zhou2020learning,zhou2020deep,wang2021learning,xu2021fourier}. The underlying intuition of this approach is that learning with many generated domains could make DNNs robust against domain shifts. However, it remains challenging how to construct new domains to achieve a provable generalization performance on target domains. Namely, it is challenging to guarantee a mitigated distribution discrepancy between the generated domains and target domains. Accordingly, the generated domains may fail to promote generalizability or even cause performance degradation of DNNs. The reason lies in the fact that target domains are unknown in the training process, leading to an uncontrollable distribution discrepancy between the generated and the target domains.

Distributionally Robust Optimization (DRO) is a possible strategy to tackle the distribution discrepancy between training and target domains \cite{csiszar1967information,namkoong2016stochastic,staib2019distributionally}. The intuition of DRO is to extend one distribution to a distribution space, i.e., uncertainty set, and uses the worst-case distribution in the uncertainty set for model training \cite{sinha2017certifying, michel2021modeling, mehra2022certifying}. By ensuring uniformly well performance inside the uncertainty set around the training domains, DRO can enlarge the influence of the training domains and thus shrink the distribution discrepancy between training and test domains. Unfortunately, directly employing DRO to DG has shown limited performance improvement in practice \cite{shen2021towards}. The failure of DRO may be related to the overwhelmingly large property of the employed uncertainty set. Such a large uncertainty set may introduce some unrelated domains containing semantics inconsistently with training domains. Consequently, models trained over the uncertainty set make decisions with fairly low confidence, known as the low confidence issue \cite{hu2018does, frogner2019incorporating,shen2021towards}.

\tikzstyle{arrow1} = [thick,->,>=stealth]
\tikzstyle{arrow2} = [<->,>=stealth]
\begin{figure}[htbp]\label{fig:causal}\label{fig:g}
\centering
% \hspace{0cm}
\vspace{5mm}
\begin{tikzpicture}[scale=1, line width=0.2pt]
\draw (0, 0) node(s) [circle, draw]  {{\footnotesize\,$S$\,}};
\draw (-2, 0) node(c)[circle, draw]  {{\footnotesize\,$C$\,}};
\draw (-3, -1.2) node(y) [circle, draw, fill=black!25]  {{\footnotesize\,$Y$\,}};
\draw (-1, -1.2) node(x)[circle, draw, fill=black!25]  {{\footnotesize\,$X$\,}};
\draw [arrow1] (s) -- (x); 
\draw [arrow1] (c) -- (x);
\draw [arrow1] (c) -- (y);
\draw (1.5, 0) node(s2) [circle, draw]  {{\footnotesize\,$S$\,}};
\draw (1.5, -1.2) node(c2)[circle, draw]  {{\footnotesize\,$C$\,}};
\draw (2.75, -0.6) node(G)[rectangle, draw]  {{\footnotesize\,$\mathbf{G}$\,}};
\draw (4, -0.6) node(x2)[circle, draw, fill=black!25]  {{\footnotesize\,$X$\,}};
\draw [arrow1] (s2) -- (G); 
\draw [arrow1] (c2) -- (G);
\draw [arrow1] (G) -- (x2);
\end{tikzpicture}

\caption{The left shows the causal relationship of data $X$, label $Y$, semantic factor $C$ and non-semantic factor $S$. The right shows that data $X$ are generated by an causal mechanism $\mathbf{G}$ with two causes: semantic factor $C$ and non-semantic factor $S$.}
\vspace{-0.3cm}
\end{figure}
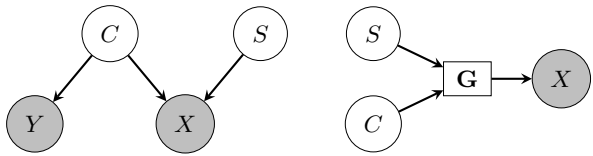
 % and $S,C=G^{-1}(X)$
To fully unleash the potential of DRO in DG, we propose to perform distribution exploration in an uncertainty \emph{subset}, which shares the same semantic factors with the training domains, avoiding the exploration of semantically unrelated domains. The insight lies in that merely exploring the semantically related subset could shrink the space of the uncertainty set, mitigating the low confidence issue above.

Specifically, following prior works \cite{suter2019robustly,zhang2020causal,mitrovic2020representation,mahajan2021domain,zhang2021causaladv,veitch2021counterfactual,lv2022causality}, we assume that observed data $X$ are generated by an causal mechanism $\mathbf{G}$ with two causes: semantic factor $C$ and non-semantic factor $S$, i.e., $X=\mathbf{G}(S,C)$, where the label $Y$ is the effect of the semantic factor $C$. Built upon this assumption, we can perform DRO on the subset of non-semantic factor $S$, rather than on the original uncertainty set containing both semantic and non-semantic factors. Motivated by this insight, we propose a novel approach \textit{moderately distributional exploration} (MODE) for domain generalization.

To support our approach, we develop a theoretical framework that provides the generalization estimation of our learning principle and gives the risk estimation for the unknown target domain. Empirically, we conduct extensive experiments to verify the effectiveness of our approach. The experimental results show that {MODE} achieves competitive performance compared with the state-of-the-art baselines.

\section{Related Work}
\label{Related work}
\subsection{Domain Generalization}
{Domain generalization} aims to learn more generalized knowledge from existing multiple source domains and finally test on the unknown target domain. Over the years, great efforts have been made in many directions, such as Invariant Representation \cite{chuang2020estimating,nguyen2021domain,xiao2021bit,shi2022gradient}, Causal \cite{mahajan2021domain,mouli2021asymmetry, lv2022causality}, and Optimization \cite{krueger2021out,zhang2021can,lei2021near,gulrajani2021in}. 
To generalize well on the unknown target domains, previous works introduce a domain generation strategy, enhancing the performance of DNNs by generating new domains.
\citet{shankar2018generalizing} perturbs the input samples along the direction of the most significant domain change while maintaining semantics.
\citet{zhou2020deep} trains a domain transformation model to transform images to unseen domains by fooling a domain classifier.
\citet{somavarapu2020frustratingly,borlino2021rethinking} simply use a style transfer like AdaIN \cite{huang2017arbitrary} to argument data in style aspects to optimize the model. 
\citet{zhou2020learning} train a data generator to generate new domains using optimal transport to measure the distribution divergence.
\citet{zhou2021domain,zhou2021mixstyle} achieves style augmentation in the feature level by mixing the CNN feature map's mean and std between instances of different domains.
\citet{li2022uncertainty} focuses on addressing the uncertain nature of domain shifts by modeling feature statistics as uncertain distributions, which is also achieved through the use of AdaIN, where non-semantic factors are replaced with randomly chosen values from the modeled distributions.
\citet{tang2021crossnorm} address the problem of domain shift by developing two simple and efficient normalization methods that can reduce the non-semantic domain shift between different distributions, while \citet{zhang2022towards} jointly learns semantic and variation encoders to disentangle the semantic and non-semantic factors.
Our approach explores the non-semantic factor to create augmented samples, which to some extent, is similar to approaches of data augmentation.

\subsection{Distributionally Robust Optimization}
Distributionally robust optimization is a promising approach to tackle distribution discrepancy by exploring unknown domains in a fixed uncertainty set \cite{sagawa2019distributionally}. DRO has developed plenty of approaches with different methods to measuring distribution discrepancy, such as Wasserstein distance \cite{sinha2017certifying, mehra2022certifying}, $f$-divergence \cite{csiszar1967information,ben2013robust,namkoong2016stochastic,michel2021modeling} and maximum mean discrepancy (MMD) \cite{staib2019distributionally}.
Unfortunately, employing DRO to DG has shown limited performance improvement in practice \cite{shen2021towards}.
\cite{hu2018does,frogner2019incorporating,shen2021towards} have pointed out that in order to capture the unknown target domain, the uncertainty set is often overwhelmingly large, leading the learned model to make decisions with fairly low confidence in DRO.
\citet{liu2021stable,liu2022distributionally} focuses on the low confidence problem, and use a Wasserstein distance is employed to determine the uncertainty set. 
\citet{liu2022distributionally2} uses data geometry to construct more reasonable and effective uncertainty sets, while \citet{qiao2023topology} constructs the uncertainty using the data topology.
Our approach MODE tackles the low confidence problem by performing distribution exploration in a specific uncertainty subset (non-semantic factor) and uses Wasserstein distance \cite{sinha2017certifying, mehra2022certifying} to measure the distribution discrepancy in DG.

\section{Learning Setups}\label{ls}

Let $\mathcal{X}$ denote the feature space and  $\mathcal{Y}=\{1, \ldots, \mathrm{K}_{\mathcal{Y}}\}$  denote the label space. We consider the training domains $D_{X_{l} Y_{l}}$ ($l=1,...,N$), $N$ joint distributions defined over $\mathcal{X} \times \mathcal{Y} $, where $ X_{l}$  and  $Y_{l}$  are random variables whose outputs are from  $\mathcal{X} $ and  $\mathcal{Y}$, respectively. We also have a target domain $D_{X_{\mathrm{t}} Y_{\mathrm{t}}}$, a joint distribution defined over $\mathcal{X}\times \mathcal{Y}$, shares the same semantics with training domains $D_{X_{l} Y_{l}}$.

In this paper, we focus on domain generalization. The formal definition of domain generalization is given as follows.
\theoremstyle{definition}
\newtheorem{problem}{Problem}
\begin{problem}
(Domain Generalization). Let  $D_{X_{l} Y_{l}}$ ($l=1,...,N$) and $D_{X_{\mathrm{t}} Y_{\mathrm{t}}}$ be training domains and unseen target domain, respectively. Given sets of samples called the training data: for any $l=1,...,N$,
\begin{align*}
{TR}_{l}=\left\{\left(\mathbf{x}_{l}^{1}, y_{l}^{1}\right), \ldots,\left(\mathbf{x}_{l}^{n_{l}}, y_{l}^{n_{l}}\right)\right\} \sim D_{X_{l} Y_{l}}^{n_l},~ \text {i.i.d.},
\end{align*}
the aim of domain generalization is to train a classifier  $f$  by using the training data ${TR}_{l},l=1,...,N$ such that, for any test data  $\mathbf{x} \sim D_{X_{\mathrm{t}}}$, $f$ can classify  $\mathbf{x}$  into the correct class.
\end{problem}

\textbf{Causal Assumption.} Following prior works \cite{mitrovic2020representation,zhang2020causal,suter2019robustly,mahajan2021domain,zhang2021causaladv,lv2022causality,nguyen2022front,chen2022learning}, we \textit{assume} that the feature random variables are generated by the following causal mechanism. Let $\mathcal{S}$ and $\mathcal{C}$ be the non-semantic factor space and semantic factor space, respectively. There exists an causal mechanism $\mathbf{G}: \mathcal{S} \times \mathcal{C} \rightarrow \mathcal{X}$ such that,
\begin{equation}\label{Eq::CauMec}
X_{t} = \mathbf{G}(S_{t}, C)~ \text{and} ~X_{l} = \mathbf{G}(S_{l}, C),~ \forall{ \ l=1,...,N}
\end{equation}
where $S_{l}$ and $S_{t}$ are random variables defined over the non-semantic factor space $\mathcal{S}$, and $C$ is the random variable defined over the semantic factor space $\mathcal{C}$. In summary, Eq. \eqref{Eq::CauMec} means that the feature randoms $X$ share the same semantic $C$, but don't share the non-semantic factors $S$.

Generally, one hopes that the non-semantic random variable cannot affect the label random variable $Y$, which can be determined only by the semantic $C$. Therefore, following \citet{mitrovic2020representation,zhang2020causal,suter2019robustly,mahajan2021domain,lv2022causality,nguyen2022front}, we further assume that for any $l=1,...,N$,
\begin{equation}\label{Ass2}
\begin{split}
Y \leftarrow C~\text{and}~Y_{l} = Y_{t} = Y
.
%\\
%S_{l} \perp \!\!\! \perp C, S_{t} \perp \!\!\! \perp C.
\end{split}
\end{equation} 

\textbf{Uncertainty Set and Non-semantic Space.} 
To enhance the diversity of training domains and preserve the semantics among domains, the uncertainty set where we perform DRO is defined in the following:
\begin{equation} \label{omega}
{\Omega} = \{ S_{\boldsymbol{\alpha}} = \Psi(\boldsymbol{\alpha}, S_{1},\ldots,S_{N}) : \boldsymbol{\alpha}\in \mathcal{A}\},
\end{equation}
where $\mathcal{A}$ is a parametric space, $\Psi$ is a function that could generate random variables, $\Omega$ is a set of random variables defined over $\mathcal{S}$.

The non-semantic space w.r.t. $\Omega$ is defined in following:
\begin{equation}\label{non-semanticspace}
\mathcal{S}_{\Omega} = \bigcup_{S_{\boldsymbol{\alpha}}\in \Omega} {\rm supp} D_{S_{\boldsymbol{\alpha}}},
\end{equation}
where $D_{S_{\boldsymbol{\alpha}}}$ is the distribution w.r.t. random variable $S_{\boldsymbol{\alpha}}$.

\textbf{Model and Risks.}
Here we introduce some necessary concepts about models and risks. Denote $\mathrm{f}_{\mathbf{w}}: \mathcal{X} \rightarrow \mathbb{R}^{K}$  by the model depending on the parameters $\mathbf{w} \in \mathcal{W}$, where $\mathcal{W}$ is the parameter space. 
Given a loss $\ell$  w.r.t. training domain $D_{X_{l} Y_{l}}$, the training domain risk w.r.t. the model $\mathbf{f}_{\mathbf{w}}$ is
\begin{align}
R_{l}(\mathbf{w})=\mathbb{E} \ \ell\left(\mathbf{f}_{\mathbf{w}} ; X_l, Y_l \right)= \mathbb{E} \ \ell\left(\mathbf{f}_{\mathbf{w}}\circ \mathbf{G} ; S_l,C, Y_l\right),
\end{align}
and the corresponding empirical risk w.r.t.  $\mathbf{f}_{\mathbf{w}}$  is
\begin{align}
\widehat{R}_{l}(\mathbf{w})=\frac{1}{n_{l}} \sum_{i=1}^{n_{l}} \ell\left(\mathbf{f}_{\mathbf{w}} ; \mathbf{x}_{l}^{i}, y_{l}^{i}\right).
\end{align}
Lastly, the target domain risk w.r.t.  $\mathbf{f}_{\mathbf{w}}$  is defined as follows:
\begin{align}
R_{t}(\mathbf{w})=\mathbb{E} \ell\left(\mathbf{f}_{\mathbf{w}} ; X_t, Y_t \right) = \mathbb{E} \ \ell\left(\mathbf{f}_{\mathbf{w}}\circ \mathbf{G} ; S_t,C, Y_t\right).
\end{align}

\section{Learning Strategy}
In this section, we introduce our main motivation and develop a theoretical framework to support our insight and guide the algorithm design.

\subsection{Motivation}

To generalize well on the unknown target domain, it is one of the most effective strategies to generate new domains to enhance the performance of DNNs \cite{zhou2020learning,zhou2020deep,wang2021learning}. There is an underlying intuition that learning with many generated domains could make DNNs robust against domain shifts.
However, it remains challenging how to mitigate the distribution discrepancy between the generated domains and target domains. Accordingly, the generated domains may fail to promote generalizability or even cause performance degradation of DNNs. The reason is the invisibility of the target domain, which leads to an unmeasurable distribution discrepancy between the generated and the target domain \cite{liang2021pareto}.

Distributionally Robust Optimization (DRO) is a possible strategy to tackle distribution discrepancy \cite{sinha2017certifying, mehra2022certifying}. This is because DRO extends one distribution to a distribution space, i.e., uncertainty set, and trains models with the worst-case distribution in the uncertainty set. By ensuring uniformly well performance inside the uncertainty set around the training domains, DRO can enlarge the influence of the training domains and thus shrink the distribution discrepancy between training and test domains. However, employing DRO to DG has shown limited performance improvement in practice \cite{shen2021towards}. The failure of DRO may be related to the overwhelmingly large property of the employed uncertainty set. Such a large uncertainty set may introduce unrelated domains containing semantics inconsistently with training domains. Consequently, models trained over the set can make decisions with fairly low confidence, known as the low confidence issue \cite{hu2018does,frogner2019incorporating,shen2021towards}.

\subsection{Moderately Distributional Exploration}

To fully unleash the potential of DRO in DG, we propose moderately distributional exploration to perform distribution exploration MODE in an uncertainty \emph{subset}, which shares the same semantic factors with the training domains, avoiding the exploration in the direction of semantics. The insight lies in that merely exploring the semantically related subset can shrink the space of the uncertainty set, mitigating the low confidence issue.

The considered uncertainty subset is used for exploiting the worst-case distribution, i.e., performing DRO on the subset of non-semantic factor $S$, which can be captured as follows:
\begin{align}
\min_{\mathbf{w}\in \mathcal{W}} R_{\Omega}(\mathbf{w})=\min_{\mathbf{w}\in \mathcal{W}}\max_{{ S_{\boldsymbol{\alpha}} \in \Omega} }\mathbb{E} \ \ell\left(\mathbf{f}_{\mathbf{w}}\circ \mathbf{G} ; S_{\boldsymbol{\alpha}},C, Y\right).
\end{align}
In practical scenarios, it is challenging to exactly estimate a distribution under DG scenarios, resulting in a restricted searching capacity (more discussions are shown in Appendix \ref{Discussion}). Therefore, we propose to explore the non-semantic factor for each sample rather than for each domain:
\begin{equation}
\min_{\mathbf{w}\in \mathcal{W}} R_{\mathcal{S}_{\Omega}}(\mathbf{w}) = \min_{\mathbf{w}\in \mathcal{W}} \mathbb{E} \max_{\mathbf{s}\in \mathcal{S}_{\Omega}} \ell\left(\mathbf{f}_{\mathbf{w}}\circ \mathbf{G} ; \mathbf{s},C, Y\right),
\end{equation}
where $\mathcal{S}_{\Omega}$ stands for the non-semantic space introduced in Eq. \eqref{non-semanticspace} and $C$ represents the semantic random variable. The corresponding empirical risk $\widehat{R}_{\mathcal{S}_{\Omega}}(\mathbf{w})$ w.r.t. $R_{\mathcal{S}_{\Omega}}(\mathbf{w})$ is:
\begin{align}\label{empiricaltarget}
 \frac{1}{\sum_{l=1}^{N} n_{l}} \sum_{l=1}^{N} \sum_{i=1}^{n_{l}} \max_{\mathbf{s}\in \mathcal{S}_{\Omega} } \ell\left(\mathbf{f}_{\mathbf{w}}\circ \mathbf{G} ;\mathbf{s},\mathbf{c}_{l}^{i}, y_{l}^{i}\right),
\end{align}
where $\mathbf{c}_l^i$ is the element of the semantic part of $\mathbf{G}^{-1}(\mathbf{x}_l^i)$.
Besides the worst-case optimization, following previous works \cite{zhou2020deep,zhou2020learning,xu2021fourier}, we further introduce the empirical risk in our optimization. Namely, both the exploited and original data are used for model training with a parameter $\beta$ used for trading off the risks:
\begin{align}\label{cls}
\min_{\mathbf{w} \in \mathcal{W}} \widehat{R}^{\beta}_{\boldsymbol{\lambda}}(\mathbf{w}) = (1-\beta ) \sum_{l=1}^N \lambda_l \widehat{R}_{l}(\mathbf{w})+\beta \widehat{R}_{\mathcal{S}_{\Omega}}(\mathbf{w}),
\end{align}
where $\boldsymbol{\lambda}=[\lambda_1,...,\lambda_N]\in \Delta_N$ are fixed weights.

\subsection{Theoretical Insights of MODE}
Here, we give a learning theory to provide theoretical support for our proposed learning strategy. The main conclusions are summarized as follows:

$\bullet$ Theorem \ref{thm1} shows that the empirical model given by Eq. \eqref{cls} can achieve consistent learning performance.

$\bullet$ Theorem \ref{thm4} shows the risk estimation for the unknown target domain w.r.t. the empirical model given by Eq. \eqref{cls}.

Before giving detailed theoretical results, we introduce several necessary concepts.  Specifically, we use notation ${R}^{\beta}_{\boldsymbol{\lambda}}(\mathbf{w})$ to represent the ideal form of $\widehat{R}^{\beta}_{\boldsymbol{\lambda}}(\mathbf{w}) $ in Eq. \eqref{cls}:
\begin{align}\label{clsif}
R^{\beta}_{\boldsymbol{\lambda}}(\mathbf{w}) =  (1-\beta ) \sum_{l=1}^N \lambda_l R_{l}(\mathbf{w})+\beta R_{\mathcal{S}_{\Omega}}(\mathbf{w}).
\end{align}

To measure the distribution discrepancy between the two domains, we use Optimal Transport Cost \cite{sinha2017certifying, mehra2022certifying} defined as follows: 
\newtheorem{defin}{Definition}
\begin{defin}\label{def1}
(Optimal Transport Cost and Wasserstein-1 Distance \cite{villani2009optimal,villani2021topics}). Given a cost function  $c$  :  $\mathcal{Z} \times \mathcal{Z} \rightarrow \mathbb{R}_{+}$ , the \textit{Optimal Transport Cost} w.r.t.  $c$  between two probability distances  $D$  and  $D^{\prime}$  is defined as:
\begin{align*} 
\mathrm{W}_{c}\left(D, D^{\prime}\right)=\inf _{\pi \in \Pi\left(D, D^{\prime}\right)} \mathbb{E}_{\left(\mathbf{x}, \mathbf{x}^{\prime}\right) \sim \pi} c\left(\mathbf{x}, \mathbf{x}^{\prime}\right),
\end{align*}
where $\Pi\left(D, D^{\prime}\right)$ is the space of all couplings for $D$ and $D^{\prime}$. Furthermore, if the cost $c$ is a \textit{metric}, then the \textit{Optimal Transport Cost} is also called the \textit{Wasserstein-1} distance.
\end{defin}

Similar to \citet{sinha2017certifying}, our results rely on the usual covering number \cite{vershynin2018high} for the model classes  $\mathcal{F}=\left\{\ell\left(\mathbf{f}_{\mathbf{w}} ; \cdot\right): \mathbf{w} \in \mathcal{W}\right\} $ to represent the complexity. Intuitively the covering numbers  $\mathcal{N}\left(\mathcal{F}, \epsilon, L^{\infty}\right)$ is the minimal number of  $L^{\infty}$  balls of radius  $\epsilon>0$  needed to cover the model classes $ \mathcal{F}$, respectively. The rigorous definition on covering number is given in the Appendix \ref{cn}.

We first show that our approach can achieve consistent learning performance under mild assumptions.
\theoremstyle{plain}
\newtheorem{lemma}{Lemma}
\newtheorem{thm}{\bf Theorem}
\newtheorem{coro}{\bf Corollary}
\begin{thm}\label{thm1}
(Excess Generalization Bound). Assume that
$\bullet$ $0 \leq \ell (\mathbf{f_{w}} ;\mathbf{x}, y) \leq M_{\ell}<+\infty$,

$\bullet$  $S_{1}, S_{2},...,S_{l}$ are mutually independent,
%$\bullet$  $c(\cdot, \cdot): \mathcal{X} \times \mathcal{X} \rightarrow \mathbb{R}_{+}$ is a continuous metric,

$\bullet$  $S_{l} \perp \!\!\! \perp C$ and $Y_{l} = Y_{t} = Y$,$\forall l=1,....,N$.

Let $\widehat{\mathbf{w}}$ be the solution of Eq. \eqref{cls}, i.e.,
\begin{align*}
\centering
\widehat{\mathbf{w}} \in \underset{\mathbf{w} \in \mathcal{W}}{\arg \min }~ \widehat{R}^{\beta }_{\boldsymbol{\lambda}}(\mathbf{w}).
\end{align*}
With the probability at least  $1-4 e^{-t}>0$ ,% for any  $s^{\prime}  \in \Omega$ ,
\begin{align}\label{thm1bound}
\begin{split}
 \space  R^{\beta }_{\boldsymbol{\lambda}}(\widehat{\mathbf{w}})-\min _{\mathbf{w} \in \mathcal{W}} R^{\beta }_{\boldsymbol{\lambda}}(\mathbf{w} ) \leq   \epsilon_{\boldsymbol{\lambda}}^{\beta}(n_1,...,n_N; t),
\end{split}
\end{align}
where $\epsilon_{\boldsymbol{\lambda}}^{\beta}(n_1,...,n_N; t)$ is equal to
\begin{align*}
\begin{split}
&(1-\beta) \sum_{l=1}^N\frac{b_{0} M_{\ell}\lambda_l }{\sqrt{n_l}} \int_{0}^{1} \sqrt{\log \mathcal{N}\left(\mathcal{F}, M_{\ell} \epsilon, L^{\infty}\right)} d \epsilon \\
+&\beta \frac{b_{1} M_{\ell}}{\sqrt{\sum_{l=1}^N n_l}} \int_{0}^{1} \sqrt{\log \mathcal{N}\left(\mathcal{F}, M_{\ell} \epsilon, L^{\infty}\right)} \mathrm{d} \epsilon \\
+&2(1-\beta)\sum_{l=1}^N \lambda_{l} M_{\ell} \sqrt{\frac{2 t}{n_l}}+2\beta M_{\ell} \sqrt{\frac{2 t}{\sum_{l=1}^N n_{l}}},
\end{split}
\end{align*}
here  $b_{0}$ and $b_{1}$  are uniform constants.
\end{thm}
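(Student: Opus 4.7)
The plan is to follow the textbook route for excess risk bounds: reduce the excess risk to a uniform deviation of $\widehat{R}^{\beta}_{\boldsymbol{\lambda}}$ around $R^{\beta}_{\boldsymbol{\lambda}}$, split that deviation according to the convex combination in Eq.~\eqref{cls}, and bound each piece via symmetrization, Dudley's entropy integral, and a bounded-difference concentration inequality. Concretely, writing $\mathbf{w}^{\ast}\in\arg\min_{\mathbf{w}\in\mathcal{W}} R^{\beta}_{\boldsymbol{\lambda}}(\mathbf{w})$ and using the optimality of $\widehat{\mathbf{w}}$ for $\widehat{R}^{\beta}_{\boldsymbol{\lambda}}$, one has
\begin{align*}
R^{\beta}_{\boldsymbol{\lambda}}(\widehat{\mathbf{w}}) - R^{\beta}_{\boldsymbol{\lambda}}(\mathbf{w}^{\ast}) \leq \sup_{\mathbf{w}\in\mathcal{W}}\bigl(R^{\beta}_{\boldsymbol{\lambda}}(\mathbf{w}) - \widehat{R}^{\beta}_{\boldsymbol{\lambda}}(\mathbf{w})\bigr) + \bigl|\widehat{R}^{\beta}_{\boldsymbol{\lambda}}(\mathbf{w}^{\ast}) - R^{\beta}_{\boldsymbol{\lambda}}(\mathbf{w}^{\ast})\bigr|.
\end{align*}
By the triangle inequality the first term splits into $(1-\beta)\sum_{l}\lambda_{l}\sup_{\mathbf{w}}|R_{l}(\mathbf{w})-\widehat{R}_{l}(\mathbf{w})|$ plus $\beta\sup_{\mathbf{w}}|R_{\mathcal{S}_{\Omega}}(\mathbf{w})-\widehat{R}_{\mathcal{S}_{\Omega}}(\mathbf{w})|$, and a similar split holds pointwise at $\mathbf{w}^{\ast}$. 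The two uniform pieces will produce the Dudley-integral lines of Eq.~\eqref{thm1bound}, and McDiarmid concentration will produce the $\sqrt{2t/\cdot}$ terms, with a union bound over the four resulting deviation events accounting for the $4 e^{-t}$.

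For the per-domain term $\sup_{\mathbf{w}}|R_{l}(\mathbf{w})-\widehat{R}_{l}(\mathbf{w})|$, I would apply the classical symmetrization trick to introduce Rademacher variables over the $n_{l}$ i.i.d.\ samples of domain $l$, and then Dudley's chaining inequality on the loss class $\mathcal{F}=\{\ell(\mathbf{f}_{\mathbf{w}};\cdot):\mathbf{w}\in\mathcal{W}\}$ to convert the expected Rademacher complexity into $\tfrac{b_{0}M_{\ell}}{\sqrt{n_{l}}}\int_{0}^{1}\sqrt{\log\mathcal{N}(\mathcal{F},M_{\ell}\epsilon,L^{\infty})}\,d\epsilon$. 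Because the loss is bounded by $M_{\ell}$, McDiarmid's inequality controls the fluctuation of the supremum around its mean at scale $M_{\ell}\sqrt{2t/n_{l}}$, and weighting by $\lambda_{l}$ and summing in $l$ produces the first line of the bound. These are textbook steps, essentially following the template of \citet{sinha2017certifying}.

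The substantive part, which I expect to be the main obstacle, is the MODE term $\sup_{\mathbf{w}}|R_{\mathcal{S}_{\Omega}}(\mathbf{w}) - \widehat{R}_{\mathcal{S}_{\Omega}}(\mathbf{w})|$. Two points require care. First, the empirical risk in Eq.~\eqref{empiricaltarget} is an average over all $\sum_{l}n_{l}$ training points rather than per-domain averages: under the causal assumptions ($Y_{l}=Y_{t}=Y$, $S_{l}\perp\!\!\!\perp C$, and mutual independence of the $S_{l}$), the pairs $(\mathbf{c}_{l}^{i},y_{l}^{i})$ extracted from $\mathbf{G}^{-1}(\mathbf{x}_{l}^{i})$ all have the same joint law as $(C,Y)$ and are mutually independent across $(l,i)$, so they may be treated as $\sum_{l}n_{l}$ i.i.d.\ draws; this is what produces the $1/\sqrt{\sum_{l}n_{l}}$ scaling in the second line of Eq.~\eqref{thm1bound}. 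Second, one must control the complexity of the supremum class $\widetilde{\mathcal{F}}=\{(c,y)\mapsto\max_{\mathbf{s}\in\mathcal{S}_{\Omega}}\ell(\mathbf{f}_{\mathbf{w}}\circ\mathbf{G};\mathbf{s},c,y):\mathbf{w}\in\mathcal{W}\}$. Since taking a supremum is $1$-Lipschitz in the sup-norm, we get the key inequality $\mathcal{N}(\widetilde{\mathcal{F}},\epsilon,L^{\infty})\leq\mathcal{N}(\mathcal{F},\epsilon,L^{\infty})$, and then the same symmetrization-Dudley-McDiarmid pipeline, now applied to $\widetilde{\mathcal{F}}$ over the pooled sample of size $\sum_{l} n_{l}$, yields the second line. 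Combining the two bounds via the union bound above delivers Eq.~\eqref{thm1bound}.
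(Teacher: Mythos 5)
Your proposal follows essentially the same route as the paper: the identical three-term decomposition using $\widehat{R}^{\beta}_{\boldsymbol{\lambda}}(\widehat{\mathbf{w}})-\widehat{R}^{\beta}_{\boldsymbol{\lambda}}(\mathbf{w}^{*})\leq 0$, a split into the $(1-\beta)$-weighted per-domain deviations and the $\beta$-weighted MODE deviation, Dudley-plus-McDiarmid for the uniform parts (the paper's Lemmas~\ref{lemma1} and~\ref{lemma4}) and one-sided McDiarmid at $\mathbf{w}^{*}$ (Lemmas~\ref{lemma2} and~\ref{lemma5}), and the same union-bound accounting for $1-4e^{-t}$. In fact you supply two details the paper leaves implicit in its ``similar to Lemma~\ref{lemma1}'' step --- that the pooled $(\mathbf{c}_l^i,y_l^i)$ are i.i.d.\ under the causal assumptions, and that $\mathcal{N}(\widetilde{\mathcal{F}},\epsilon,L^{\infty})\leq\mathcal{N}(\mathcal{F},\epsilon,L^{\infty})$ because the max over $\mathcal{S}_{\Omega}$ is $1$-Lipschitz in the sup-norm --- so the proposal is correct and, if anything, more complete.
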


Under proper conditions, one can show that the bound (Eq. \eqref{thm1bound}) can attained $\tilde{\mathcal{O}}( \sum_{l=1}^N \frac{\lambda_l}{\sqrt{n_l}}) +\tilde{\mathcal{O}}(  \frac{1}{\sqrt{\sum_{l=1}^N n_l}})$, i.e.,
\begin{align*}
\begin{split}
R^{\beta }_{\boldsymbol{\lambda}}(\widehat{\mathbf{w}})-\min _{\mathbf{w} \in \mathcal{W}} R^{\beta }_{\boldsymbol{\lambda}}(\mathbf{w} ) \leq \tilde{\mathcal{O}}( \sum_{l=1}^N \frac{\lambda_l}{\sqrt{n_l}}) +\tilde{\mathcal{O}}(  \frac{1}{\sqrt{\sum_{l=1}^N n_l}}).
\end{split}
\end{align*}
Corollary \ref{thm2} in Appendix \ref{coro1} gives an example supporting this claim. Next, the following theorem gives a learning bound to estimate the unknown target domain risk. 

\begin{thm}\label{thm4}
(Risk Estimation).
Given the same conditions in Theorem \ref{thm1} and let $\widehat{\mathbf{w}}$ be the solution of Eq. \eqref{cls}, i.e.,
\begin{align*}
\centering
\widehat{\mathbf{w}} \in \underset{\mathbf{w} \in \mathcal{W}}{\arg \min }~ \widehat{R}^{\beta }_{\boldsymbol{\lambda}}(\mathbf{w}).
\end{align*}
If the cost function $c(\cdot,\cdot): \mathcal{S}\times \mathcal{S}\rightarrow \mathbb{R}_{+}$ is a continuous metric and
 $\ell(\mathbf{f}_{\mathbf{w}}\circ \mathbf{G}; \mathbf{s},\mathbf{c},y)$ is  $L_{c}$-Lipschitz w.r.t. $c$, i.e., 
\begin{equation*}
|\ell(\mathbf{f}_{\mathbf{w}}\circ \mathbf{G}; \mathbf{s},\mathbf{c},y)-\ell(\mathbf{f}_{\mathbf{w}}\circ \mathbf{G}; \mathbf{s}',\mathbf{c},y)|\leq L_{c}c(\mathbf{s},\mathbf{s}'),
\end{equation*} 
then with the probability at least  $1-4 e^{-t}>0$ ,% for any  $s^{\prime}  \in \Omega$ ,
\begin{align}\label{thm4bound}
\begin{split}
 \space & R_{t}(\widehat{\mathbf{w}})-\min _{\mathbf{w} \in \mathcal{W}} R^{\beta }_{\boldsymbol{\lambda}}(\mathbf{w} ) \\
\leq &  (1-\beta) L_{c} \sum_{l=1}^{N} \lambda_l \mathrm{W}_{c}\left(D_{S_{\mathrm{t}}}, D_{S_{l}}\right)\\
 +&\beta L_{c} \min_{S_{\boldsymbol{\alpha}} \in \Omega} \mathrm{W}_{c}\left(D_{S_{\mathrm{t}}}, D_{S_{\boldsymbol{\alpha}}}\right)+ \epsilon_{\boldsymbol{\lambda}}^{\beta}(n_1,...,n_N; t),
\end{split}
\end{align}
where $\epsilon_{\boldsymbol{\lambda}}^{\beta}(n_1,...,n_N; t)$ is introduced in Theorem \ref{thm1}.
\end{thm}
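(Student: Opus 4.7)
}
The plan is to write the excess target risk as a telescoping sum, isolate the empirical/statistical part so that Theorem \ref{thm1} applies directly, and control the remaining gap by the Wasserstein terms via Kantorovich--Rubinstein duality together with a simple ``expectation of a supremum dominates expectation'' argument. Concretely, I would start from the decomposition
\begin{align*}
R_{t}(\widehat{\mathbf{w}})-\min_{\mathbf{w}\in\mathcal{W}} R^{\beta}_{\boldsymbol{\lambda}}(\mathbf{w})
= \bigl[R_{t}(\widehat{\mathbf{w}})-R^{\beta}_{\boldsymbol{\lambda}}(\widehat{\mathbf{w}})\bigr]
+ \bigl[R^{\beta}_{\boldsymbol{\lambda}}(\widehat{\mathbf{w}})-\min_{\mathbf{w}\in\mathcal{W}}R^{\beta}_{\boldsymbol{\lambda}}(\mathbf{w})\bigr],
\end{align*}
and immediately invoke Theorem \ref{thm1} to bound the second bracket by $\epsilon_{\boldsymbol{\lambda}}^{\beta}(n_1,\dots,n_N;t)$ with probability at least $1-4e^{-t}$. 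It then remains to control the first bracket deterministically, uniformly in $\mathbf{w}$.

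For the first bracket, I would use $\sum_l \lambda_l =1$ to write $R_t(\mathbf{w})=(1-\beta)\sum_l \lambda_l R_t(\mathbf{w})+\beta R_t(\mathbf{w})$ and obtain
\begin{align*}
R_t(\mathbf{w})-R^{\beta}_{\boldsymbol{\lambda}}(\mathbf{w})
=(1-\beta)\sum_{l=1}^N \lambda_l \bigl[R_t(\mathbf{w})-R_l(\mathbf{w})\bigr]
+\beta\bigl[R_t(\mathbf{w})-R_{\mathcal{S}_\Omega}(\mathbf{w})\bigr].
\end{align*}
For each source term, I would condition on $(C,Y)$ and use $S_l \perp\!\!\!\perp C$ together with $Y_l=Y_t=Y$, so that $s\mapsto \ell(\mathbf{f}_{\mathbf{w}}\circ \mathbf{G};s,C,Y)$ is an $L_c$-Lipschitz function with respect to the metric $c$. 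The Kantorovich--Rubinstein duality for the Wasserstein-1 distance (Definition \ref{def1}) then yields
\begin{align*}
\bigl|R_t(\mathbf{w})-R_l(\mathbf{w})\bigr|
= \Bigl|\mathbb{E}_{C,Y}\bigl[\mathbb{E}_{S_t}\ell-\mathbb{E}_{S_l}\ell\bigr]\Bigr|
\leq L_c\, \mathrm{W}_c(D_{S_t},D_{S_l}).
\end{align*}

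The slightly less routine step is the $R_{\mathcal{S}_\Omega}$ term, which will be the main technical obstacle. I would first observe that for any $S_{\boldsymbol{\alpha}}\in\Omega$, the support $\mathrm{supp}\,D_{S_{\boldsymbol{\alpha}}}\subseteq \mathcal{S}_\Omega$ by \eqref{non-semanticspace}, so pointwise in $(c,y)$,
\begin{align*}
\max_{\mathbf{s}\in\mathcal{S}_\Omega}\ell(\mathbf{f}_{\mathbf{w}}\circ \mathbf{G};\mathbf{s},c,y)
\ \geq\ \mathbb{E}_{S_{\boldsymbol{\alpha}}}\,\ell(\mathbf{f}_{\mathbf{w}}\circ \mathbf{G};S_{\boldsymbol{\alpha}},c,y),
\end{align*}
using here that $S_{\boldsymbol{\alpha}}$ is independent of $(C,Y)$ (same structural assumption as for $S_l$). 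Taking expectation over $(C,Y)$ gives $R_{\mathcal{S}_\Omega}(\mathbf{w})\geq R_{S_{\boldsymbol{\alpha}}}(\mathbf{w})$ for every $S_{\boldsymbol{\alpha}}\in\Omega$, where $R_{S_{\boldsymbol{\alpha}}}(\mathbf{w}):=\mathbb{E}\,\ell(\mathbf{f}_{\mathbf{w}}\circ\mathbf{G};S_{\boldsymbol{\alpha}},C,Y)$. Applying the Lipschitz/Wasserstein estimate to $R_t(\mathbf{w})-R_{S_{\boldsymbol{\alpha}}}(\mathbf{w})$ and then optimizing over $S_{\boldsymbol{\alpha}}\in\Omega$ yields
\begin{align*}
R_t(\mathbf{w})-R_{\mathcal{S}_\Omega}(\mathbf{w})
\leq \min_{S_{\boldsymbol{\alpha}}\in\Omega}\bigl[R_t(\mathbf{w})-R_{S_{\boldsymbol{\alpha}}}(\mathbf{w})\bigr]
\leq L_c \min_{S_{\boldsymbol{\alpha}}\in\Omega} \mathrm{W}_c(D_{S_t},D_{S_{\boldsymbol{\alpha}}}).
\end{align*}

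Plugging the two Wasserstein bounds into the decomposition and adding the statistical error $\epsilon_{\boldsymbol{\lambda}}^{\beta}(n_1,\dots,n_N;t)$ from Theorem \ref{thm1} delivers \eqref{thm4bound}. The only genuinely delicate points are (i) justifying the independence needed to swap the conditional expectation with the Wasserstein inequality, which relies crucially on $S_{\boldsymbol{\alpha}}\perp\!\!\!\perp C$ inherited from $S_l\perp\!\!\!\perp C$ together with the parametric construction \eqref{omega}, and (ii) verifying that the pointwise Lipschitz hypothesis on $\ell$ transfers to an integrated Lipschitz bound uniformly in $\mathbf{w}$, which is immediate by Fubini once the conditioning is in place.
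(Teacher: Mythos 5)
Your proposal is correct and follows essentially the same route as the paper's proof: the same two-bracket decomposition (Theorem \ref{thm1} for the statistical part), the same split of $R_t - R^{\beta}_{\boldsymbol{\lambda}}$ into source-domain and uncertainty-set terms, the same Lipschitz-to-Wasserstein bound for $R_t - R_l$, and the same observation that $R_{\mathcal{S}_\Omega}(\mathbf{w}) \geq \mathbb{E}\,\ell(\mathbf{f}_{\mathbf{w}}\circ\mathbf{G};S_{\boldsymbol{\alpha}},C,Y)$ for every $S_{\boldsymbol{\alpha}}\in\Omega$ followed by optimizing over $\boldsymbol{\alpha}$. If anything, you make explicit two steps the paper leaves implicit (the support argument justifying $R_\Omega \leq R_{\mathcal{S}_\Omega}$, and the conditioning on $(C,Y)$ needed to reduce to a Wasserstein distance between the marginals of $S$), but the argument is the same.
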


\textbf{The Trade-off of $\Omega$.}
From Theorem \ref{thm4}, we can see that the distribution discrepancy between the target distribution and distributions in distribution searching space can hurt the network's generalization ability due to the term $\beta L_{c} \min_{S_{\boldsymbol{\alpha}} \in \Omega} \mathrm{W}_{c}\left(D_{S_{\mathrm{t}}}, D_{S_{\boldsymbol{\alpha}}}\right)$.
When $\Omega$ is large enough to include $D_{S_{\mathrm{t}}}$, this term becomes $0$.
Although a larger $\Omega$ will decrease this term, the approximate risk $\min _{\mathbf{w} \in \mathcal{W}}R^{\beta}_{\boldsymbol{\lambda}}(\mathbf{w})$ will be increased, which means that there is a trade-off between these two terms about the choice of $\Omega$.
% $\beta L_{c} \min_{S_{\boldsymbol{\alpha}} \in \Omega} \mathrm{W}_{c}\left(D_{S_{\mathrm{t}}}, D_{S_{\boldsymbol{\alpha}}}\right)$ and $\min _{\mathbf{w} \in \mathcal{W}}R^{\beta}_{\boldsymbol{\lambda}}(\mathbf{w})$.

\textbf{The Trade-off of $\beta$.}
It can be observed that the increase of $\beta$ leads to the decrease of $(1-\beta) L_{c} \sum_{l=1}^{N} \lambda_l \mathrm{W}_{c}\left(D_{S_{\mathrm{t}}}, D_{S_{l}}\right)+\beta L_{c} \min_{S_{\boldsymbol{\alpha}} \in \Omega} \mathrm{W}_{c}\left(D_{S_{\mathrm{t}}}, D_{S_{\boldsymbol{\alpha}}}\right)$. 
% ($\min_{S_{\boldsymbol{\alpha}} \in \Omega} \mathrm{W}_{c}\left(D_{S_{\mathrm{t}}}, D_{S_{\boldsymbol{\alpha}}}\right) \leq \sum_{l=1}^{N} \lambda_l \mathrm{W}_{c}\left(D_{S_{\mathrm{T}}}, D_{S_{l}}\right)$).
But $\beta$ also determines the value of $\min _{\mathbf{w} \in \mathcal{W}}R^{\beta}_{\boldsymbol{\lambda}}(\mathbf{w})$ and $\epsilon_{\boldsymbol{\lambda}}^{\beta}(n_1,...,n_N; t)$, leading to the trade-off of the choice of $\beta$ in practice.

\section{Realization of MODE}
\label{F}
Motivated by our theoretical insights, we propose a realization of MODE by using some existing style transfer approaches, which will be introduced below \footnote{Code: \url{github.com/Rxsw/MODE}.}.

$\bullet$ \textbf{Loss Functions.}
Following \citet{li2017deeper,xu2021fourier}, we use the cross entropy loss as $\ell$.

$\bullet$ \textbf{Algorithm Design.}
The key in algorithm design is the implementation of the causal mechanism $\mathbf{G}$.
In practice, we use Fourier-based transfer and AdaIN-based transfer to construct $\mathbf{G}$ in our algorithm. 

Other style transfer methods not introduced in this paper can also be applied to our approach in the same way.

\textbf{MODE-F: Fourier-based MODE.}
The Fourier-based transfer \cite{xu2021fourier} is considered able to separate the stylistic information from the semantic information by using the discrete Fourier transform to decompose the data $\mathbf{x}$ into its amplitude $\mathcal{A}\left(\mathbf{x}\right)$ and phase $\mathcal{P}\left(\mathbf{x}\right)$.
It is believed that the phase information contains more high-level semantics and is not easily
affected by domain shifts \cite{oppenheim1979phase,oppenheim1981importance}, which makes it possible to create samples of different styles by mixing amplitudes. 

In practice, we treat the amplitude $\mathcal{A}\left(\mathbf{x}\right)$ as the non-semantic factor $S$ and treat the phase $\mathcal{P}\left(\mathbf{x}\right)$ as the semantic factor $C$. Following our approach, we explore $\mathcal{A}\left(\mathbf{x}\right)$ corresponding to the worst-case generated data fixing $\mathcal{P}\left(\mathbf{x}\right)$.

Since Fourier-based transfer creates a new sample by mixing amplitudes and maintaining the original phase, we have:
\begin{align}
\begin{split}
\hat{\mathcal{A}}_{\gamma}\left(\boldsymbol{\alpha},\mathbf{x}\right) & = \gamma[\alpha_{0}\mathcal{A}\left(\mathbf{x}\right) +\sum_{l=1}^{M} \alpha_{l} \mathcal{A}\left(\mathbf{x}_{l}\right)]\\ &+(1-\gamma)\mathcal{A}\left(\mathbf{x}\right), \end{split}\\
\mathbf{G}_{\gamma}( \boldsymbol{\alpha},\mathbf{x}) & =  \operatorname{iFFT} \left[\hat{\mathcal{A}}_{\gamma}\left(\boldsymbol{\alpha},\mathbf{x}\right) * e^{-j * \mathcal{P}\left(\mathbf{x}\right)}\right], 
\end{align}
where $\mathcal{A}\left(\mathbf{x}_{l}\right),l=1,\cdots,M$ are the amplitudes of $M$ other images and $\boldsymbol{\alpha}\in \Delta_{M+1}$.
% , $\mathcal{A}\left(\mathbf{x}_{0}\right) = \mathcal{A}\left(\mathbf{x}\right)$, which means the amplitude of the original image $\mathbf{x}$ can also be weighted by $\alpha_{0}$.
So that we could control the direction of stylization by changing $\boldsymbol{\alpha}=[\alpha_{0},\alpha_{1},\cdots,\alpha_{M}]$ and control the degree of stylization by changing $\gamma$. More details of Fourier-based MODE are shown in Appendix \ref{fb}.

\textbf{MODE-A: AdaIN-based MODE.}
AdaIN \cite{huang2017arbitrary} is one of the representative methods of neural style transfer. It uses the mean $\boldsymbol{\mu}$ and std $\boldsymbol{\sigma}$ of feature map output by the fixed encoder $\operatorname{E}$ to control style information and trains a decoder $\operatorname{D}$ to restore the stylized image from the feature map whose mean and std had been changed. 

In practice, we treat these mean $\boldsymbol{\mu}$ and std $\boldsymbol{\sigma}$ as the non-semantic factor $S$ and treat the normalized feature map as the semantic factor $C$. Following our approach, by fixing the normalized feature map, we explore $\boldsymbol{\mu}$ and $\boldsymbol{\sigma}$ which corresponds to the worst-case generated data.

Since AdaIN-based transfer creates a new sample by mixing mean-std and maintaining the original normalized feature map, we first calculate mixed mean and mixed std:
\begin{align}
\tilde{\boldsymbol{\mu}}(\boldsymbol{\alpha},\mathbf{x}) & = \alpha_{0} \boldsymbol{\mu}(\operatorname{E}(\mathbf{x})) +\sum_{l=1}^{M}\alpha_{l} \boldsymbol{\mu}(\operatorname{E}(\mathbf{x}_{l})),\\
\tilde{\boldsymbol{\sigma}}(\boldsymbol{\alpha},\mathbf{x}) & = \alpha_{0} \boldsymbol{\sigma}(\operatorname{E}(\mathbf{x})) +\sum_{l=1}^{M}\alpha_{l} \boldsymbol{\sigma}(\operatorname{E}(\mathbf{x}_{l})),
\end{align}
and then apply mixed mean and mixed std to the normalized feature map and restore image by decoder $\operatorname{D}$: 
\begin{align}
\tilde{\mathbf{Z}}(\boldsymbol{\alpha},\mathbf{x})  &= \tilde{\boldsymbol{\mu}}(\boldsymbol{\alpha},\mathbf{x})  +\tilde{\boldsymbol{\sigma}}(\boldsymbol{\alpha},\mathbf{x}) \frac{\operatorname{E}(\mathbf{x})-\boldsymbol{\mu}(\operatorname{E}(\mathbf{x}))}{\boldsymbol{\sigma}(\operatorname{E}(\mathbf{x}))},\\
\mathbf{G}_{\gamma}( \boldsymbol{\alpha},\mathbf{x}) & = \operatorname{D}(\gamma\tilde{\mathbf{Z}}(\boldsymbol{\alpha},\mathbf{x})+(1-\gamma)\operatorname{E}(\mathbf{x})),
\end{align}
where $\operatorname{E}(\mathbf{x}_{l}),l=1,\cdots,M$ are the feature map of $M$ other images and $\boldsymbol{\alpha}\in \Delta_{M+1}$.
So that we could control the direction of stylization by changing $\boldsymbol{\alpha}=[\alpha_{0},\alpha_{1},\cdots,\alpha_{M}]$ and control the degree of stylization by changing $\gamma$. More details of AdaIN-based MODE are shown in Appendix \ref{ab}.

\textbf{The Non-semantic Space $\mathcal{S}_{\Omega}$.}
In each iteration, the non-semantic space $\mathcal{S}_{\Omega}$ in Eq. \eqref{empiricaltarget} is defined as:
\begin{equation}
{\mathcal{S}_{\Omega}} = \{ s_{\boldsymbol{\alpha}} = \sum_{l=0}^M \alpha_l s_{l} : \boldsymbol{\alpha}=[\alpha_0,...,\alpha_M]\in \Delta_{M+1}\},
\end{equation}
where $s_{l}$ is $\mathcal{A}\left(\mathbf{x}_l\right)$ in MODE-F and $(\boldsymbol{\mu},\boldsymbol{\sigma})_l$ in MODE-A.

\textbf{Update $\boldsymbol{\alpha}$.} In each iteration, exploring the optimal $\boldsymbol{\alpha}$ requires multiple inner steps, leading the training time to increase exponentially. To speed up the exploring process, inspired by methods of adversarial attacks \cite{szegedy2013intriguing,goodfellow2014explaining,madry2017towards,zhang2020principal}, we use the gradient's direction and fixed step size $\mu$ to update $\boldsymbol{\alpha}$ every time after generating augmented data $\hat{\mathbf{x}}$ using $\mathbf{G}_{\gamma}$:
\begin{align}
\label{updatea22}
\boldsymbol{\alpha}^{k}  = \operatorname{Normalize}(\boldsymbol{\alpha}^{k-1}+\mu \operatorname{sign}(\nabla_{\boldsymbol{\alpha}} \ell(\mathbf{f_{w}};\hat{\mathbf{x}}^{k-1},y)) ),
% \\\alpha^{k}_{l} & = \tilde{\alpha}^{k}_{l} / \sum_{i = 0}^{N} \tilde{\alpha}^{k}_{i} \label{norm2}
\end{align}  
where $\hat{\mathbf{x}}^{k-1} =\mathbf{G}_{\gamma}( \boldsymbol{\alpha}^{k-1},\mathbf{x})$ and $\boldsymbol{\alpha}\in \Delta_{M+1}$.
% $\boldsymbol{\alpha}\in \Delta_{M+1}$ should be normalized after each update.

\begin{algorithm}[!t]
% \vspace{2mm}
   \caption{\textbf{MODE}}
   \label{alg:DR}
\begin{algorithmic}
   \STATE {\bfseries Input:} training set, batch size $n$, number of inner steps $K$, number of style provider $M$, step size $\mu$, model architecture parametrized by $\mathbf{w}$, hyperparameter $\beta$ and $\gamma$, the causal mechanism $\mathbf{G}_{\gamma}$
   \STATE {\bfseries Output:} Robust model $\mathbf{f_{w}}$
   \STATE Randomly initialize model $\mathbf{f_{w}}$, or initialize model with pre-trained configuration
   \REPEAT
   \STATE Read a mini-batch $\boldsymbol{x} = [\mathbf{x}_{1}, ..., \mathbf{x}_{n}]$, $\boldsymbol{y} = [y_{1}, ..., y_{n}]$ from the training set
   % \STATE Calculate their feature map $[\mathbf{z}_{1}=E(\mathbf{x}_{1}),\cdots, \mathbf{z}_{m}=E(\mathbf{x}_{m})]$
   % \STATE Calculate their mean and std of feature map $[\boldsymbol{\mu}(\mathbf{z}_{1}),\cdots, \boldsymbol{\mu}(\mathbf{z}_{m})]$, $[\boldsymbol{\sigma}(\mathbf{z}_{1}),\cdots, \boldsymbol{\sigma}(\mathbf{z}_{m})]$ using Equ $\ref{mean},\ref{std}$
   \STATE \textit{\textcolor{blue}{\# Maximization: Exploration}}
   \FOR{$i=1$ {\bfseries to} $n$ (\textit{in parallel})} 
   % \STATE Randomly select $M-1$ other image's mean and std as style providers $[\mu_{1}, ..., \mu_{M-1}]$, $[\sigma_{1}, ..., \sigma_{M-1}]$
   \STATE Initialize $\alpha_{0},\alpha_{1},\cdots,\alpha_{M}$ as $\boldsymbol{\alpha}^{0}_{i}$
   \STATE Initialize $\hat{\mathbf{x}}_{i}^{0} \leftarrow \mathbf{G}_{\gamma}( \boldsymbol{\alpha}^{0}_{i},\mathbf{x}_{i})$
   \FOR{$k=1$ {\bfseries to} $K$}
   \STATE \textit{\textcolor{blue}{\# Inner Step}}
   \STATE $\tilde{\boldsymbol{\alpha}}^{k}_{i} \leftarrow $ Eq.\eqref{updatea22}
   % \STATE $\tilde{\boldsymbol{\alpha}}^{k}_{i} \leftarrow \boldsymbol{\alpha}^{k-1}_{i}+\boldsymbol{\mu} \operatorname{sign}\left(\nabla_{\boldsymbol{\alpha}} \ell\left(\mathbf{f_{w}} ; \hat{\mathbf{x}}^{k-1}_{i}, y_{i}\right)\right)$
   % \STATE $\boldsymbol{\alpha}^{k}_{i} \leftarrow normalize(\tilde{\boldsymbol{\alpha}}^{k}_{i})$
   \STATE $\hat{\mathbf{x}}_{i}^{k} \leftarrow \mathbf{G}_{\gamma}( \boldsymbol{\alpha}^{k}_{i},\mathbf{x}_{i})$
   \ENDFOR
   % \STATE $\hat{\mathbf{x}}_{i}^{final}\leftarrow \mathbf{G}_{\gamma}(\boldsymbol{\alpha}^{K}_{i}, \mathbf{x}_{i})$ 
   \ENDFOR
   \STATE \textit{\textcolor{blue}{\# Minimization: Update Model}}
   \STATE $\widehat{R}_{l}(\mathbf{w})=\frac{1}{n} \sum_{i=1}^{n}  \ell\left(\mathbf{f}_{\mathbf{w}} ; \mathbf{x}_{i}, y_{i}\right)$
   \STATE $\widehat{R}_{s}(\mathbf{w})=\frac{1}{n} \sum_{i=1}^{n}  \ell(\mathbf{f}_{\mathbf{w}} ; \hat{\mathbf{x}}_{i}^{K}, y_{i})$
   \STATE $\mathbf{w} \leftarrow \mathbf{w}-\operatorname{lr} \nabla_{\mathbf{w}}\left[(1-\beta)\widehat{R}_{\mathrm{l}}(\mathbf{w})+\beta \widehat{R}_{\mathrm{s}}(\mathbf{w})\right]$
   % \STATE Update $\mathbf{w}$ by performing one step gradient update using $\nabla_{\mathbf{w}}\left[(1-\beta)\widehat{R}_{\mathrm{I}}(\mathbf{w})+\beta \widehat{R}_{\mathrm{D}}(\mathbf{w})\right]$
   \UNTIL{convergence}
\end{algorithmic}
\end{algorithm}
% \vspace{-3mm}

% \newpage
$\bullet$ \textbf{Stochastic Realization\footnote{In practice, we set $\boldsymbol{\lambda}:\{ \lambda_l = n_l / \sum_{i=1}^{N}n_i,l=1,\cdots,N \}$ aforementioned in Eq. \eqref{cls}, which means that each sample in different domains is given the same weight.}.}
Algorithm \ref{alg:DR} gives a stochastic realization for MODE, where minibatch is randomly sampled in each iteration.
We first explore the worst-case generated data by \textit{Maximization}. Specifically, the value of $\boldsymbol{\alpha}$ is uniformly initialized and updated by Eq.\eqref{updatea22} for $K$ steps. 
% Since $\boldsymbol{\alpha} \in \Delta_{M+1}$, it should be normalized after each update.
After getting the final augmented data, we update the model parameters $\mathbf{w}$ by \textit{Minimization} which is one step of minibatch gradient descent with $\widehat{R}_{l}(\mathbf{w})$, $\widehat{R}_{s}(\mathbf{w})$ and $\beta$.

\section{Experiments}
\label{Experiments}
In this section, we demonstrate the superiority of our approach on several DG benchmarks. 

\begin{table*}[h]
\centering
\caption{Leave-one-domain-out classification accuracies (in $\%$) on PACS and Office-Home in ResNet18. The best and second-best results are highlighted in bold and underlined, respectively. DRO$\dagger$ is the result of directly applying Group DRO \cite{sagawa2019distributionally} to DG. CIRL$\dagger$ \cite{lv2022causality} is the result of reproducing using the authors’ official codes and following the same settings in the original papers.}
\scalebox{0.88}{
\begin{tabular}{c|ccccc|ccccc} \hline
\label{tab:po}
Dataset       & \multicolumn{5}{c|}{PACS}                                                                             & \multicolumn{5}{c}{Office-Home}                                                                     \\ \hline
Methods       & A             & C              & P             & \multicolumn{1}{c|}{S}              & Avg.           & A             & C              & P             & \multicolumn{1}{c|}{R}             & Avg.          \\ \hline
DeepAll \cite{zhou2020deep}       & 77.6          & 76.8           & 95.9          & \multicolumn{1}{c|}{69.5}           & 79.9           & 57.9          & 52.7           & 73.5          & \multicolumn{1}{c|}{74.8}          & 64.7          \\
Jigen \cite{carlucci2019domain}         & 79.4          & 75.3           & 96.0          & \multicolumn{1}{c|}{71.4}           & 80.5           & 53.0            & 47.5           & 71.5          & \multicolumn{1}{c|}{72.8}          & 61.2          \\
MMD-AAE \cite{li2018domain}       & 75.2          & 72.7           & 96.0          & \multicolumn{1}{c|}{64.2}           & 77.0           & 56.5          & 47.3           & 72.1          & \multicolumn{1}{c|}{74.8}          & 62.7          \\
CrossGrad \cite{shankar2018generalizing}     & 79.8          & 76.8           & 96.0          & \multicolumn{1}{c|}{70.2}           & 80.7           & 58.4          & 49.4           & 73.9          & \multicolumn{1}{c|}{75.8}          & 64.4          \\
DDAIG \cite{zhou2020deep}         & 84.2          & 78.1           & 95.3          & \multicolumn{1}{c|}{74.7}           & 83.1           & 59.2          & 52.3           & 74.6          & \multicolumn{1}{c|}{76.0}          & 65.5          \\
L2A-OT \cite{zhou2020learning}        & 83.3          & 78.2           & 96.2          & \multicolumn{1}{c|}{73.6}           & 82.8           & \textbf{60.6}          & 50.1           & \textbf{74.8}          & \multicolumn{1}{c|}{77.0}          & 65.6          \\
MixStyle \cite{zhou2021domain}      & 83.0          & 78.6           & 96.3    & \multicolumn{1}{c|}{71.2}           & 82.3           & 58.7          & 53.4           & 74.2          & \multicolumn{1}{c|}{75.9}          & 65.5          \\
MatchDG \cite{mahajan2021domain}        & 81.3          & {\ul80.7}           & {\ul 96.5} & \multicolumn{1}{c|}{79.7}           & 84.6           & -             & -              & -             & \multicolumn{1}{c|}{-}             & -             \\
CICF \cite{li2021confounder}          & 80.7          & 76.9           & 95.6          & \multicolumn{1}{c|}{74.5}           & 81.9           & 57.1          & 52.0           & 74.1          & \multicolumn{1}{c|}{75.6}          & 64.7          \\
RSC \cite{huang2020self}            & 83.4          & 80.3           & 96.0          & \multicolumn{1}{c|}{80.9}           & 85.2           & 58.4          & 47.9           & 71.6          & \multicolumn{1}{c|}{74.5}          & 63.1          \\
FACT \cite{xu2021fourier}          & \textbf{85.9}    & 79.4           & \textbf{96.6}          & \multicolumn{1}{c|}{80.8}           & 85.7           & {\ul 60.3}    & 54.9           & {\ul 74.5}    & \multicolumn{1}{c|}{ \textbf{76.6}}    & 66.6          \\
CIRL$\dagger$ \cite{lv2022causality} & {\ul 85.5{\tiny$\pm$0.2}}          & 79.6{\tiny$\pm$0.3}           & 96.1{\tiny$\pm$0.5}          & \multicolumn{1}{c|}{{\ul 82.7{\tiny$\pm$0.3}}}           & {\ul 86.0}           & 58.6{\tiny$\pm$0.2}          & {\ul 55.4{\tiny$\pm$0.1}}           & 73.8{\tiny$\pm$0.3}          & \multicolumn{1}{c|}{75.1{\tiny$\pm$0.1}}          &{\ul 65.7}           \\ \hline
DRO$\dagger$ \cite{sagawa2019distributionally}          & 82.5{\tiny$\pm$1.3}          & 79.1{\tiny$\pm$1.0}           & 95.1{\tiny$\pm$0.2}          & \multicolumn{1}{c|}{78.5{\tiny$\pm$1.2}}           & 83.2           & 52.8{\tiny$\pm$0.3}          & 49.2{\tiny$\pm$0.6}          & 67.6{\tiny$\pm$0.4}          & \multicolumn{1}{c|}{70.8{\tiny$\pm$0.5}}          & 60.1          \\ \hline
MODE-F (ours)  & 84.5{\tiny$\pm$0.6}         & 80.4{\tiny$\pm$0.8}          & 95.5{\tiny$\pm$0.2}         & \multicolumn{1}{c|}{82.2{\tiny$\pm$0.7}}          & 85.7          & 57.7{\tiny$\pm$0.1}         & 54.0{\tiny$\pm$0.4}          & 73.9{\tiny$\pm$0.2}         & \multicolumn{1}{c|}{{\ul 76.1{\tiny$\pm$0.3}} }       & 65.4         \\
MODE-A (ours)   & 84.4{\tiny$\pm$0.9}         & \textbf{81.9{\tiny$\pm$0.9}} & 95.2{\tiny$\pm$0.3}         & \multicolumn{1}{c|}{\textbf{85.8{\tiny$\pm$0.3}}} & \textbf{86.9} & 60.1{\tiny$\pm$0.8}         & \textbf{57.3{\tiny$\pm$0.6}} & 74.2{\tiny$\pm$0.5}         & \multicolumn{1}{c|}{76.0{\tiny$\pm$0.2}}         &   \textbf{66.9}  \\ \hline
\end{tabular}}
\end{table*}

\subsection{Datasets}
Following previous works \cite{zhou2020deep,huang2020self,xu2021fourier,lv2022causality}, we evaluate our approach on three standard DG benchmark datasets described below. More results include VLCS \cite{torralba2011unbiased}, DomainNet \cite{peng2019moment} and Mini-DomainNet \cite{zhou2021domain} are given in the Appendix \ref{moreresult}.

\textbf{Digits-DG} \cite{zhou2020deep} consists of 4 digit datasets: MNIST (M) \cite{lecun1998gradient}, MNIST-M (M-M) \cite{ganin2015unsupervised}, SVHN (SV) \cite{netzer2011reading} and SYN (SY) \cite{ganin2015unsupervised} which differ in font style, color, and background. Following \cite{zhou2020deep}, we randomly select 600 images of each class from each domain, where $80\%$ of the selected images are used for training, and $20\%$ are used for validation.

\textbf{PACS} \cite{li2017deeper} is an object recognition benchmark designed for DG which consists of 9,991 images from 4 domains namely Photo (P), Art-painting (A), Cartoon (C), Sketch (S) with large style discrepancy and has seven categories in each domain. For a fair comparison, we use the training-validation-test split provided by \cite{li2017deeper}.

\textbf{Office-Home} \cite{venkateswara2017deep} consists of 15,500 images of 65 classes from four domains: Art (A), Clipart (C), Product (P), Real-World (R), which differ in image style and viewpoint. For a fair comparison, we use the training-validation-test split same as \citet{xiao2021bit}.

\subsection{Implementation Details}
Following the commonly used leave-one-domain-out strategy \cite{li2017deeper,xu2021fourier}, DG models are evaluated using one domain, after training on the other domains.

\begin{table}[!t]
\centering
\vspace{-3mm}
\caption{Leave-one-domain-out classification accuracies (in $\%$) on Digit-DG. 
The best and second-best results are highlighted in bold and underlined, respectively.}
\scalebox{0.65}{
\begin{tabular}{c|cccc|c}\hline
\label{tab:1}
Methods       & M              & M-M            & SV          & SY            & Avg.         \\ \hline
DeepAll \cite{zhou2020deep}       & 95.8           & 58.8           & 61.7          & 78.6           & 73.7           \\
Jigen \cite{carlucci2019domain}         & 96.5           & 61.4           & 63.7          & 74.0           & 73.9           \\
MMD-AAE \cite{li2018domain}       & 96.5           & 58.4           & 65.0          & 78.4           & 74.6           \\
CrossGrad \cite{shankar2018generalizing}     & 96.7           & 61.1           & 65.3          & 80.2           & 75.8           \\
DDAIG \cite{zhou2020deep}         & 96.6           & 64.1           & 68.6          & 81.0           & 77.6           \\
L2A-OT \cite{zhou2020learning}        & 96.7           & 63.9           & 68.6          & 83.2           & 78.1           \\
MixStyle \cite{zhou2021domain}      & 96.5           & 63.5           & 64.7          & 81.2           & 76.5           \\
CICF \cite{li2021confounder}          & 95.8           & 63.7           & 65.8          & 80.7           & 76.5           \\
FACT \cite{xu2021fourier}          & {\ul 97.9}           & 65.6           & 72.4          & {\ul90.3}           & 81.6           \\
CIRL \cite{lv2022causality}          & 96.1           & {\ul 69.8}           & \textbf{76.2} & 87.7           & {\ul82.5}           \\ \hline
MODE-F (ours)     & \textbf{98.5}{\tiny$\pm$0.1} & \textbf{72.7}{\tiny$\pm$0.1} & {\ul73.2 }{\tiny$\pm$0.6}        & \textbf{91.1}{\tiny$\pm$0.4} & \textbf{83.9} \\ \hline
\end{tabular}} \vspace{-1mm}
\end{table}

\textbf{Basic Details.} For Digits-DG, we use the backbone introduced by \cite{zhou2020learning,xu2021fourier}. All images are resized to 32×32. Following \citet{xu2021fourier}, we train the network using an SGD optimizer with a learning rate of 0.05, batch size of 128, a momentum of 0.9, and weight decay 5e-4 for 50 epochs. The learning rate is decayed by 0.1 every 20 epochs. We use random cropping in data augmentation.
For PACS and Office-Home, following \citet{li2017deeper,xu2021fourier}, we use a pre-trained ResNet-18 backbone \cite{he2016deep}, all images are resized to 224×224. We train the network using SGD optimizer with learning rate 5e-4, momentum 0.9, and weight decay 5e-4. We train the model for 80 epochs with batch size 16 and 50 epochs with batch size 32, respectively. The learning rate is decayed by 0.1 every 40 epochs. We use the standard augmentation protocol in \citet{li2017deeper,xu2021fourier}.

\textbf{Method-specific Details.}
The hyperparameters of our approach: The number of inner steps $K$, inner step size $\mu$, $\beta$, $\gamma$, and The number of style providers $M$. The settings of these hyperparameters are shown in Appendix \ref{md}.

\subsection{Experimental Results}

\textbf{Results on Digit-DG\footnote{We only use the Fourier-based method MODE-F for Digit-DG,
since AdaIN could not process low-resolution images originally.}.}
We show the Leave-one-domain-out classification accuracies on Digit-DG in Table \ref{tab:1}. It can be observed that our approach achieves the highest accuracy in most domains and the second-highest accuracy in the remaining domain.4
% detail, high-performance gain, under xxx setting. Especially, xxxMNIST-M (M-M), our approach. xxx, which is consistent with our theory (Eq. (xx)).
In particular, in MNIST-M (M-M), which has complex backgrounds and rich colors, our approach exceeds the previous approach of state-of-the-art by 2.7$\%$, which proves the capability of our approach. 

\textbf{Results on PACS.} We show the Leave-one-domain-out classification accuracies on PACS on Table \ref{tab:po}. It can be observed that our approach achieves the highest average accuracy. Our approach also exceeds Group DRO \cite{sagawa2019distributionally} by 3$\%$ on average. In particular, in the most challenging domain Sketch (S), where the image is only composed of simple lines without background, our approach exceeds the previous approach of state-of-the-art by 3$\%$.

\textbf{Results on Office-Home.}
We show the Leave-one-domain-out classification accuracies on Office-Home in Table \ref{tab:po}. 
% Although our results were not as good as the SOTA on this dataset, we were only 0.2\% off the SOTA in average accuracy.
It can be observed that our approach achieves the highest average accuracy.
Our approach also exceeds Group DRO \cite{sagawa2019distributionally} by 6$\%$ on average. In particular, in the Clipart (C), which is very similar to the domain Sketch in PACS, we still get 1.9\% better than the previous best result, which proves that our approach has consistent performance.

\subsection{Analytical Experiments}

\textbf{Number of Inner Steps $K$.}
Figure \ref{fig:2} is the average loss of samples in different inner steps to the current model changes with the number of epochs in a training process, and the effect of the number of inner steps $K$. It shows that our approach indeed finds samples with higher empirical risk and with the increase in the number of inner steps, the final accuracy also maintains the trend of generally increasing, which proves the effectiveness of our approach.

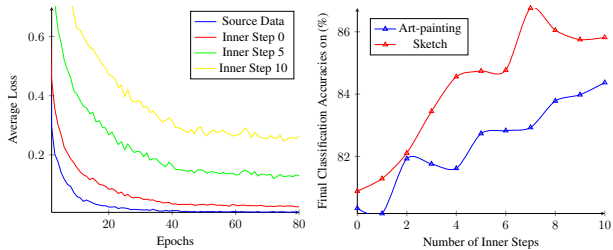
\begin{figure}[htbp] %插入图片
\centering %图片居中
\begin{minipage}{.24\textwidth}
\begin{tikzpicture}[scale=0.48] %tikz图片
\begin{axis}[
    xlabel= Epochs, %横坐标名
    ylabel= Average Loss, %纵坐标名
    tick align=outside,
    ymax=0.7,
    axis y line=left,
    axis x line=bottom%, %刻度在外显式
    %legend style={at={(0.5,-0.2)},anchor=north} %图例在图下方显示
    ]
\addplot [mark = none, blue] table {00.txt};
\addlegendentry{Source Data}
\addplot [mark = none, red] table {0.txt};
\addlegendentry{Inner Step 0}
\addplot [mark = none, green] table {5.txt};
\addlegendentry{Inner Step 5}
\addplot [mark = none, yellow] table {10.txt};
\addlegendentry{Inner Step 10}
\end{axis}
\end{tikzpicture}
\end{minipage}%
\begin{minipage}{.24\textwidth}
\begin{tikzpicture}[scale=0.48] %tikz图片
\begin{axis}[
    xlabel= Number of Inner Steps, %横坐标名
    ylabel= Final Classification Accuracies on (\%),
    tick align=outside,%, %刻度在外显式
    axis y line=left,
    axis x line=bottom,%纵坐标名
    legend style={at={(0.25,0.95)},anchor=north} %图例在图下方显示
    ]
\addplot[smooth,mark=triangle,blue] plot coordinates { 
    (0,80.34)
    (1,80.17)
    (2,81.93)
    (3,81.76)
    (4,81.62)
    (5,82.735)
    (6,82.83)
    (7,82.93)
    (8,83.78)
    (9,83.98)
    (10,84.37)
};
\addlegendentry{Art-painting}
\addplot[smooth,mark=triangle,red] plot coordinates { 
    (0,80.89)
    (1,81.29)
    (2,82.11)
    (3,83.45)
    (4,84.56)
    (5,84.74)
    (6,84.77)
    (7,86.76)
    (8,86.05)
    (9,85.75)
    (10,85.82)
};
\addlegendentry{Sketch}
\end{axis}
\end{tikzpicture}
\end{minipage}
\caption{The left shows the average loss of augmented samples in different inner steps changes with the number of epochs in a training process. The right shows the effect of the number of inner steps. All results are conducted on the PACS dataset with Sketch (S) or Art-painting (A) as the unknown target domain.}
\label{fig:2}
% \vspace{-1cm}
\end{figure}

\textbf{Hyperparameter $\beta$ and $\gamma$.}
Figure \ref{fig:3} show the final accuracies of different $\beta$ and $\gamma$. Since $\gamma$ controls the degree of stylization, the change of $\gamma$ can be viewed as a change of $\Omega$. It can be observed that there is a trade-off across the choices of $\beta$. The best choice for $\beta$ is between 0.2 and 0.4. We can also observe that choosing a larger $\Omega$ (corresponds to larger $\gamma$) would not always be better, which is consistent with our theory Eq.\eqref{thm4bound}. Some visualization results with too large $\Omega$ are shown in Figure \ref{Fig.bbb} and Figure \ref{fig:bad}. 

\begin{figure}[htbp] %插入图片
\centering %图片居中
\begin{minipage}{.24\textwidth}
\begin{tikzpicture}[scale=0.48] %tikz图片
\begin{axis}[
    xlabel= $\beta$, %横坐标名
    ylabel= $\gamma$, %纵坐标名
    zlabel= Classification Accuracies in $\%$,
    tick align=outside%, %刻度在外显式
    %legend style={at={(0.5,-0.2)},anchor=north} %图例在图下方显示
    ]
\addplot3[surf,] coordinates {
 (0.000000,0.000000,81.585000) (0.000000,0.200000,81.585000) (0.000000,0.400000,81.585000) (0.000000,0.600000,81.585000) (0.000000,0.800000,81.585000) (0.000000,1.000000,81.585000)

 (0.200000,0.000000,81.660000) (0.200000,0.200000,81.050000) (0.200000,0.400000,80.855000) (0.200000,0.600000,82.175000) (0.200000,0.800000,83.125000) (0.200000,1.000000,84.855000)

 (0.400000,0.000000,82.490000) (0.400000,0.200000,80.145000) (0.400000,0.400000,80.975000) (0.400000,0.600000,81.515000) (0.400000,0.800000,83.000000) (0.400000,1.000000,83.635000)
 
 (0.600000,0.000000,80.465000) (0.600000,0.200000,80.610000) (0.600000,0.400000,79.585000) (0.600000,0.600000,82.490000) (0.600000,0.800000,82.395000) (0.600000,1.000000,82.340000)
 
 (0.800000,0.000000,80.340000) (0.800000,0.200000,79.955000) (0.800000,0.400000,79.070000) (0.800000,0.600000,80.830000) (0.800000,0.800000,82.200000) (0.800000,1.000000,82.125000)
 
 (1.000000,0.000000,79.195000) (1.000000,0.200000,77.685000) (1.000000,0.400000,78.220000) (1.000000,0.600000,77.780000) (1.000000,0.800000,77.440000) (1.000000,1.000000,77.240000)

};
\end{axis}
\end{tikzpicture}
\end{minipage}%
\begin{minipage}{.24\textwidth}
\begin{tikzpicture}[scale=0.48] %tikz图片
\begin{axis}[
    xlabel= $\beta$, %横坐标名
    ylabel= $\gamma$, %纵坐标名
    zlabel= Classification Accuracies in $\%$,
    tick align=outside%, %刻度在外显式
    %legend style={at={(0.5,-0.2)},anchor=north} %图例在图下方显示
    ]
    ]
\addplot3[surf,] coordinates {
(0.000000,0.000000,78.180000)(0.000000,0.200000,78.180000)(0.000000,0.400000,78.180000)(0.000000,0.600000,78.180000)(0.000000,0.800000,78.180000)(0.000000,1.000000,78.180000)

(0.200000,0.000000,78.160000)(0.200000,0.200000,79.075000)(0.200000,0.400000,80.830000)(0.200000,0.600000,82.090000)(0.200000,0.800000,84.090000)(0.200000,1.000000,86.100000)

(0.400000,0.000000,78.510000)(0.400000,0.200000,79.590000)(0.400000,0.400000,82.190000)(0.400000,0.600000,82.840000)(0.400000,0.800000,84.190000)(0.400000,1.000000,85.820000)

(0.600000,0.000000,77.290000)(0.600000,0.200000,79.740000)(0.600000,0.400000,81.225000)(0.600000,0.600000,83.515000)(0.600000,0.800000,84.750000)(0.600000,1.000000,86.510000)

(0.800000,0.000000,77.220000)(0.800000,0.200000,79.225000)(0.800000,0.400000,80.465000)(0.800000,0.600000,83.145000)(0.800000,0.800000,85.220000)(0.800000,1.000000,85.870000)

(1.000000,0.000000,76.170000)(1.000000,0.200000,76.860000)(1.000000,0.400000,78.110000)(1.000000,0.600000,79.940000)(1.000000,0.800000,76.680000)(1.000000,1.000000,76.760000)

};
\end{axis}
\end{tikzpicture}
\end{minipage}
\caption{Classification accuracies of different $\beta$ and $\gamma$. The left results are conducted on the PACS dataset with Art-painting (A) as the unknown target domain; the right results are conducted with Sketch (S) as the unknown target domain.}
\label{fig:3}
\end{figure}
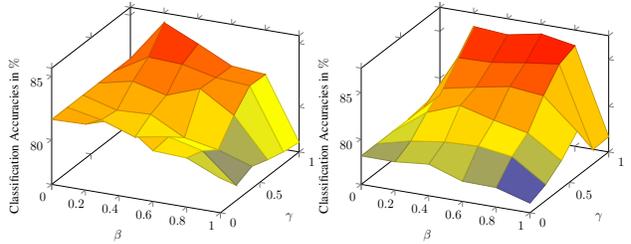

\textbf{Number of Style Provider $M$ and Inner Steps $K$.}
Figure \ref{fig:4} shows the final accuracies of different $\beta$, number of style provider $M$, and number of inner steps $K$. $M$ determines the number of styles that can be used in the exploration, which may affect $\mathcal{S}_{\Omega}$. It can be observed that there is also a trade-off across the choices of $M$. And in most cases, the larger $K$ results in better performance. More discussion is in Appendix \ref{Discussion}.  

\begin{figure}[tbp] %插入图片
\centering %图片居中
\begin{minipage}{.24\textwidth}
\begin{tikzpicture}[scale=0.48] %tikz图片
\begin{axis}[
    xlabel= $\beta$, %横坐标名
    ylabel= $M$, %纵坐标名
    zlabel= Classification Accuracies in $\%$,
    tick align=outside%, %刻度在外显式
    %legend style={at={(0.5,-0.2)},anchor=north} %图例在图下方显示
    ]
\addplot3[surf,] coordinates {
 (0.000000,2.000000,81.585000)(0.000000,4.000000,81.585000)(0.000000,6.000000,81.585000)(0.000000,8.000000,81.585000)(0.000000,10.000000,81.585000)

(0.200000,2.000000,83.880000)(0.200000,4.000000,84.855000)(0.200000,6.000000,83.415000)(0.200000,8.000000,83.420000)(0.200000,10.000000,83.830000)

(0.400000,2.000000,83.270000)(0.400000,4.000000,83.635000)(0.400000,6.000000,82.250000)(0.400000,8.000000,82.735000)(0.400000,10.000000,82.565000)

(0.600000,2.000000,82.200000)(0.600000,4.000000,82.340000)(0.600000,6.000000,81.610000)(0.600000,8.000000,81.735000)(0.600000,10.000000,81.955000)

(0.800000,2.000000,82.395000)(0.800000,4.000000,82.125000)(0.800000,6.000000,81.440000)(0.800000,8.000000,80.830000)(0.800000,10.000000,80.535000)

(1.000000,2.000000,79.980000)(1.000000,4.000000,77.240000)(1.000000,6.000000,77.265000)(1.000000,8.000000,77.090000)(1.000000,10.000000,75.780000)

};
\end{axis}
\end{tikzpicture}
\end{minipage}%
\begin{minipage}{.24\textwidth}
\begin{tikzpicture}[scale=0.48] %tikz图片
\begin{axis}[
    xlabel= $\beta$, %横坐标名
    ylabel= $K$, %纵坐标名
    zlabel= Classification Accuracies in $\%$,
    tick align=outside,%, %刻度在外显式
    zmin= 70.0
    %legend style={at={(0.5,-0.2)},anchor=north} %图例在图下方显示
    ]
    ]
\addplot3[surf,] coordinates {
(0.000000,0.000000,78.180000)(0.000000,2.000000,78.180000)(0.000000,4.000000,78.180000)(0.000000,6.000000,78.180000)(0.000000,8.000000,78.180000)(0.000000,10.000000,78.180000)

(0.200000,0.000000,78.770000)(0.200000,2.000000,80.930000)(0.200000,4.000000,83.830000)(0.200000,6.000000,84.550000)(0.200000,8.000000,85.640000)(0.200000,10.000000,83.810000)

(0.400000,0.000000,81.310000)(0.400000,2.000000,83.910000)(0.400000,4.000000,84.090000)(0.400000,6.000000,86.120000)(0.400000,8.000000,85.560000)(0.400000,10.000000,85.360000)

(0.600000,0.000000,82.660000)(0.600000,2.000000,82.510000)(0.600000,4.000000,85.230000)(0.600000,6.000000,85.410000)(0.600000,8.000000,85.790000)(0.600000,10.000000,85.670000)

(0.800000,0.000000,83.580000)(0.800000,2.000000,80.930000)(0.800000,4.000000,84.270000)(0.800000,6.000000,85.720000)(0.800000,8.000000,85.770000)(0.800000,10.000000,86.430000)

(1.000000,0.000000,54.490000)(1.000000,2.000000,68.660000)(1.000000,4.000000,76.730000)(1.000000,6.000000,77.750000)(1.000000,8.000000,78.030000)(1.000000,10.000000,85.820000)

};
\end{axis}
\end{tikzpicture}
\end{minipage}
\caption{Classification accuracies of different $\beta$, number of style provider $M$ and number of iterations $K$ in inner optimization. The left results are conducted on the PACS dataset with Art-painting (A) as the unknown target domain; the right results are conducted with Sketch (S) as the unknown target domain.}
\label{fig:4}
% \vspace{-2mm}
\end{figure}
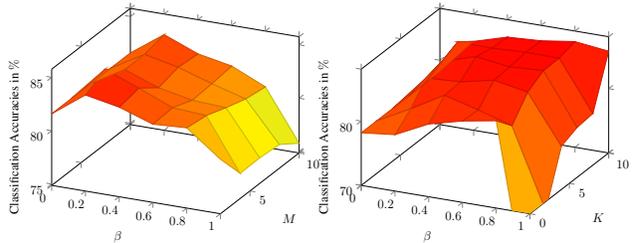

\begin{figure}[!tbp]
\centering
\label{Fig.sub.2}
\includegraphics[width=0.45\textwidth]{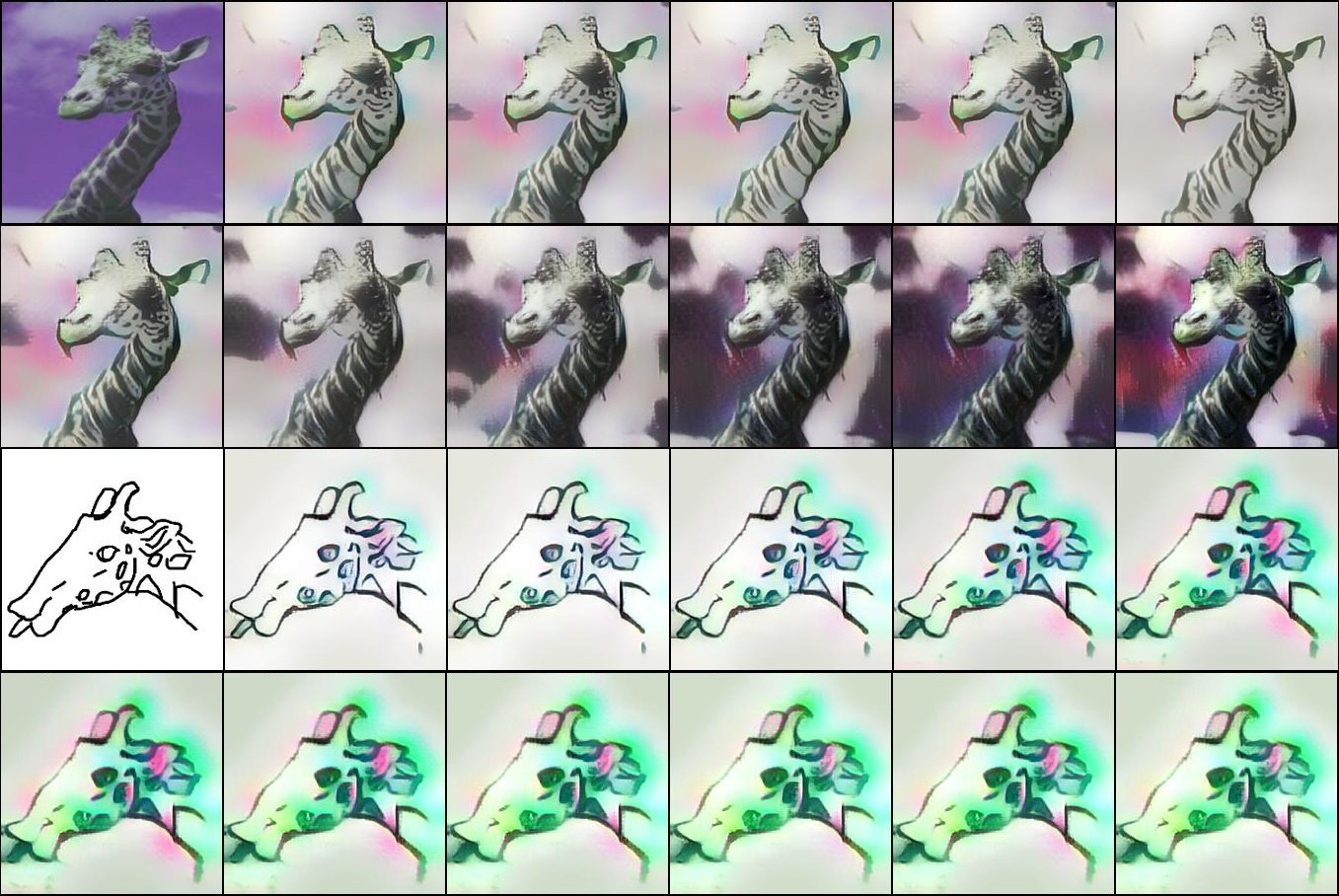}
\caption{Visualization results of the normal exploration process, where the semantic information is preserved when exploring non-semantic factors. More results are shown in Appendix \ref{vr}.}
\label{fig:ss}
\vspace{-3mm}
\end{figure}

\begin{figure}[!tbp]
\centering
\includegraphics[width=0.45\textwidth]{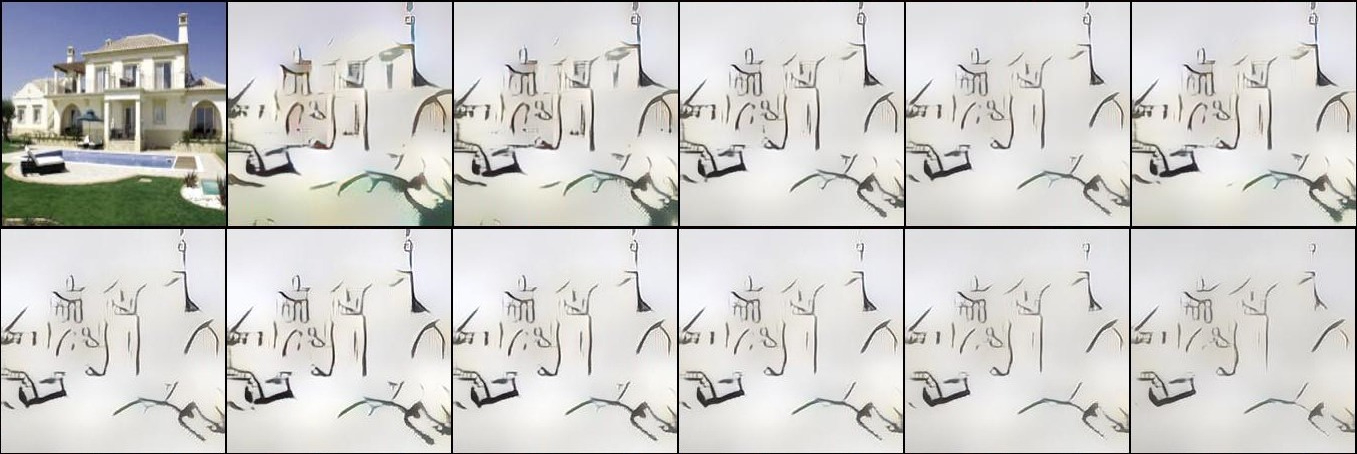}
\caption{Visualization results of the exploration process with too large $\Omega$. 
The semantic information is lost during the exploration.}
\label{Fig.bbb}
\vspace{-1mm}
\end{figure}

\textbf{Limitation.} In our work, different methods to construct the mechanism $\mathbf{G}$ have a great impact on the results. How to find a better way to construct $\mathbf{G}$ remains to be explored.

% Please add the following required packages to your document preamble:
% \usepackage[table,xcdraw]{xcolor}
% If you use beamer only pass "xcolor=table" option, i.e. \documentclass[xcolor=table]{beamer}
\section{Conclusion}
Generating new domains using these accessible training domains is one of the most effective approaches in domain generalization, yet their performance gain depends on the distribution discrepancy between the generated domain and the unknown target domain.
The low-confidence issue hinders the application of Distributionally robust optimization to DG.
% Employing Distributionally robust optimization to DG has shown limited performance improvement in practice which is caused by the low-confidence issue.
To address this issue, we propose an approach called MODE, which performs distribution exploration in an uncertainty subset that shares the same semantic factors with the training domains, and theoretically shows the convergence guarantee toward the generalization performance on the unknown target domain. Empirically, we conduct extensive experiments to verify the effectiveness of our approach.
We hope that our work can inspire more ideas in the future.
% \cite{anonymous}
\section{Acknowledgment}
This work was supported by NSFC No.62222117 and the Fundamental Research Funds for the Central Universities under contract WK3490000005.
YGZ and BH were supported by NSFC Young Scientists Fund No.62006202, Guangdong Basic and Applied Basic Research Foundation No.2022A1515011652, RGC Early Career Scheme No.22200720, and CAAI-Huawei MindSpore Open Fund.

% \clearpage
% \clearpage
\bibliography{example_paper}
\bibliographystyle{icml2023}

%%%%%%%%%%%%%%%%%%%%%%%%%%%%%%%%%%%%%%
\clearpage
\appendix
\onecolumn
\renewcommand{\appendixname}{Appendix~\Alph{section}}
\section{Proofs}
\subsection{Covering Number}\label{cn}
We use the covering number for the model classes in our derivation. Here we give the formal definition.
\begin{defin}
($\epsilon$-covering \cite{vershynin2018high}). Let  $(V,\|\cdot\|)$  be a normed space,  $\Theta \in V$ , and  $B(\cdot, \epsilon)$  the ball of radius  $\epsilon$ . Then  $\left\{V_{1}, \ldots, V_{N}\right\}$  is an $\epsilon$-covering of $ \Theta$  if  $\Theta \subset \bigcup_{i=1}^{N} B\left(V_{i}, \epsilon\right)$ , or equivalently,  $\forall \theta \in \Theta, \exists i$ 
 such that $\left\|\theta-V_{i}\right\| \leq \epsilon$ .

Upon our definition of  $\epsilon$ -covering, covering number is the minimal number of  $\epsilon$-balls one needs to cover  $\Theta$ .
\end{defin}

\begin{defin}
(Covering Number \cite{vershynin2018high}). $ \mathcal{N}(\Theta,\|\cdot\|, \epsilon)=\min \{n: \exists \epsilon$-covering over  $\Theta$  of size  $n\}$ .
\end{defin}
\subsection{Proof of Theorem \ref{thm1}}
\begin{proof}
We first recall the notations as follows:
\begin{align}
R^{\beta}_{\boldsymbol{\lambda}}(\mathbf{w}) =  (1-\beta ) \sum_{l=1}^N \lambda_l R_{l}(\mathbf{w})+\beta R_{\mathcal{S}_{\Omega}}(\mathbf{w}),
\end{align}
\begin{align}
\widehat{R}^{\beta}_{\boldsymbol{\lambda}}(\mathbf{w}) = (1-\beta ) \sum_{l=1}^N \lambda_l \widehat{R}_{l}(\mathbf{w})+\beta \widehat{R}_{\mathcal{S}_{\Omega}}(\mathbf{w}),
\end{align}
\begin{align}
R_{l}(\mathbf{w})= \mathbb{E} \ \ell\left(\mathbf{f}_{\mathbf{w}}\circ \mathbf{G} ; S_l,C, Y\right),
\end{align}
\begin{align}
\widehat{R}_{l}(\mathbf{w})=\frac{1}{n_{l}} \sum_{i=1}^{n_{l}} \ell\left(\mathbf{f}_{\mathbf{w}} ; \mathbf{x}_{l}^{i}, y_{l}^{i}\right),
\end{align}
\begin{align}
R_{\mathcal{S}_{\Omega}}(\mathbf{w}) = \mathbb{E} \max_{\mathbf{s}\in \mathcal{S}_{\Omega}} \ell\left(\mathbf{f}_{\mathbf{w}}\circ \mathbf{G} ; \mathbf{s},C, Y\right),
\end{align}
\begin{align}
\widehat{R}_{\mathcal{S}_{\Omega}}(\mathbf{w}) & =  \frac{1}{\sum_{l=1}^{N} n_{l}} \sum_{l=1}^{N} \sum_{i=1}^{n_{l}} \max_{\mathbf{s}\in \mathcal{S}_{\Omega} } \ell\left(\mathbf{f}_{\mathbf{w}}\circ \mathbf{G} ;\mathbf{s},\mathbf{c}_{l}^{i}, y_{l}^{i}\right).
\end{align}

Let  $\mathbf{w}^{*}$  be the solution of  $\min _{\mathbf{w} \in \mathcal{W}} R^{\beta }_{\boldsymbol{\lambda}}(\mathbf{w} )$ . Then similar to \citet{fang2020open,fang2022out}, we have

\begin{align}\label{thm1.1}
\begin{split}
 \space R^{\beta }_{\boldsymbol{\lambda}}(\widehat{\mathbf{w}} )-R^{\beta }_{\boldsymbol{\lambda}}\left(\mathbf{w}^{*} \right) 
 & \leq R^{\beta }_{\boldsymbol{\lambda}}(\widehat{\mathbf{w}} )-\widehat{R}^{\beta}_{\boldsymbol{\lambda}}(\widehat{\mathbf{w}} )+\widehat{R}^{\beta}_{\boldsymbol{\lambda}}(\widehat{\mathbf{w}} ) - R^{\beta}_{\boldsymbol{\lambda}}\left(\mathbf{w}^{*} \right)+\widehat{R}^{\beta}_{\boldsymbol{\lambda}}\left(\mathbf{w}^{*} \right)-\widehat{R}^{\beta}_{\boldsymbol{\lambda}}\left(\mathbf{w}^{*} \right) \\
& \leq (1-\beta)\left[\sum_{l=1}^N \lambda_l R_{l}(\widehat{\mathbf{w}})-\sum_{l=1}^N \lambda_l R_{l}\left(\mathbf{w}^{*}\right)\right]+\beta\left[R_{\mathcal{S}_{\Omega}}(\widehat{\mathbf{w}} )-R_{\mathcal{S}_{\Omega}}\left(\mathbf{w}^{*} \right)\right] \\
& -(1-\beta)\left[\sum_{l=1}^N \lambda_l \widehat{R}_{l}(\widehat{\mathbf{w}})-\sum_{l=1}^N \lambda_l \widehat{R}_{l}\left(\mathbf{w}^{*}\right)\right]-\beta\left[\widehat{R}_{\mathcal{S}_{\Omega}}(\widehat{\mathbf{w}} )-\widehat{R}_{\mathcal{S}_{\Omega}}\left(\mathbf{w}^{*} \right)\right] \\
& =(1-\beta)\sum_{l=1}^N \lambda_l \left[ R_{l}(\widehat{\mathbf{w}})- \widehat{R}_{l}(\widehat{\mathbf{w}})\right]+\beta\left[R_{\mathcal{S}_{\Omega}}(\widehat{\mathbf{w}} )-\widehat{R}_{\mathcal{S}_{\Omega}}(\widehat{\mathbf{w}} )\right] \\
& -(1-\beta)\sum_{l=1}^N \lambda_l\left[ R_{l}\left(\mathbf{w}^{*}\right)- \widehat{R}_{l}\left(\mathbf{w}^{*}\right)\right]-\beta\left[R_{\mathcal{S}_{\Omega}}\left(\mathbf{w}^{*} \right)-\widehat{R}_{\mathcal{S}_{\Omega}}\left(\mathbf{w}^{*} \right)\right],
\end{split}
\end{align}
where $\widehat{R}^{\beta}_{\boldsymbol{\lambda}}(\widehat{\mathbf{w}}) - \widehat{R}^{\beta}_{\boldsymbol{\lambda}}\left(\mathbf{w}^{*}\right) \leq 0 $.

By Lemma \ref{lemma1} and Lemma \ref{lemma4}, we have that with the probability at least  $1-2 e^{-t}>0$ , %for any  $s^{\prime}  \in \Omega$ ,

\begin{align}\label{thm1.2}
\begin{split}
& (1-\beta)\sum_{l=1}^N \lambda_l \left[ R_{l}(\widehat{\mathbf{w}})- \widehat{R}_{l}(\widehat{\mathbf{w}})\right]+\beta\left[R_{\mathcal{S}_{\Omega}}(\widehat{\mathbf{w}} )-\widehat{R}_{\mathcal{S}_{\Omega}}(\widehat{\mathbf{w}} )\right] \\
\leq &(1-\beta) \sum_{l=1}^N\frac{b_{0} M_{\ell}\lambda_l }{\sqrt{n_l}} \int_{0}^{1} \sqrt{\log \mathcal{N}\left(\mathcal{F}, M_{\ell} \epsilon, L^{\infty}\right)} d \epsilon \\
+&\beta \frac{b_{1} M_{\ell}}{\sqrt{\sum_{l=1}^N n_l}} \int_{0}^{1} \sqrt{\log \mathcal{N}\left(\mathcal{F}, M_{\ell} \epsilon, L^{\infty}\right)} \mathrm{d} \epsilon \\
+&(1-\beta)\sum_{l=1}^N \lambda_{l} M_{\ell} \sqrt{\frac{2 t}{n_l}}+\beta M_{\ell} \sqrt{\frac{2 t}{\sum_{l=1}^N n_{l}}},
\end{split}
\end{align}
here  $b_{0}$ and $b_{1}$  are uniform constants.

By Lemma \ref{lemma2} and Lemma \ref{lemma5}, we have that with the probability at least  $1-2 e^{-t}>0$ , %for any  $s^{\prime}  \in \Omega$ ,

\begin{align}\label{thm1.3}
\begin{split}
& (1-\beta) \sum_{l=1}^N \lambda_l\left[ R_{l}\left(\mathbf{w}^{*}\right)- \widehat{R}_{l}\left(\mathbf{w}^{*}\right)\right]+\beta\left[R_{\mathcal{S}_{\Omega}}\left(\mathbf{w}^{*} \right)-\widehat{R}_{\mathcal{S}_{\Omega}}\left(\mathbf{w}^{*} \right)\right] \\  \leq & (1-\beta)\sum_{l=1}^N \lambda_{l} M_{\ell} \sqrt{\frac{2 t}{n_l}}+\beta M_{\ell} \sqrt{\frac{2 t}{\sum_{l=1}^N n_{l}}}.
\end{split}
\end{align}

Combining Eqs. \eqref{thm1.1}, \eqref{thm1.2} and \eqref{thm1.3}, we have that with the probability at least $1-4 e^{-t}>0$ , %for any $s^{\prime}  \in \Omega$,
\begin{align}
\begin{split}
 \space  R^{\beta }_{\boldsymbol{\lambda}}(\widehat{\mathbf{w}})-\min _{\mathbf{w} \in \mathcal{W}} R^{\beta }_{\boldsymbol{\lambda}}(\mathbf{w} ) \leq   \epsilon_{\boldsymbol{\lambda}}^{\beta}(n_1,...,n_N; t)
\end{split}
\end{align}
where
\begin{align}
\begin{split}
\epsilon_{\boldsymbol{\lambda}}^{\beta}(n_1,...,n_N; t)=&(1-\beta) \sum_{l=1}^N\frac{b_{0} M_{\ell}\lambda_l }{\sqrt{n_l}} \int_{0}^{1} \sqrt{\log \mathcal{N}\left(\mathcal{F}, M_{\ell} \epsilon, L^{\infty}\right)} d \epsilon \\
+&\beta \frac{b_{1} M_{\ell}}{\sqrt{\sum_{l=1}^N n_l}} \int_{0}^{1} \sqrt{\log \mathcal{N}\left(\mathcal{F}, M_{\ell} \epsilon, L^{\infty}\right)} \mathrm{d} \epsilon \\
+&2(1-\beta)\sum_{l=1}^N \lambda_{l} M_{\ell} \sqrt{\frac{2 t}{n_l}}+2\beta M_{\ell} \sqrt{\frac{2 t}{\sum_{l=1}^N n_{l}}},
\end{split}
\end{align}
here  $b_{0}$ and $b_{1}$  are uniform constants.
\end{proof}
\subsection{Corollary \ref{thm2}}\label{coro1}

\begin{coro}\label{thm2}
Given the same conditions in Theorem \ref{thm1}, if

$\bullet$ $\ell(\cdot ; \mathbf{x}, y)$  is $L$-Lipschitz w.r.t. norm $\| \cdot \|$, i.e., for any  $(\mathbf{x}, y) \in \mathcal{X} \times\mathcal{Y}$ , and  $\mathbf{w}, \mathbf{w}^{\prime} \in \mathcal{W} $,
\begin{align}
\begin{split}
\left|\ell (\mathbf{f_{w}} ;\mathbf{x}, y)-\ell\left(\mathbf{f_{w^{\prime}}} ; \mathbf{x}, y\right)\right| \leq L\left\|\mathbf{w}-\mathbf{w}^{\prime}\right\|,
\end{split}
\end{align}

$\bullet$ the parameter space  $\mathcal{W} \subset \mathbb{R}^{d^{\prime}}$  satisfies that
\begin{align}
\operatorname{diam}(\mathcal{W})=\sup _{\mathbf{w}, \mathbf{w}^{\prime} \in \mathcal{W}}\left\|\mathbf{w}-\mathbf{w}^{\prime}\right\|<+\infty ,
\end{align}

With the probability at least  $1-4 e^{-t}>0$ ,% for any  $s^{\prime}  \in \Omega$ ,
\begin{align}
\begin{split}
R^{\beta }_{\boldsymbol{\lambda}}(\widehat{\mathbf{w}} )-\min _{\mathbf{w} \in \mathcal{W}} R^{\beta }_{\boldsymbol{\lambda}}(\mathbf{w} ) \leq \tilde{\epsilon}_{\boldsymbol{\lambda}}^{\beta}(n_1,...,n_N; t)
\end{split}
\end{align}
where
\begin{align}
\begin{split}
\tilde{\epsilon}_{\boldsymbol{\lambda}}^{\beta}(n_1,...,n_N; t) & =(1-\beta) \sum_{l=1}^N b_{0}\lambda_l \sqrt{\frac{M_{\ell} \operatorname{diam}(\mathcal{W}) L d^{\prime}}{n_l}} \\
& +\beta b_{1} \sqrt{\frac{M_{\ell}\operatorname{diam}(\mathcal{W}) L d^{\prime}}{\sum_{l=1}^N n_l}} \\
& +2(1-\beta)\sum_{l=1}^N \lambda_{l} M_{\ell} \sqrt{\frac{2 t}{n_l}}+2\beta M_{\ell} \sqrt{\frac{2 t}{\sum_{l=1}^N n_{l}}},
\end{split}
\end{align}

here  $b_{0}$ and $b_{1}$  are uniform constants.
\end{coro}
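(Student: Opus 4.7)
The plan is to specialize the generic covering-number bound of Theorem \ref{thm1} by using the two added hypotheses (loss Lipschitz in $\mathbf{w}$, parameter space with finite diameter) to control $\mathcal{N}(\mathcal{F},\epsilon,L^\infty)$ explicitly, and then evaluate the Dudley-type integrals that appear inside $\epsilon_{\boldsymbol{\lambda}}^{\beta}(n_1,\ldots,n_N;t)$.

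First, I would argue that any $(\epsilon/L)$-net of $(\mathcal{W},\|\cdot\|)$ induces an $\epsilon$-net of $(\mathcal{F},L^\infty)$: if $\|\mathbf{w}-\mathbf{w}'\|\leq \epsilon/L$ then, uniformly in $(\mathbf{x},y)$, $|\ell(\mathbf{f}_{\mathbf{w}};\mathbf{x},y)-\ell(\mathbf{f}_{\mathbf{w}'};\mathbf{x},y)|\leq \epsilon$ by the Lipschitz hypothesis. Combined with the standard volume-comparison estimate $\mathcal{N}(\mathcal{W},r,\|\cdot\|)\leq (3\,\mathrm{diam}(\mathcal{W})/r)^{d'}$ for a bounded subset of $\mathbb{R}^{d'}$, this gives
\begin{equation*}
\log\mathcal{N}(\mathcal{F},M_\ell\epsilon,L^\infty)\;\leq\; d'\log\!\Bigl(\tfrac{C L\,\mathrm{diam}(\mathcal{W})}{M_\ell\epsilon}\Bigr)
\end{equation*}
for a uniform constant $C$.

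Second, I would substitute this estimate into the Dudley integral $\int_0^1\sqrt{\log\mathcal{N}(\mathcal{F},M_\ell\epsilon,L^\infty)}\,d\epsilon$ that appears in $\epsilon_{\boldsymbol{\lambda}}^{\beta}$. A change of variables (or, equivalently, Cauchy–Schwarz on the integrand) together with the elementary inequality $\int_0^1\sqrt{\log(a/\epsilon)}\,d\epsilon \lesssim \sqrt{a}$ for $a\geq 1$ bounds the integral by $C'\sqrt{L\,\mathrm{diam}(\mathcal{W})\,d'/M_\ell}$. Multiplying by the prefactor $M_\ell\lambda_l/\sqrt{n_l}$ from Theorem \ref{thm1} then yields the $b_0\lambda_l\sqrt{M_\ell\,\mathrm{diam}(\mathcal{W})\,L\,d'/n_l}$ form stated in the Corollary, and the same computation applied to the $\beta$-term gives the $b_1\sqrt{M_\ell\,\mathrm{diam}(\mathcal{W})\,L\,d'/\sum_l n_l}$ contribution. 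The two concentration remainders $2(1-\beta)\sum_l\lambda_l M_\ell\sqrt{2t/n_l}$ and $2\beta M_\ell\sqrt{2t/\sum_l n_l}$ carry over verbatim from Theorem \ref{thm1}, so assembling all pieces recovers $\tilde{\epsilon}_{\boldsymbol{\lambda}}^{\beta}(n_1,\ldots,n_N;t)$.

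The main obstacle is the precise evaluation of the Dudley integral: the direct computation produces an additional logarithmic factor in $L\,\mathrm{diam}(\mathcal{W})\,d'/M_\ell$, which the stated bound absorbs into the universal constant $b_0$ (resp.\ $b_1$). I would handle this either by working in the practically relevant regime $L\,\mathrm{diam}(\mathcal{W})\geq M_\ell$, so that the logarithm is bounded by an absolute constant times $\log(d'+2)$ and can be folded into $b_0$, or by tracking the log factor explicitly and re-defining $b_0,b_1$ accordingly. Aside from this bookkeeping step, every other element of the proof is a direct specialization of Theorem \ref{thm1}, so no new probabilistic argument is required.
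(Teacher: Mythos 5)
Your proposal follows essentially the same route as the paper: its Lemma~\ref{lemma3} bounds $\mathcal{N}(\mathcal{F},M_\ell\epsilon,L^{\infty})\le\mathcal{N}(\mathcal{W},M_\ell\epsilon/L,\|\cdot\|)\le(1+\operatorname{diam}(\mathcal{W})L/(M_\ell\epsilon))^{d'}$ via the Lipschitz hypothesis and then evaluates the Dudley integral with $\log(1+x)\le x$ to get $2\sqrt{\operatorname{diam}(\mathcal{W})Ld'/M_\ell}$, which is exactly your second paragraph (plus the routine extension to the $\max_{\mathbf{s}}$ loss class for the $\beta$-term, handled by Lemma~\ref{lemma6}). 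The ``obstacle'' in your final paragraph is not actually one: the crude bound $\sqrt{\log(a/\epsilon)}\le\sqrt{a/\epsilon}$ already yields the stated $\sqrt{M_\ell\operatorname{diam}(\mathcal{W})Ld'/n_l}$ form with genuinely universal constants, so no logarithmic factor in $L\operatorname{diam}(\mathcal{W})d'/M_\ell$ ever arises or needs to be absorbed into $b_0$ or $b_1$.
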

\begin{proof}
Similar to the proof of Theorem \ref{thm1}.

By Lemma \ref{lemma3} and Lemma \ref{lemma6}, we have that with the probability at least  $1-2 e^{-t}>0$,

\begin{align}\label{thm2.2}
\begin{split}
& (1-\beta)\sum_{l=1}^N \lambda_l \left[ R_{l}(\widehat{\mathbf{w}})- \widehat{R}_{l}(\widehat{\mathbf{w}})\right]+\beta\left[R_{\mathcal{S}_{\Omega}}(\widehat{\mathbf{w}})-\widehat{R}_{\mathcal{S}_{\Omega}}(\widehat{\mathbf{w}})\right] \\
\leq & (1-\beta) \sum_{l=1}^N b_{0}\lambda_l \sqrt{\frac{M_{\ell} \operatorname{diam}(\mathcal{W}) L d^{\prime}}{n_l}} \\
& +\beta b_{1} \sqrt{\frac{M_{\ell}\operatorname{diam}(\mathcal{W}) L d^{\prime}}{\sum_{l=1}^N n_l}} \\
& +(1-\beta)\sum_{l=1}^N \lambda_{l} M_{\ell} \sqrt{\frac{2 t}{n_l}}+\beta M_{\ell} \sqrt{\frac{2 t}{\sum_{l=1}^N n_{l}}}.
\end{split}
\end{align}

Combining Eqs. \eqref{thm1.1}, \eqref{thm2.2} and \eqref{thm1.3}, we have that with the probability at least  $1-4 e^{-t}>0$ ,% for any  $s^{\prime}  \in \Omega$ ,
\begin{align}
\begin{split}
R^{\beta }_{\boldsymbol{\lambda}}(\widehat{\mathbf{w}} )-\min _{\mathbf{w} \in \mathcal{W}} R^{\beta }_{\boldsymbol{\lambda}}(\mathbf{w} ) \leq \tilde{\epsilon}_{\boldsymbol{\lambda}}^{\beta}(n_1,...,n_N; t)
\end{split}
\end{align}
where
\begin{align}
\begin{split}
\tilde{\epsilon}_{\boldsymbol{\lambda}}^{\beta}(n_1,...,n_N; t) & =(1-\beta) \sum_{l=1}^N b_{0}\lambda_l \sqrt{\frac{M_{\ell} \operatorname{diam}(\mathcal{W}) L d^{\prime}}{n_l}} \\
& +\beta b_{1} \sqrt{\frac{M_{\ell}\operatorname{diam}(\mathcal{W}) L d^{\prime}}{\sum_{l=1}^N n_l}} \\
& +2(1-\beta)\sum_{l=1}^N \lambda_{l} M_{\ell} \sqrt{\frac{2 t}{n_l}}+2\beta M_{\ell} \sqrt{\frac{2 t}{\sum_{l=1}^N n_{l}}},
\end{split}
\end{align}

here  $b_{0}$ and $b_{1}$  are uniform constants.
\end{proof}

\subsection{Proof of Theorem \ref{thm4}}
\begin{proof}
We first recall the notations as follows:
\begin{align}
R_{t}(\mathbf{w})=\mathbb{E} \ell\left(\mathbf{f}_{\mathbf{w}} ; X_t, Y_t \right) = \mathbb{E} \ \ell\left(\mathbf{f}_{\mathbf{w}}\circ \mathbf{G} ; S_t,C, Y\right),
\end{align}
\begin{align}
R_{l}(\mathbf{w})= \mathbb{E} \ \ell\left(\mathbf{f}_{\mathbf{w}}\circ \mathbf{G} ; S_l,C, Y\right),
\end{align}
\begin{align}
R_{\mathcal{S}_{\Omega}}(\mathbf{w}) = \mathbb{E} \max_{\mathbf{s}\in \mathcal{S}_{\Omega}} \ell\left(\mathbf{f}_{\mathbf{w}}\circ \mathbf{G} ; \mathbf{s},C, Y\right),
\end{align}
\begin{align}
R^{\beta}_{\boldsymbol{\lambda}}(\mathbf{w}) =  (1-\beta ) \sum_{l=1}^N \lambda_l R_{l}(\mathbf{w})+\beta R_{\mathcal{S}_{\Omega}}(\mathbf{w}).
\end{align}

Consider 
\begin{align}\label{eq::re1}
R_{t}(\widehat{\mathbf{w}})-R^{\beta }_{\boldsymbol{\lambda}}(\widehat{\mathbf{w}} ) =(1-\beta)\sum_{l=1}^N \lambda_l (R_{t}(\widehat{\mathbf{w}})- R_{l}(\widehat{\mathbf{w}}))+ \beta(R_{t}(\widehat{\mathbf{w}})-R_{\mathcal{S}_{\Omega}}(\widehat{\mathbf{w}})).
\end{align}
We have
\begin{align}\label{eq::re2}
R_{t}(\widehat{\mathbf{w}})- R_{l}(\widehat{\mathbf{w}}) = \mathbb{E} \ \ell\left(\mathbf{f}_{\widehat{\mathbf{w}}}\circ \mathbf{G} ; S_t,C, Y\right) - \mathbb{E} \ \ell\left(\mathbf{f}_{\widehat{\mathbf{w}}}\circ \mathbf{G} ; S_l,C, Y\right) \leq L_{c}\mathrm{W}_{c}\left(D_{S_{\mathrm{t}}}, D_{S_{l}}\right).
\end{align}
We set the largest risk in $\Omega$ w.r.t. $\mathbf{f}_{\mathbf{w}}$ is
\begin{align}
R_{\Omega}(\mathbf{w})=\max_{S_{\boldsymbol{\alpha}} \in \Omega}\mathbb{E} \ell\left(\mathbf{f}_{\mathbf{w}}\circ \mathbf{G} ; S_{\boldsymbol{\alpha}},C, Y\right),
\end{align}
where $\Omega$ stands for the uncertainty set  introduced in Eq. \eqref{omega}.

We have
\begin{align}
R_{\Omega}(\mathbf{w}) \leq R_{\mathcal{S}_{\Omega}}(\mathbf{w}),
\end{align}
then
\begin{align}\label{eq::re3}
\begin{split}
R_{t}(\widehat{\mathbf{w}})-R_{\mathcal{S}_{\Omega}}(\widehat{\mathbf{w}}) \leq R_{t}(\widehat{\mathbf{w}}) - R_{\Omega}(\widehat{\mathbf{w}}) = \mathbb{E} \ \ell\left(\mathbf{f}_{\widehat{\mathbf{w}}}\circ \mathbf{G} ; S_t,C, Y\right) - \max_{S_{\boldsymbol{\alpha}} \in \Omega}\mathbb{E} \ell\left(\mathbf{f}_{\widehat{\mathbf{w}}}\circ \mathbf{G} ; S_{\boldsymbol{\alpha}},C, Y\right).
\end{split}
\end{align}
We set
\begin{align}
S_{\boldsymbol{\alpha_{M}}} = {\arg\min}_{S_{\boldsymbol{\alpha}} \in \Omega}\mathrm{W}_{c}\left(D_{S_{t}}, D_{S_{\boldsymbol{\alpha}}}\right),
\end{align}
then
\begin{align}\label{eq::re4}
\begin{split}
\mathbb{E} \ \ell\left(\mathbf{f}_{\widehat{\mathbf{w}}}\circ \mathbf{G} ; S_t,C, Y\right) - \max_{S_{\boldsymbol{\alpha}} \in \Omega}\mathbb{E} \ell\left(\mathbf{f}_{\widehat{\mathbf{w}}}\circ \mathbf{G} ; S_{\boldsymbol{\alpha}},C, Y\right) \leq & \mathbb{E} \ \ell\left(\mathbf{f}_{\widehat{\mathbf{w}}}\circ \mathbf{G} ; S_t,C, Y\right) - \mathbb{E} \ell\left(\mathbf{f}_{\widehat{\mathbf{w}}}\circ \mathbf{G} ; S_{\boldsymbol{\alpha}_{M}},C, Y\right) \\ \leq & L_{c}\mathrm{W}_{c}\left(D_{S_{\mathrm{t}}}, D_{S_{\boldsymbol{\alpha}_{M}}}\right) =  \min_{S_{\boldsymbol{\alpha}} \in \Omega} L_{c}\mathrm{W}_{c}\left(D_{S_{\mathrm{t}}}, D_{S_{\boldsymbol{\alpha}}}\right).
\end{split}
\end{align}

Combining Eqs. \eqref{eq::re1}, \eqref{eq::re2}, \eqref{eq::re3} and \eqref{eq::re4}, we have
\begin{align}
R_{t}(\widehat{\mathbf{w}})-R^{\beta }_{\boldsymbol{\lambda}}(\widehat{\mathbf{w}} ) \leq (1-\beta)L_{c} \sum_{l=1}^N \lambda_l \mathrm{W}_{c}\left(D_{S_{\mathrm{t}}}, D_{S_{l}}\right)+ \beta L_{c} \min_{S_{\boldsymbol{\alpha}} \in \Omega} \mathrm{W}_{c}\left(D_{S_{\mathrm{t}}}, D_{S_{\boldsymbol{\alpha}}}\right).
\end{align}

Then by Theorem \ref{thm1}, we complete this proof.
\end{proof}
\subsection{Necessary Lammas}

\begin{lemma}\label{lemma1}
If  $0 \leq \ell (\mathbf{f_{w}} ; \mathbf{x}, y) \leq M_{\ell}$ , then with the probability at least  $1-e^{-t}>0$ , we have that for any  $\mathbf{w} \in \mathcal{W}$
\begin{align}
\begin{split}
\mathbb{E}_{(\mathbf{x}, y) \sim D_{X_l Y_l}} \ell (\mathbf{f_{w}} ; \mathbf{x}, y)-\frac{1}{n} \sum_{i=1}^{n} \ell\left(\mathbf{f_{w}} ; \mathbf{x}_l^{i}, y_l^{i}\right) \\ \leq \frac{b_{0} M_{\ell}}{\sqrt{n}} \int_{0}^{1} \sqrt{\log \mathcal{N}\left(\mathcal{F}, M_{\ell} \epsilon, L^{\infty}\right)} \mathrm{d} \epsilon+M_{\ell} \sqrt{\frac{2 t}{n}}
\end{split}
\end{align}
,where $b_{0}$ is a uniform constant.
\end{lemma}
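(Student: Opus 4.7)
The plan is to control the one-sided uniform deviation
\[
\Phi(z_1,\ldots,z_n) \;:=\; \sup_{\mathbf{w}\in\mathcal{W}}\!\Bigl[\mathbb{E}_{(\mathbf{x},y)\sim D_{X_l Y_l}}\ell(\mathbf{f}_{\mathbf{w}};\mathbf{x},y) \;-\; \tfrac{1}{n}\sum_{i=1}^n \ell(\mathbf{f}_{\mathbf{w}}; z_i)\Bigr],
\]
where $z_i=(\mathbf{x}_l^i,y_l^i)$, via the standard three-step recipe: concentration of $\Phi$ around its expectation, symmetrization to reduce that expectation to a Rademacher complexity, and Dudley's entropy integral to convert the Rademacher complexity into the covering-number term that appears on the right-hand side of the stated bound. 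Since the lemma is a one-sided uniform law of large numbers over the fixed function class $\mathcal{F}=\{\ell(\mathbf{f}_{\mathbf{w}};\cdot):\mathbf{w}\in\mathcal{W}\}$ with bounded losses, this is precisely the regime where each of the three tools is sharp.

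First, I would invoke McDiarmid's bounded-differences inequality on $\Phi$. Because $0\le \ell\le M_{\ell}$, replacing any single coordinate $z_i$ by an independent copy changes $\Phi$ by at most $M_{\ell}/n$, so $\sum_{i=1}^n c_i^2 \le M_{\ell}^2/n$. McDiarmid then yields, with probability at least $1-e^{-t}$, that $\Phi\le \mathbb{E}\Phi + M_{\ell}\sqrt{2t/n}$, which is exactly the additive stochastic term in the bound. Second, I would apply the standard symmetrization argument (introducing an independent ghost sample and i.i.d.\ Rademacher signs $\sigma_i\in\{\pm 1\}$) to obtain
\[
\mathbb{E}\Phi \;\le\; 2\,\mathbb{E}\,\sup_{\mathbf{w}\in\mathcal{W}}\tfrac{1}{n}\sum_{i=1}^n \sigma_i\,\ell(\mathbf{f}_{\mathbf{w}};z_i),
\]
i.e., $2\mathcal{R}_n(\mathcal{F})$, the empirical Rademacher complexity of $\mathcal{F}$.

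Third, I would bound $\mathcal{R}_n(\mathcal{F})$ by Dudley's chaining. Since $0\le \ell\le M_{\ell}$, every element of $\mathcal{F}$ lies in an $L^\infty$-ball of radius $M_{\ell}$, so the chain can be started at scale $M_{\ell}$. Dudley's integral bound gives
\[
\mathcal{R}_n(\mathcal{F}) \;\le\; \frac{b_0'}{\sqrt{n}}\int_{0}^{M_{\ell}}\sqrt{\log \mathcal{N}\!\left(\mathcal{F},\epsilon,L^{\infty}\right)}\,d\epsilon
\]
for a universal constant $b_0'$, where I use $L^\infty$ covers (which dominate the empirical $L^2$ covers required by chaining) to match the norm appearing in the statement. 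The substitution $\epsilon\mapsto M_{\ell}\epsilon$ rescales the integral to $\int_0^1 \sqrt{\log\mathcal{N}(\mathcal{F},M_{\ell}\epsilon,L^{\infty})}\,d\epsilon$ with an extra factor of $M_{\ell}$, reproducing the first term on the right-hand side with $b_0 = 2 b_0'$.

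The main obstacle is the chaining step: one has to set up the dyadic sequence of scales carefully so that the telescoping inner products are controlled by sub-Gaussian tail bounds (via Hoeffding on $\sigma_i$), and one has to verify that $L^\infty$ covers are legitimate substitutes for the $L^2(\mathbb{P}_n)$ covers naturally demanded by chaining, as well as that the chain can be truncated at scale $M_{\ell}$ thanks to boundedness of the loss. The McDiarmid and symmetrization steps are routine, and the precise value of the universal constant $b_0$ is immaterial since the statement only asserts existence of such a uniform constant.
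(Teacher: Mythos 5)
Your proof is correct and reaches the stated bound with essentially the same machinery the paper uses: Dudley's entropy integral for the covering-number term and McDiarmid's bounded-differences inequality for the $M_{\ell}\sqrt{2t/n}$ deviation term, with the truncation of the entropy integral at scale $M_{\ell}$ and the rescaling to $\int_0^1$ handled identically. The one place you diverge is the middle step: you symmetrize to pass through the empirical Rademacher complexity and then chain on the Rademacher process, whereas the paper skips symmetrization entirely and applies Dudley's integral directly to the centered empirical process $X_{\ell(\mathbf{f_w};\cdot)}$, whose increments it bounds in sub-Gaussian norm by $\frac{c_{0}}{\sqrt{n}}\left\|\ell(\mathbf{f_{w}};\cdot)-\ell(\mathbf{f_{w^{\prime}}};\cdot)\right\|_{L^{\infty}}$ via general Hoeffding (Vershynin, Proposition 2.6.1 and Lemma 2.6.8). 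Both routes are sound: yours costs an extra factor of $2$ that is absorbed into the universal constant $b_0$ and requires the standard but slightly fussy justification that $L^{\infty}$ covers dominate the empirical-$L^{2}$ covers naturally demanded by chaining on the Rademacher process, while the paper's direct route obtains the $L^{\infty}$ metric for free from the sub-Gaussian increment bound and so needs no such substitution. Neither difference affects the validity or the form of the final statement.
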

\begin{proof}
Let
\begin{align}
\begin{split}
X_{\ell (\mathbf{f_{w}} ; \cdot)}=\mathbb{E}_{(\mathbf{x}, y) \sim D_{X_{1} Y_l}} \ell (\mathbf{f_{w}} ; \mathbf{x}, y)-\frac{1}{n} \sum_{i=1}^{n} \ell\left(\mathbf{w} ; \mathbf{x}_l^{i}, y_l^{i}\right) .
\end{split}
\end{align}
Then it is clear that
\begin{align}
\begin{split}
\mathbb{E}_{S \sim D_{X_{1} Y_{1}}^{n}} X_{\ell (\mathbf{f_{w}} ; \cdot)}=0 .
\end{split}
\end{align}
By Proposition 2.6.1 and Lemma 2.6.8 in \citet{vershynin2018high},
\begin{align}
\begin{split}
\left\|X_{\ell (\mathbf{f_{w}} ; \cdot)}-X_{\ell\left(\mathbf{f_{w^{\prime}}} ; \cdot\right)}\right\|_{\Phi_{2}} \leq \frac{c_{0}}{\sqrt{n}}\left\|\ell (\mathbf{f_{w}} ; \cdot)-\ell\left(\mathbf{f_{w^{\prime}}} ; \cdot\right)\right\|_{L^{\infty}},
\end{split}
\end{align}
where  $\|\cdot\|_{\Phi_{2}}$  is the sub-gaussian norm and  $c_{0}$  is a uniform constant. Therefore, by Dudley's entropy integral \cite{vershynin2018high}, we have
\begin{align}
\begin{split}
&\mathbb{E}_{S \sim D_{X_{1} Y_{1}}^{n}} \sup _{\mathbf{w} \in \mathcal{W}} X_{\ell (\mathbf{f_{w}} ; \cdot)} \\ \leq &\frac{b_{0}}{\sqrt{n}} \int_{0}^{+\infty} \sqrt{\log \mathcal{N}\left(\mathcal{F}, \epsilon, L^{\infty}\right)} \mathrm{d} \epsilon,
\end{split}
\end{align}
where  $b_{0}$  is a uniform constant and
\begin{align}
\mathcal{F}=\{\ell (\mathbf{f_{w}} ; \cdot): \mathbf{w} \in \mathcal{W}\} .
\end{align}
Note that
\begin{align}
\begin{split}
\mathbb{E}_{S \sim D_{X_{1} Y_{1}}^{n}} \sup _{\mathbf{w} \in \mathcal{W}} X_{\ell (\mathbf{f_{w}} ; \cdot)} & \leq \frac{b_{0}}{\sqrt{n}} \int_{0}^{+\infty} \sqrt{\log\mathcal{N}\left(\mathcal{F}, \epsilon, L^{\infty}\right)} \mathrm{d} \epsilon \\
& =\frac{b_{0}}{\sqrt{n}} \int_{0}^{M_{\ell}} \sqrt{\log\mathcal{N}\left(\mathcal{F}, \epsilon, L^{\infty}\right)} \mathrm{d} \epsilon \\
& =\frac{b_{0}}{\sqrt{n}} M_{\ell} \int_{0}^{1} \sqrt{\log \mathcal{N}\left(\mathcal{F}, M_{\ell} \epsilon, L^{\infty}\right)} \mathrm{d} \epsilon
\end{split}
\end{align}
Then, similar to the proof of Lemma \ref{lemma2}, we use the McDiarmid's Inequality, then with the probability at least  $1-e^{-t}>0$ , for any  $\mathbf{w} \in \mathcal{W}$ ,
\begin{align}
X_{\ell (\mathbf{f_{w}} ;))} \leq \frac{b_{0}}{\sqrt{n}} M_{\ell} \int_{0}^{1} \sqrt{\log \mathcal{N}\left(\mathcal{F}, M_{\ell} \epsilon, L^{\infty}\right)} \mathrm{d} \epsilon+M_{\ell} \sqrt{\frac{2 t}{n}} .
\end{align}
\end{proof}

\begin{lemma}\label{lemma2}
If  $0 \leq \ell (\mathbf{f_{w}} ; \mathbf{x}, y) \leq M_{\ell}$ , then for a fixed $ \mathbf{w}_{0} \in \mathcal{W}$ , with the probability at least  $1-e^{-t}>0$ ,
\begin{align}
\begin{split}
\frac{1}{n} \sum_{i=1}^{n} \ell\left(\mathbf{w}_{0} ; \mathbf{x}_l^{i}, y_l^{i}\right)-\mathbb{E}_{(\mathbf{x}, y) \sim D_{X_l Y_l}} \ell\left(\mathbf{w}_{0} ; \mathbf{x}, y\right) \leq M_{\ell} \sqrt{\frac{2 t}{n}} .
\end{split}
\end{align}
\end{lemma}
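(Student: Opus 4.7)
The statement asserts a one-sided concentration bound for an empirical average of i.i.d.\ bounded random variables evaluated at a single, fixed parameter $\mathbf{w}_0$. The plan is to apply a standard concentration inequality of either Hoeffding or McDiarmid type; since the reference proof of Lemma~\ref{lemma1} explicitly invokes McDiarmid's inequality and then says ``similar to the proof of Lemma~\ref{lemma2}'', I would proceed via McDiarmid to keep the argument consistent with how the other lemma is phrased.

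The key setup is straightforward. Define the random variables $Z_i = \ell(\mathbf{w}_0; \mathbf{x}_l^{i}, y_l^{i})$ for $i = 1, \dots, n$, which are i.i.d.\ because the samples $(\mathbf{x}_l^{i}, y_l^{i})$ are i.i.d.\ from $D_{X_l Y_l}$, and which take values in $[0, M_\ell]$ by the boundedness assumption on $\ell$. Let
\[
F(\mathbf{x}_l^{1}, y_l^{1}, \dots, \mathbf{x}_l^{n}, y_l^{n}) = \frac{1}{n}\sum_{i=1}^{n} Z_i.
\]
Since altering a single argument of $F$ changes the average by at most $M_\ell / n$, the bounded-differences condition with constants $c_i = M_\ell/n$ is satisfied. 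By McDiarmid's inequality,
\[
\Pr\!\left(F - \mathbb{E}\,F \geq \varepsilon\right) \leq \exp\!\left(-\frac{2\varepsilon^2}{n (M_\ell / n)^2}\right) = \exp\!\left(-\frac{2 n \varepsilon^2}{M_\ell^2}\right).
\]
Solving the right-hand side equal to $e^{-t}$ yields $\varepsilon = M_\ell \sqrt{t/(2n)}$, which (relaxed loosely) is upper-bounded by $M_\ell \sqrt{2 t / n}$, giving exactly the statement of the lemma with probability at least $1 - e^{-t}$.

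There is essentially no hard step: the argument is a textbook application of bounded-differences concentration once the empirical average is identified. The only things to be a bit careful about are (i)~ensuring the direction of the inequality is correct (one-sided upper tail on the empirical minus population, which is the form McDiarmid delivers directly), and (ii)~matching the constants in the bound the authors state. The stated $M_\ell\sqrt{2t/n}$ is slightly looser than the sharp $M_\ell\sqrt{t/(2n)}$ obtained above, so no work is needed to meet it. Because $\mathbf{w}_0$ is fixed, no union bound or covering-number argument is required here; that complication is what distinguishes this lemma from Lemma~\ref{lemma1}, where the supremum over $\mathbf{w} \in \mathcal{W}$ must be controlled via Dudley's entropy integral.
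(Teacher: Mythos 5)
Your proof is correct: for a fixed $\mathbf{w}_0$ the empirical average of the i.i.d.\ losses $Z_i=\ell(\mathbf{w}_0;\mathbf{x}_l^i,y_l^i)\in[0,M_\ell]$ satisfies the bounded-differences condition with constants $M_\ell/n$, McDiarmid gives the one-sided tail $\exp(-2n\varepsilon^2/M_\ell^2)$, and solving for $\varepsilon$ yields $M_\ell\sqrt{t/(2n)}$, which is even sharper than the stated $M_\ell\sqrt{2t/n}$. The paper reaches the same conclusion by a noticeably more roundabout route: it first invokes the Wasserstein-DRO duality of Sinha et al., $\sup_{\mathrm{W}_c(D_{X'},D_{X_A})\le\rho}\mathbb{E}_{D_{X'}}\ell=\inf_{\gamma\ge0}[\gamma\rho+\mathbb{E}_{D_{X_A}}\phi_\gamma]$, applies McDiarmid to the robust surrogate $\phi_{\gamma_\epsilon}$, and only at the very end sets $\rho=0$ so that the robust supremum collapses to the ordinary expectation. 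The underlying concentration step is identical to yours; the duality scaffolding buys nothing for this particular statement (it appears to be inherited from a proof of a more general result) and arguably obscures the fact that the lemma is a textbook Hoeffding/McDiarmid bound. Your direct argument is cleaner, and your closing observation --- that the absence of a supremum over $\mathbf{w}$ is exactly what spares you the covering-number and Dudley-integral machinery of Lemma~1 --- correctly identifies the only substantive distinction between the two lemmas.
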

\begin{proof}
By \cite{sinha2017certifying}, it is clear that
\begin{align}
\begin{split}
\sup _{\mathrm{w}_{c}\left(D_{X^{\prime}}, D_{X_{\mathrm{A}}}\right) \leq \rho} \mathbb{E}_{\mathbf{x} \sim D_{X^{\prime}}} \ell\left(\mathbf{f}_{\mathbf{w}_{0}} ; \mathbf{x}\right)=\inf _{\gamma \geq 0}\left[\gamma \rho+\mathbb{E}_{\mathbf{x} \sim D_{X_{\mathrm{A}}}} \phi_{\gamma}\left(\mathbf{w}_{0} ; \mathbf{x}\right)\right]
\end{split}
\end{align}
Therefore, for each $\epsilon>0$ , there exists a constant  $\gamma_{\epsilon} \geq 0 $ such that
\begin{align}
\begin{split}
\gamma_{\epsilon} \rho+\mathbb{E}_{\mathbf{x} \sim D_{X_{\mathrm{A}}}} \phi_{\gamma_{\epsilon}}\left(\mathbf{w}_{0} ; \mathbf{x}\right) \leq \sup _{\mathrm{w}_{c}\left(D_{X^{\prime}, D_{X_{\mathrm{A}}}}\right) \leq \rho} \mathbb{E}_{\mathbf{x} \sim D_{X^{\prime}}} \ell\left(\mathbf{f}_{\mathbf{w}_{0}} ; \mathbf{x}\right)+\epsilon .
\end{split}
\end{align}
Combining the above inequality and McDiarmid's Inequality, then with the probability at least
\begin{align}
\begin{split}
1-\exp \left(\frac{-\epsilon_{0}^{2} m}{2 M_{\ell \mathrm{OE}}^{2}}\right)>0,
\end{split}
\end{align}
we have
\begin{align}
\begin{split}
\mathbb{E}_{\mathbf{x} \sim \widehat{D}_{X_{\mathrm{A}}}} \phi_{\gamma_{\epsilon}}\left(\mathbf{w}_{0} ; \mathbf{x}\right) \leq \mathbb{E}_{\mathbf{x} \sim D_{X_{\mathrm{A}}}} \phi_{\gamma_{\epsilon}}\left(\mathbf{w}_{0} ; \mathbf{x}\right)+\epsilon_{0}
\end{split}
\end{align}
If we set  $t=\epsilon_{0}^{2} m / 2 M_{\ell}^{2}$ , then
\begin{align}
\begin{split}
\epsilon_{0}=M_{\ell} \sqrt{\frac{2 t}{m}}
\end{split}
\end{align}
Hence, with the probability at least  $1-e^{-t}>0$ , we have
\begin{align}
\begin{split}
\gamma_{\epsilon} \rho+\mathbb{E}_{\mathbf{x} \sim \widehat{D}_{X_{\mathrm{A}}}} \phi_{\gamma_{\epsilon}}\left(\mathbf{w}_{0} ; \mathbf{x}\right) \leq \sup _{\mathrm{w}_{c}\left(D_{X^{\prime}}, D_{X_{\mathrm{A}}}\right) \leq \rho} \mathbb{E}_{\mathbf{x} \sim D_{X^{\prime}}} \ell\left(\mathbf{f}_{\mathbf{w}_{0}} ; \mathbf{x}\right)+\epsilon+M_{\ell} \sqrt{\frac{2 t}{m}},
\end{split}
\end{align}
which implies that with the probability at least  $1-e^{-t}>0$ ,
\begin{align}
\begin{split}
\sup _{\mathrm{W}_{c}\left(D_{X^{\prime}}, \widehat{D}_{X_{\mathrm{A}}}\right) \leq \rho} \mathbb{E}_{\mathbf{x} \sim D_{X^{\prime}}} \ell\left(\mathbf{f}_{\mathbf{w}_{0}} ; \mathbf{x}\right) \leq \sup _{\mathrm{W}_{c}\left(D_{X^{\prime}}, D_{X_{\mathrm{A}}}\right) \leq \rho} \mathbb{E}_{\mathbf{x} \sim D_{X^{\prime}}} \ell\left(\mathbf{f}_{\mathbf{w}_{0}} ; \mathbf{x}\right)+\epsilon+M_{\ell} \sqrt{\frac{2 t}{m}},
\end{split}
\end{align}
because
\begin{align}
\begin{split}
\gamma_{\epsilon} \rho+\mathbb{E}_{\mathbf{x} \sim \widehat{D}_{X_{\mathrm{A}}}} \phi_{\gamma_{\epsilon}}\left(\mathbf{w}_{0} ; \mathbf{x}\right) \geq \sup _{\mathrm{W}_{c}\left(D_{X^{\prime}}, \widehat{D}_{X_{\mathrm{A}}}\right) \leq \rho} \mathbb{E}_{\mathbf{x} \sim D_{X^{\prime}}} \ell\left(\mathbf{f}_{\mathbf{w}_{0}} ; \mathbf{x}\right)
\end{split}
\end{align}
By setting  $\epsilon=M_{\ell} \sqrt{2 t / m}$ and $\rho = 0$, we complete this proof.

\end{proof}

\begin{lemma}\label{lemma3}
If

$\bullet$ $0 \leq \ell (\mathbf{f_{w}} ; \mathbf{x}, y) \leq M_{\ell}$ ;

$\bullet$ $\ell(\cdot ; \mathbf{x}, y)$  is $L$-Lipschitz w.r.t. norm $\| \cdot \|$, i.e., for any  $(\mathbf{x}, y) \in \mathcal{X} \times \mathcal{Y}$ , and  $\mathbf{w}, \mathbf{w}^{\prime} \in \mathcal{W} $,
\begin{align}
\begin{split}
\left|\ell (\mathbf{f_{w}} ; \mathbf{x}, y)-\ell\left(\mathbf{f_{w^{\prime}}} ; \mathbf{x}, y\right)\right| \leq L\left\|\mathbf{w}-\mathbf{w}^{\prime}\right\|,
\end{split}
\end{align}

$\bullet$ the parameter space  $\mathcal{W} \subset \mathbb{R}^{d^{\prime}}$  satisfies that
\begin{align}
\operatorname{diam}(\mathcal{W})=\sup _{\mathbf{w}, \mathbf{w}^{\prime} \in \mathcal{W}}\left\|\mathbf{w}-\mathbf{w}^{\prime}\right\|<+\infty ,
\end{align}
then with the probability at least  $1-e^{-t}>0$ , we have that for any  $\mathbf{w} \in \mathcal{W}$ ,

\begin{align}
\begin{split}
& \mathbb{E}_{(\mathbf{x}, y) \sim D_{X_l Y_l}} \ell (\mathbf{f_{w}} ; \mathbf{x}, y)-\frac{1}{n} \sum_{i=1}^{n} \ell\left(\mathbf{f_{w}} ; \mathbf{x}_l^{i}, y_l^{i}\right) \\
\leq & b_{0} \sqrt{\frac{M_{\ell} \operatorname{diam}(\mathcal{W}) L d^{\prime}}{n}}+M_{\ell} \sqrt{\frac{2 t}{n}}
\end{split}
\end{align}
where $b_{0}$  is a uniform constant.
\end{lemma}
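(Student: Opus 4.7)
The plan is to derive Lemma 3 directly from Lemma 1 by replacing the abstract metric-entropy integral over the function class $\mathcal{F}$ with a concrete volumetric bound that uses the Lipschitz assumption on $\ell$ and the finite diameter of $\mathcal{W}$. The deviation term $M_\ell \sqrt{2t/n}$ carries over unchanged from Lemma 1, so the real work is to upper-bound $\int_0^1 \sqrt{\log \mathcal{N}(\mathcal{F}, M_\ell \epsilon, L^\infty)}\,d\epsilon$ by something of order $\sqrt{\operatorname{diam}(\mathcal{W}) L d'/M_\ell}$, after which multiplication by $b_0 M_\ell/\sqrt{n}$ produces the stated $\sqrt{M_\ell L \operatorname{diam}(\mathcal{W}) d'/n}$ rate.

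First I would translate covers of $\mathcal{F}$ into covers of $\mathcal{W}$: since $\ell(\mathbf{f}_{(\cdot)};\mathbf{x},y)$ is $L$-Lipschitz in $\mathbf{w}$, any $(r/L)$-net of $\mathcal{W}$ in $\|\cdot\|$ induces an $r$-net of $\mathcal{F}$ in $L^\infty$, giving $\mathcal{N}(\mathcal{F}, M_\ell\epsilon, L^\infty) \leq \mathcal{N}(\mathcal{W}, M_\ell\epsilon/L, \|\cdot\|)$. Then I would apply the standard volumetric bound $\mathcal{N}(\mathcal{W}, r, \|\cdot\|) \leq (3\operatorname{diam}(\mathcal{W})/r)^{d'}$ for bounded sets in $\mathbb{R}^{d'}$, yielding $\log \mathcal{N}(\mathcal{F}, M_\ell\epsilon, L^\infty) \leq d' \log\bigl(3L\operatorname{diam}(\mathcal{W})/(M_\ell \epsilon)\bigr)$. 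Setting $A := 3L\operatorname{diam}(\mathcal{W})/M_\ell$ and using $\sqrt{a+b} \leq \sqrt{a}+\sqrt{b}$, the entropy integral splits as $\sqrt{d'}\bigl(\sqrt{\log A} + \int_0^1 \sqrt{\log(1/\epsilon)}\,d\epsilon\bigr)$, and the remaining integral is a universal constant.

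The main obstacle is matching the specific algebraic form $\sqrt{M_\ell L \operatorname{diam}(\mathcal{W}) d'/n}$ rather than the sharper $\sqrt{M_\ell^2 d' \log(L\operatorname{diam}(\mathcal{W})/M_\ell)/n}$ that the chaining argument naturally produces. To bridge the two, I would invoke $\log A \leq A$, so that $M_\ell \sqrt{\log A} \leq \sqrt{M_\ell \cdot M_\ell A} = \sqrt{3\,M_\ell L \operatorname{diam}(\mathcal{W})}$; the residual constant-order contribution picked up from $\int_0^1 \sqrt{\log(1/\epsilon)}\,d\epsilon$ is of order $M_\ell \sqrt{d'/n}$ and can be absorbed into either the first term or the deviation term $M_\ell\sqrt{2t/n}$ by enlarging the universal constant $b_0$. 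This substitution is lossy but is exactly the step that turns the logarithmic dependence into the clean polynomial form advertised in the statement; once it is in place, adding the Lemma 1 tail contribution $M_\ell\sqrt{2t/n}$ yields the lemma.
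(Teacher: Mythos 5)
Your overall strategy coincides with the paper's: reduce covering numbers of $\mathcal{F}$ to covering numbers of $\mathcal{W}$ via the $L$-Lipschitz assumption, apply a volumetric bound for a bounded subset of $\mathbb{R}^{d'}$, control the Dudley entropy integral, and feed the result into Lemma 1. The first two steps are fine. The gap is in how you evaluate the entropy integral. By splitting $\log(A/\epsilon)$ into $\log A+\log(1/\epsilon)$ you produce two contributions: a main term of order $M_\ell\sqrt{d'\log A}/\sqrt{n}$, which you convert to $\sqrt{M_\ell L\operatorname{diam}(\mathcal{W})d'/n}$ via $\log A\le A$ (that part is fine), and a residual of order $M_\ell\sqrt{d'/n}$ coming from $\int_0^1\sqrt{\log(1/\epsilon)}\,d\epsilon$. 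Your claim that this residual can be absorbed "into either the first term or the deviation term by enlarging the universal constant" does not hold. Absorbing it into the first term requires $M_\ell\sqrt{d'/n}\lesssim\sqrt{M_\ell L\operatorname{diam}(\mathcal{W})d'/n}$, i.e.\ $M_\ell\lesssim L\operatorname{diam}(\mathcal{W})$, which is not among the hypotheses: $M_\ell$ bounds the loss itself, not its oscillation over $\mathcal{W}$, and can be arbitrarily large relative to $L\operatorname{diam}(\mathcal{W})$. Absorbing it into $M_\ell\sqrt{2t/n}$ requires $d'\lesssim t$, but $t$ is a free confidence parameter unrelated to the dimension, and that term appears in the statement with a fixed constant that a "uniform constant" cannot be hidden in.

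The fix is simply a reordering of your own inequalities, and it is what the paper does: linearize the logarithm \emph{inside} the integrand before integrating. With $\mathcal{N}(\mathcal{F},M_\ell\epsilon,L^\infty)\le\mathcal{N}(\mathcal{W},M_\ell\epsilon/L,\|\cdot\|)\le\bigl(1+\operatorname{diam}(\mathcal{W})L/(M_\ell\epsilon)\bigr)^{d'}$ and $\log(1+x)\le x$, one gets
\[
\int_0^1\sqrt{\log\mathcal{N}\left(\mathcal{F},M_\ell\epsilon,L^\infty\right)}\,d\epsilon
\le\sqrt{d'}\int_0^1\sqrt{\frac{\operatorname{diam}(\mathcal{W})L}{M_\ell\epsilon}}\,d\epsilon
=2\sqrt{\frac{\operatorname{diam}(\mathcal{W})Ld'}{M_\ell}},
\]
a single term of exactly the advertised form with no residual; multiplying by $b_0M_\ell/\sqrt{n}$ from Lemma 1 gives $2b_0\sqrt{M_\ell\operatorname{diam}(\mathcal{W})Ld'/n}$, and the deviation term $M_\ell\sqrt{2t/n}$ carries over unchanged.
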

\begin{proof}
The proof is similar to Corollary 1 in \citet{sinha2017certifying}. Note that
\begin{align}
\begin{split}
\mathcal{F}=\{\ell (\mathbf{f_{w}} ; \mathbf{x}, y): \mathbf{w} \in \mathcal{W}\},
\end{split}
\end{align}
and  $\ell(\cdot ; \mathbf{x}, y)$  is  $L$-Lipschitz w.r.t. norm  $\|\cdot\|$ , therefore,

\begin{align}
\begin{split}
\mathcal{N}\left(\mathcal{F}, M_{\ell} \epsilon, L^{\infty}\right) & \leq \mathcal{N}\left(\mathcal{W}, M_{\ell} \epsilon / L,\|\cdot\|\right) \\
& \leq\left(1+\frac{\operatorname{diam}(\mathcal{W}) L}{M_{\ell} \epsilon}\right)^{d^{\prime}},
\end{split}
\end{align}

which implies that

\begin{align}
\begin{split}
& \int_{0}^{1} \sqrt{\log \left(\mathcal{N}\left(\mathcal{F}, M_{\ell} \epsilon, L^{\infty}\right)\right.} \mathrm{d} \epsilon \\
\leq &\sqrt{d^{\prime}} \int_{0}^{1} \sqrt{\log \left(1+\frac{\operatorname{diam}(\mathcal{W}) L}{M_{\ell} \epsilon}\right)} \mathrm{d} \epsilon \\
\leq &\sqrt{d^{\prime}} \int_{0}^{1} \sqrt{\frac{\operatorname{diam}(\mathcal{W}) L}{M_{\ell} \epsilon}} \mathrm{d} \epsilon=2 \sqrt{\frac{\operatorname{diam}(\mathcal{W}) L d^{\prime}}{M_{\ell}}}. \\
\end{split}
\end{align}

By Lemma \ref{lemma1}, we obtain this result.
\end{proof}

\begin{lemma}\label{lemma4}
If  $0 \leq \ell (\mathbf{f}_{\mathbf{w}}\circ \mathbf{G} ; \mathbf{s},\mathbf{c}, y) \leq M_{\ell}$ , then with the probability at least  $1-e^{-t}>0$ , we have that for any  $\mathbf{w} \in \mathcal{W}$
\begin{align}
\begin{split}
\mathbb{E}_{(\mathbf{x}, y) \sim D_{X_l Y_l}} \max_{\mathbf{s}\in \mathcal{S}_{\Omega}}\ell (\mathbf{f}_{\mathbf{w}}\circ \mathbf{G} ; \mathbf{s},\mathbf{c}, y)-\frac{1}{n} \sum_{i=1}^{n} \max_{\mathbf{s}\in \mathcal{S}_{\Omega}}\ell\left(\mathbf{f}_{\mathbf{w}}\circ \mathbf{G} ; \mathbf{s},\mathbf{c}_l^{i}, y_l^{i}\right) \\ \leq \frac{b_{0} M_{\ell}}{\sqrt{n}} \int_{0}^{1} \sqrt{\log \mathcal{N}\left(\mathcal{F}, M_{\ell} \epsilon, L^{\infty}\right)} \mathrm{d} \epsilon+M_{\ell} \sqrt{\frac{2 t}{n}}
\end{split}
\end{align}
,where $b_{0}$ is a uniform constant.
\end{lemma}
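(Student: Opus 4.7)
The plan is to reduce Lemma \ref{lemma4} to the same machinery used in Lemma \ref{lemma1}, by passing to the ``max-loss'' function class. Concretely, I would introduce
\begin{equation*}
\tilde{\mathcal{F}} = \Bigl\{ g_{\mathbf{w}}(\mathbf{c},y) := \max_{\mathbf{s}\in\mathcal{S}_{\Omega}} \ell(\mathbf{f}_{\mathbf{w}}\circ\mathbf{G}; \mathbf{s},\mathbf{c},y) : \mathbf{w}\in\mathcal{W}\Bigr\},
\end{equation*}
and study
\begin{equation*}
X_{g_{\mathbf{w}}} = \mathbb{E}\, g_{\mathbf{w}}(C,Y) - \frac{1}{n}\sum_{i=1}^{n} g_{\mathbf{w}}(\mathbf{c}_l^i,y_l^i)
\end{equation*}
uniformly over $\mathbf{w}\in\mathcal{W}$. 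The goal is to show that $X_{g_{\mathbf{w}}}$ enjoys exactly the same Dudley-entropy and McDiarmid bounds as $X_{\ell(\mathbf{f}_{\mathbf{w}};\cdot)}$ did in Lemma \ref{lemma1}, with the covering number of $\mathcal{F}$ still controlling everything.

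First I would check that boundedness is preserved: since $0\le \ell\le M_{\ell}$ pointwise, taking the supremum in $\mathbf{s}$ still yields $0\le g_{\mathbf{w}}\le M_{\ell}$, so the bounded-differences constant for McDiarmid remains $M_{\ell}/n$ and the sub-Gaussian increment bound of Proposition 2.6.1 / Lemma 2.6.8 of \citet{vershynin2018high} still applies with the same constant. Then the key structural step is the covering-number reduction: using the elementary inequality $|\max_a f(a)-\max_a g(a)|\le \sup_a |f(a)-g(a)|$, I get
\begin{equation*}
\|g_{\mathbf{w}} - g_{\mathbf{w}'}\|_{L^{\infty}} \le \bigl\|\ell(\mathbf{f}_{\mathbf{w}}\circ\mathbf{G};\cdot) - \ell(\mathbf{f}_{\mathbf{w}'}\circ\mathbf{G};\cdot)\bigr\|_{L^{\infty}},
\end{equation*}
so any $\epsilon$-cover of $\mathcal{F}$ pulls back to an $\epsilon$-cover of $\tilde{\mathcal{F}}$, giving $\mathcal{N}(\tilde{\mathcal{F}},\epsilon,L^{\infty})\le \mathcal{N}(\mathcal{F},\epsilon,L^{\infty})$.

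Once this covering bound is in hand, the rest is essentially a transcription of the proof of Lemma \ref{lemma1}. I would apply Dudley's entropy integral to bound $\mathbb{E}\sup_{\mathbf{w}\in\mathcal{W}} X_{g_{\mathbf{w}}}$ by
\begin{equation*}
\frac{b_0}{\sqrt{n}}\int_0^{+\infty}\sqrt{\log\mathcal{N}(\tilde{\mathcal{F}},\epsilon,L^{\infty})}\,\mathrm{d}\epsilon \le \frac{b_0 M_{\ell}}{\sqrt{n}}\int_0^1 \sqrt{\log \mathcal{N}(\mathcal{F},M_{\ell}\epsilon,L^{\infty})}\,\mathrm{d}\epsilon,
\end{equation*}
truncating the integral at $M_{\ell}$ as in Lemma \ref{lemma1}. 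Finally, McDiarmid's inequality applied to $\sup_{\mathbf{w}\in\mathcal{W}} X_{g_{\mathbf{w}}}$ (viewed as a function of the $n$ i.i.d. samples, each contributing at most $M_{\ell}/n$ to the variation) upgrades this expected bound to a high-probability bound with the usual $M_{\ell}\sqrt{2t/n}$ deviation term, yielding the claim.

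The only place requiring genuine thought is the covering-number step, since one might worry that the supremum over $\mathbf{s}\in\mathcal{S}_{\Omega}$ enlarges the effective complexity. The observation $|\max-\max|\le \sup|{\cdot}{-}{\cdot}|$ disposes of this cleanly: the $\mathbf{w}$-dependence of $g_{\mathbf{w}}$ is no richer (in $L^{\infty}$) than that of $\ell(\mathbf{f}_{\mathbf{w}}\circ\mathbf{G};\cdot)$ itself, so no additional capacity cost is incurred despite $\mathcal{S}_{\Omega}$ possibly being large. Everything else is a routine re-run of the Lemma \ref{lemma1} argument.
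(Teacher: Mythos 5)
Your proposal is correct and follows essentially the same route as the paper, whose entire proof of this lemma is the single sentence ``The proof is similar to the proof of Lemma \ref{lemma1}.'' You supply the one genuinely non-obvious detail that the paper leaves implicit --- namely that the contraction $|\max_{\mathbf{s}} f - \max_{\mathbf{s}} g| \leq \sup_{\mathbf{s}}|f-g|$ lets the covering number of the max-loss class be controlled by $\mathcal{N}(\mathcal{F},\epsilon,L^{\infty})$, so the Dudley--McDiarmid argument of Lemma \ref{lemma1} transfers verbatim.
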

\begin{proof}
    The proof is similar to the proof of Lemma \ref{lemma1}.
\end{proof}

\begin{lemma}\label{lemma5}
If  $0 \leq \ell (\mathbf{f}_{\mathbf{w}}\circ \mathbf{G} ; \mathbf{s},\mathbf{c}, y) \leq M_{\ell}$ , then for a fixed $ \mathbf{w}_{0} \in \mathcal{W}$ , with the probability at least  $1-e^{-t}>0$ ,
\begin{align}
\begin{split}
\frac{1}{n} \sum_{i=1}^{n}  \max_{\mathbf{s}\in \mathcal{S}_{\Omega}}\ell\left(\mathbf{f}_{\mathbf{w}}\circ \mathbf{G} ; \mathbf{s},\mathbf{c}_l^{i}, y_l^{i}\right) - \mathbb{E}_{(\mathbf{x}, y) \sim D_{X_l Y_l}} \max_{\mathbf{s}\in \mathcal{S}_{\Omega}}\ell (\mathbf{f}_{\mathbf{w}}\circ \mathbf{G} ; \mathbf{s},\mathbf{c}, y) \leq M_{\ell} \sqrt{\frac{2 t}{n}} .
\end{split}
\end{align}
\end{lemma}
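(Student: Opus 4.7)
The plan is to treat this as a one-shot (pointwise-in-$\mathbf{w}$) concentration statement, since $\mathbf{w}_0$ is fixed. Define the function
\begin{equation*}
h(\mathbf{c},y) \;:=\; \max_{\mathbf{s}\in \mathcal{S}_{\Omega}} \ell\bigl(\mathbf{f}_{\mathbf{w}_0}\circ \mathbf{G};\mathbf{s},\mathbf{c},y\bigr).
\end{equation*}
By hypothesis $0 \le \ell \le M_\ell$ pointwise in $\mathbf{s}$, so taking the supremum preserves the two-sided bound and $h(\mathbf{c},y) \in [0, M_\ell]$. Thus the quantity we must control is an empirical mean of i.i.d. bounded random variables minus its expectation.

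Next, I would verify the i.i.d. structure. The semantic coordinates $\mathbf{c}_l^i$ are deterministic functions of $\mathbf{x}_l^i$ through $\mathbf{G}^{-1}$, and $(\mathbf{x}_l^i, y_l^i) \sim D_{X_l Y_l}$ are i.i.d. by assumption, so $\{h(\mathbf{c}_l^i, y_l^i)\}_{i=1}^n$ are i.i.d. random variables in $[0, M_\ell]$, whose common mean is exactly $\mathbb{E}_{(\mathbf{x},y)\sim D_{X_l Y_l}} \max_{\mathbf{s}\in\mathcal{S}_\Omega}\ell(\mathbf{f}_{\mathbf{w}_0}\circ \mathbf{G};\mathbf{s},\mathbf{c},y)$.

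Then I apply McDiarmid's inequality (equivalently Hoeffding) to $\tfrac1n \sum_i h(\mathbf{c}_l^i,y_l^i)$: each coordinate has bounded difference $M_\ell/n$, yielding a one-sided sub-Gaussian tail of the form $\exp(-cn\epsilon^2/M_\ell^2)$. Setting this probability equal to $e^{-t}$ and inverting gives a deviation bound of the form $M_\ell\sqrt{2t/n}$, matching the displayed inequality (with constants absorbed in the same convention used in Lemma~\ref{lemma2}). Alternatively, one can literally mirror the Sinha-style proof of Lemma~\ref{lemma2} and specialize the Wasserstein radius to $\rho=0$, so the dual representation collapses to the ordinary expectation, leaving only the Hoeffding step.

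The main things to watch out for are minor and not genuine obstacles. First, one must note that the result is \emph{pointwise} in $\mathbf{w}_0$, not uniform in $\mathbf{w}$, which is why no covering number, symmetrization, or chaining argument is required (in contrast to Lemma~\ref{lemma4}). Second, measurability of $h$ as a supremum over the (possibly uncountable) set $\mathcal{S}_{\Omega}$ is standard under mild regularity such as continuity of $\ell$ in $\mathbf{s}$ together with separability of $\mathcal{S}_{\Omega}$, both of which hold in the setups used to instantiate MODE (Fourier and AdaIN constructions).
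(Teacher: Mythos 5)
Your proposal is correct and follows essentially the same route as the paper: the paper's proof of this lemma simply defers to the proof of its Lemma 2, whose substantive content is exactly the McDiarmid/Hoeffding step you carry out on the bounded i.i.d.\ variables $h(\mathbf{c}_l^i,y_l^i)\in[0,M_\ell]$ at a fixed $\mathbf{w}_0$ (the Wasserstein-duality wrapper there collapses at $\rho=0$, as you note). Your version is if anything cleaner, since it states the concentration argument directly and flags the measurability of the supremum, which the paper leaves implicit.
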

\begin{proof}
    The proof is similar to the proof of Lemma \ref{lemma2}.
\end{proof}

\begin{lemma}\label{lemma6}
If

$\bullet$ $0 \leq \ell (\mathbf{f}_{\mathbf{w}}\circ \mathbf{G} ; \mathbf{s},\mathbf{c}, y) \leq M_{\ell}$ ;

$\bullet$ $\ell(\cdot ; \mathbf{x}, y)$  is $L$-Lipschitz w.r.t. norm $\| \cdot \|$, i.e., for any  $(\mathbf{x}, y) \in \mathcal{X} \times \mathcal{Y}$ , and  $\mathbf{w}, \mathbf{w}^{\prime} \in \mathcal{W} $,
\begin{align}\label{6.1}
\begin{split}
\left|\ell (\mathbf{f_{w}} ; \mathbf{x}, y)-\ell\left(\mathbf{f_{w^{\prime}}} ; \mathbf{x}, y\right)\right| \leq L\left\|\mathbf{w}-\mathbf{w}^{\prime}\right\|,
\end{split}
\end{align}

$\bullet$ the parameter space  $\mathcal{W} \subset \mathbb{R}^{d^{\prime}}$  satisfies that
\begin{align}
\operatorname{diam}(\mathcal{W})=\sup _{\mathbf{w}, \mathbf{w}^{\prime} \in \mathcal{W}}\left\|\mathbf{w}-\mathbf{w}^{\prime}\right\|<+\infty ,
\end{align}
then with the probability at least  $1-e^{-t}>0$ , we have that for any  $\mathbf{w} \in \mathcal{W}$ ,

\begin{align}
\begin{split}
& \mathbb{E}_{(\mathbf{x}, y) \sim D_{X_l Y_l}} \max_{\mathbf{s}\in \mathcal{S}_{\Omega}}\ell (\mathbf{f}_{\mathbf{w}}\circ \mathbf{G} ; \mathbf{s},\mathbf{c}, y)-\frac{1}{n} \sum_{i=1}^{n} \max_{\mathbf{s}\in \mathcal{S}_{\Omega}}\ell\left(\mathbf{f}_{\mathbf{w}}\circ \mathbf{G} ; \mathbf{s},\mathbf{c}_l^{i}, y_l^{i}\right) \\
\leq & b_{0} \sqrt{\frac{M_{\ell} \operatorname{diam}(\mathcal{W}) L d^{\prime}}{n}}+M_{\ell} \sqrt{\frac{2 t}{n}}
\end{split}
\end{align}
where $b_{0}$  is a uniform constant.
\end{lemma}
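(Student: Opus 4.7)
The plan is to reduce Lemma 6 to Lemma 4 and then bound the covering number using the Lipschitz hypothesis, in direct analogy with how Lemma 3 was derived from Lemma 1. First, I would apply Lemma 4 with the function class
\begin{equation*}
\mathcal{F} = \left\{(\mathbf{c},y) \mapsto \max_{\mathbf{s}\in\mathcal{S}_\Omega}\ell(\mathbf{f}_{\mathbf{w}}\circ\mathbf{G};\mathbf{s},\mathbf{c},y) : \mathbf{w}\in\mathcal{W}\right\},
\end{equation*}
to obtain a uniform deviation bound of the form
\begin{equation*}
\mathbb{E}\max_{\mathbf{s}}\ell - \tfrac{1}{n}\sum_i \max_{\mathbf{s}}\ell \leq \frac{b_0 M_\ell}{\sqrt{n}}\int_0^1\sqrt{\log\mathcal{N}(\mathcal{F}, M_\ell\epsilon, L^\infty)}\,d\epsilon + M_\ell\sqrt{\tfrac{2t}{n}}.
\end{equation*}

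The key technical step is to show that the map $\mathbf{w}\mapsto\max_{\mathbf{s}}\ell(\mathbf{f}_{\mathbf{w}}\circ\mathbf{G};\mathbf{s},\mathbf{c},y)$ inherits the $L$-Lipschitz property from the hypothesis on $\ell(\cdot;\mathbf{x},y)$. For any $\mathbf{w},\mathbf{w}'\in\mathcal{W}$, using that $\mathbf{G}(\mathbf{s},\mathbf{c})\in\mathcal{X}$ and that $|\max_i a_i - \max_i b_i|\leq \max_i|a_i-b_i|$, one has
\begin{equation*}
\left|\max_{\mathbf{s}}\ell(\mathbf{f}_{\mathbf{w}}\circ\mathbf{G};\mathbf{s},\mathbf{c},y) - \max_{\mathbf{s}}\ell(\mathbf{f}_{\mathbf{w}'}\circ\mathbf{G};\mathbf{s},\mathbf{c},y)\right| \leq \max_{\mathbf{s}}\bigl|\ell(\mathbf{f}_{\mathbf{w}}\circ\mathbf{G};\mathbf{s},\mathbf{c},y) - \ell(\mathbf{f}_{\mathbf{w}'}\circ\mathbf{G};\mathbf{s},\mathbf{c},y)\bigr| \leq L\|\mathbf{w}-\mathbf{w}'\|,
\end{equation*}
so a Lipschitz-inherited argument identical to that of Lemma 3 gives
\begin{equation*}
\mathcal{N}(\mathcal{F}, M_\ell\epsilon, L^\infty) \leq \mathcal{N}(\mathcal{W}, M_\ell\epsilon/L, \|\cdot\|) \leq \left(1+\frac{\operatorname{diam}(\mathcal{W}) L}{M_\ell\epsilon}\right)^{d'}.
\end{equation*}

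With this covering number estimate in hand, the Dudley integral is computed exactly as in Lemma 3: bounding $\sqrt{\log(1+x)}\leq\sqrt{x}$ and integrating yields
\begin{equation*}
\int_0^1\sqrt{\log\mathcal{N}(\mathcal{F}, M_\ell\epsilon, L^\infty)}\,d\epsilon \leq 2\sqrt{\tfrac{\operatorname{diam}(\mathcal{W}) L d'}{M_\ell}},
\end{equation*}
which when plugged back into the Lemma 4 bound produces the target $b_0\sqrt{M_\ell\operatorname{diam}(\mathcal{W}) L d' / n} + M_\ell\sqrt{2t/n}$ (after absorbing numerical constants into $b_0$).

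The only nontrivial step is the Lipschitz-inheritance under the max operator; the rest is a bookkeeping reduction to Lemmas 3 and 4. I do not anticipate any subtle obstacles, since the composition with the fixed causal mechanism $\mathbf{G}$ merely evaluates $\mathbf{f}_{\mathbf{w}}$ at a specific point of $\mathcal{X}$ for which the original Lipschitz assumption already applies uniformly. A minor care point is that the supremum over $\mathbf{s}\in\mathcal{S}_\Omega$ is well-defined as a measurable function of $(\mathbf{c},y)$; this is standard under mild conditions on $\mathcal{S}_\Omega$ (e.g.\ separability), which we may tacitly assume as in Lemma 4.
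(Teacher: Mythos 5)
Your proposal is correct and follows essentially the same route as the paper: transfer the Lipschitz property in $\mathbf{w}$ through the composition with $\mathbf{G}$, bound the covering number of the induced function class by that of $\mathcal{W}$, and evaluate the Dudley integral exactly as in Lemma 3. In fact you are more explicit than the paper on the one substantive point — that the $\max_{\mathbf{s}\in\mathcal{S}_\Omega}$ operator preserves the $L$-Lipschitz dependence on $\mathbf{w}$ via $\left|\sup_{\mathbf{s}} a(\mathbf{s})-\sup_{\mathbf{s}} b(\mathbf{s})\right|\leq\sup_{\mathbf{s}}\left|a(\mathbf{s})-b(\mathbf{s})\right|$ — which the paper's proof leaves implicit when it simply asserts the argument is ``similar to the proof of Lemma 3.''
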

\begin{proof}
The condition \eqref{6.1} is equivalent to

$\bullet$ $\ell(\cdot ; G(\mathbf{s},\mathbf{c}), y)$  is $L_{\mathrm{G}}$-Lipschitz w.r.t. norm $\| \cdot \|$, i.e., for any  $(\mathbf{s},\mathbf{c}, y) \in \mathcal{S} \times \mathcal{C} \times\mathcal{Y}$ , and  $\mathbf{w}, \mathbf{w}^{\prime} \in \mathcal{W} $,
\begin{align}
\begin{split}
\left|\ell (\mathbf{f}_{\mathbf{w}}\circ \mathbf{G} ;\mathbf{s},\mathbf{c}, y)-\ell\left(\mathbf{f_{w^{\prime}}}\circ \mathbf{G} ;\mathbf{s},\mathbf{c}, y\right)\right| \leq L_{\mathrm{G}}\left\|\mathbf{w}-\mathbf{w}^{\prime}\right\|,
\end{split}
\end{align}

Then the proof is similar to the proof of Lemma \ref{lemma3}.
\end{proof}

\section{Details of Realization}
\subsection{MODE-F: Fourier-based MODE}\label{fb}

\textbf{Fourier-based Transfer.}
The Fourier-based transfer has been used in many domain generalization methods. This transfer method is considered able to separate the stylistic information from the semantic information by using the discrete Fourier transform to decompose the image into its amplitude and phase, and then create more samples of different styles using different mixing methods for amplitude.

For a image $x$, its discrete Fourier transformation $\mathcal{F}(\mathbf{x})$:
\begin{align}
\label{fourier}
    \mathcal{F}(\mathbf{x})(\mathbf{u},\mathbf{ v}) = \sum_{h = 0}^{H-1} \sum_{w = 0}^{W-1} x(h, w) e^{-j 2 \pi\left(\frac{h}{H} u+\frac{w}{W} v\right)}
\end{align}
$\mathcal{F}^{-1}(x)$ is defined as the discrete inverse Fourier transformation. And both of these transformations can be implemented using FFT and do not require additional neural networks. 

After discrete Fourier transformation, the amplitude and phase are defined as:
\begin{align}
\label{amplitude}
\mathcal{A}(\mathbf{x})(\mathbf{u},\mathbf{ v}) & = \left[R^{2}(\mathbf{x})(\mathbf{u},\mathbf{ v})+I^{2}(\mathbf{x})(\mathbf{u},\mathbf{ v})\right]^{1 / 2} \\
\mathcal{P}(\mathbf{x})(\mathbf{u},\mathbf{ v}) & = \arctan \left[\frac{I(\mathbf{x})(\mathbf{u},\mathbf{ v})}{R(\mathbf{x})(\mathbf{u},\mathbf{ v})}\right] \label{phase}
\end{align}
where $R(x)$ and $I(x)$ represent the real and imaginary part of $\mathcal{F}(\mathbf{x})$, respectively.

In \cite{xu2021fourier}, Fourier-based data augmentation is implemented by:
\begin{align}
\hat{\mathcal{A}}_{\gamma}\left(\mathbf{x}\right) & = (1-\lambda) \mathcal{A}\left(\mathbf{x}\right)+\lambda \mathcal{A}\left(\mathbf{x}^{\prime}\right) \label{mix} \\
\hat{x} & = \mathcal{F}^{-1}\left[\hat{\mathcal{A}}_{\gamma}\left(\mathbf{x}\right)(\mathbf{u},\mathbf{ v}) * e^{-j * \mathcal{P}\left(\mathbf{x}\right)(\mathbf{u},\mathbf{ v})}\right] \label{ifourier}
\end{align}

where $x^{\prime}$ represents randomly selected image, $\lambda \sim U(0, \eta)$, and the hyperparameter $\eta$ controls the strength of the augmentation.$\hat{x}$ represents the augmented images.

\textbf{Adapt Fourier-based Transfer.}
Although the previous Fourier-based transfer methods have achieved good results in many domain generalization methods \cite{xu2021fourier,lv2022causality}, our approach requires that the transformation method used should be controllable. More specifically, It means that the direction of style change in the transformation process can be controlled by some parameters, and these parameters can be updated regularly in the process.

To meet the requirements, since a single image could not provide a sufficient amount of style, we randomly select M images $\mathbf{x}_{1},\mathbf{x}_{2},\cdots,\mathbf{x}_{M}$ as style providers and calculate their amplitudes $\mathcal{A}\left(\mathbf{x}_{1}\right),\mathcal{A}\left(\mathbf{x}_{2}\right),\cdots,\mathcal{A}\left(\mathbf{x}_{M}\right)$, and then acquire their linear combination of amplitudes:
\begin{align}
\label{mix2}
\hat{\mathcal{A}}_{\gamma}\left(\mathbf{x}\right) & = \gamma[\alpha_{0} \mathcal{A}\left(\mathbf{x}\right)+\sum_{i=1}^{M} \alpha_{i} \mathcal{A}\left(\mathbf{x}_{i}\right)]+(1-\gamma)\mathcal{A}\left(\mathbf{x}\right) \\
\hat{x} & = \mathcal{F}^{-1}\left[\hat{\mathcal{A}}_{\gamma}\left(\mathbf{x}\right)(\mathbf{u},\mathbf{v}) * e^{-j * \mathcal{P}\left(\mathbf{x}\right)(\mathbf{u},\mathbf{ v})}\right] \label{xhat}
\end{align}
where $\alpha_{0},\alpha_{1},\cdots,\alpha_{M}$ are the parameter of linear weighted which could be updated, $\gamma$ is hyperparameter to determine what percentage of the amplitude is involved in the searching.

By controlling the parameter $\alpha_{0},\alpha_{1},\cdots,\alpha_{M}$, we can control the changes in the augmented output's style. 

\textbf{Adversarial-attacks-inspired Update Strategy.}
 Motivated by these theoretical insights, in each iteration, the parameters $\alpha_{0},\alpha_{1},\cdots,\alpha_{M}$ will be updated by maximizing the empirical risk of the augmented sample, so that we could guide the augmented sample to be closer to the distribution with highest empirical risk.
 % This process is what we call \textbf{Exploration}.
 
 Inspired by Adversarial attacks \cite{szegedy2013intriguing,goodfellow2014explaining,madry2017towards,zhang2020principal}, to speed up the searching process, after the gradient backpropagation, we update the parameters $\alpha_{0},\alpha_{1},\cdots,\alpha_{M}$ using the gradient's direction and fixed step size:
\begin{align}
\label{updatea}
\tilde{\boldsymbol{\alpha}}^{k} & = \boldsymbol{\alpha}^{k-1}+\mu \operatorname{sign}\left(\nabla_{\boldsymbol{\alpha}} \ell(\mathbf{f_{w}};\hat{\mathbf{x}}^{k-1},y)\right) \\
\alpha^{k}_{l} & = \tilde{\alpha}^{k}_{l} / \sum_{i = 0}^{M} \tilde{\alpha}^{k}_{i} \label{norm}
\end{align}  
where $\boldsymbol{\alpha}^{k} = [\alpha_{0}^{k},\alpha_{1}^{k},\cdots,\alpha_{M}^{k}]^\top$, $\mu$ is the step size, $\hat{x}^{k}$ comes from Equ Equ $\ref{mix2}$, Equ $\ref{xhat}$ with $\alpha^{k}$, $\ell$ is standard Cross-entropy loss, $\mathbf{f_{w}}$ is the network.
Equ $\ref{norm}$ means that since the sum is limited to 1, the parameters $\alpha_{0},\alpha_{1},\cdots,\alpha_{M}$ are normalized after each update.

After $K$ iterations in inner optimization, we get $\boldsymbol{\alpha}^{K}$ , and then we calculate $\hat{x}^{final}$ using Equ $\ref{mix2}$, Equ $\ref{xhat}$ and $\boldsymbol{\alpha}^{K}$.
 
Finally, to maintain the category label and thus enforce semantic consistency, we require that the generated sample $\hat{\mathbf{x}}^{final}$ is classified into the same category together with the original sample $\mathbf{x}$, and calculate the total loss to update network $\mathbf{f_{w}}$:
\begin{align}
\label{totalloss}
\mathcal{L}_{MODE} & = (1-\beta)\ell(\mathbf{f_{w}};\mathbf{x},y) + \beta  \ell(\mathbf{f_{w}};\hat{\mathbf{x}}^{final},y)
\end{align}
where $\ell$ is standard Cross-entropy loss, $\beta$ is hyperparameter to control the influence of augmented images.

The full Fourier-based training algorithm is shown in Algorithm $\ref{alg:fourier}$ and Figure $\ref{fig:fourier}$.

\begin{figure}[]
	\centering
	\includegraphics[width=0.7\textwidth]{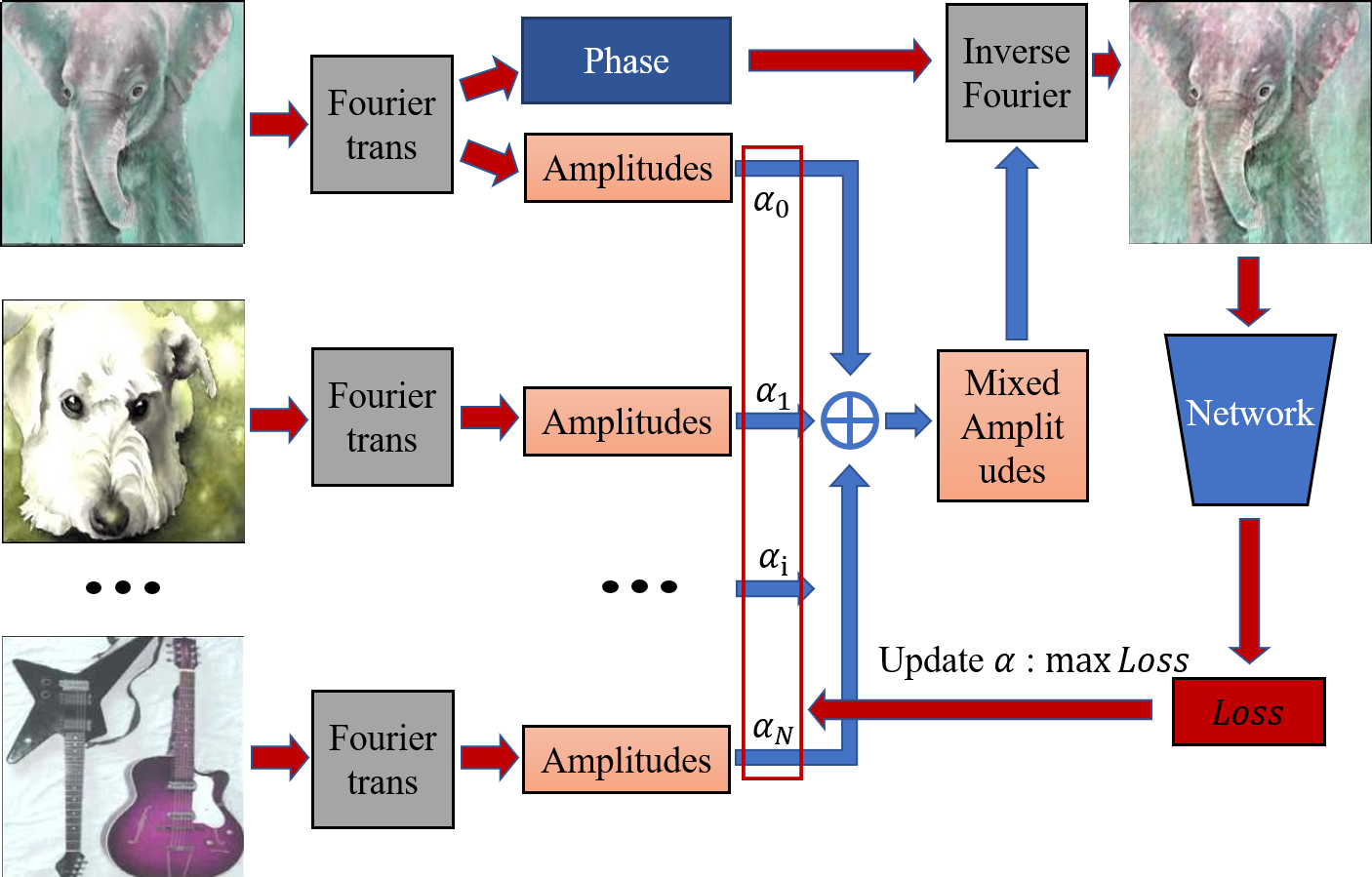}
	\caption{Fourier-based algorithm approach. The top left picture is the input picture, and the rest are the amplitude provider. In an iteration, the controlling parameters $\alpha$ are used to mix the amplitudes to obtain a new amplitude, together with the phase of the original input image to generate an augmented image. The augmented is input to the network to calculate the loss, and the controlling parameters $\alpha$ are updated by maximizing the loss.}
	\label{fig:fourier}
\end{figure}
\begin{figure}[]
	\centering
	\includegraphics[width=0.7\textwidth]{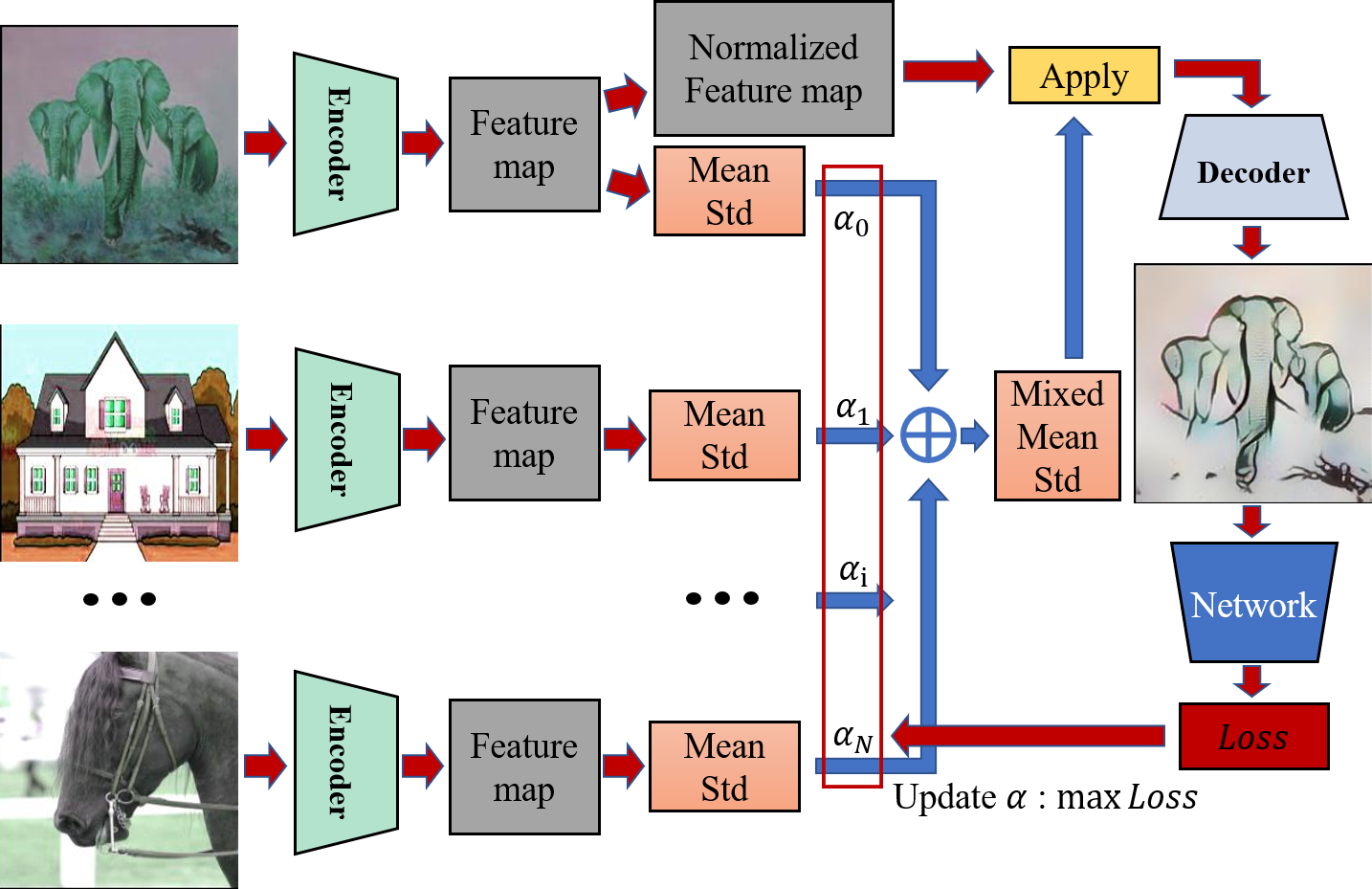}
	\caption{AdaIN-based approach. The top left image is the input content image, and the rest are the style provider. In an iteration, the controlling parameters $\alpha$ are used to mix the mean and std to obtain new mean and std, which is applied to the normalized feature map of the original input content image to generate an augmented image. The augmented is input to the network to calculate the loss, and the controlling parameters $\alpha$ is updated by maximizing the loss.}
	\label{fig:adain}
\end{figure}
\begin{algorithm}[tb]
   \caption{Fourier-based MODE}
   \label{alg:fourier}
\begin{algorithmic}
   \STATE {\bfseries Input:} data $x_i$, batch size $n$, number of iterations $K$ in inner optimization, step size $\mu$, number of amplitude providers $M$, network architecture parametrized by $\mathbf{w}$, hyperparameter $\beta$ and $\gamma$
   \STATE {\bfseries Output:} Robust network $\mathbf{f_{w}}$
   \STATE Randomly initialize network $\mathbf{f_{w}}$, or initialize network with pre-trained configuration
   \REPEAT
   \STATE Read mini-batch $\boldsymbol{x} = [\mathbf{x}_{1}, ..., \mathbf{x}_{n}]$ from training set
   \STATE Calculate their phase and amplitude as $[P_{1}, ..., P_{n}]$,  $[A_{1}, ..., A_{n}]$, respectively
   \STATE \textit{\textcolor{blue}{\# Exploration}}
   \FOR{$i=1$ {\bfseries to} $n$ (in parallel)} 
   \STATE Randomly select $M$ other image's amplitudes in $B$ as amplitude providers $[\hat{A}_{1}, ..., \hat{A}_{M}]$
   \STATE Initialize $\alpha_{0},\alpha_{1},\cdots,\alpha_{M}$ as $\boldsymbol{\alpha}^{0}$
   \FOR{$k=1$ {\bfseries to} $K$}
   \STATE Calculate $\hat{\mathbf{x}}_{i}^{k}$ using Equ $\ref{mix2},\ref{xhat}$ and $\boldsymbol{\alpha}^{k-1}$
   \STATE Update $\boldsymbol{\alpha}^{k}$ using Equ $\ref{updatea}, \ref{norm}$
   \ENDFOR
   \STATE Calculate $\hat{\mathbf{x}}_{i}^{final}$ using Equ $\ref{mix2}, \ref{xhat}$ and $\boldsymbol{\alpha}^{K}$
   \ENDFOR
   \STATE \textit{\textcolor{blue}{\# Update Model}}
   \STATE Calculate $\mathcal{L}_{MODE}$ using Equ $\ref{totalloss}$
   \STATE Update $\mathbf{w}$ by performing one step gradient update using $\nabla_{\mathbf{w}}\mathcal{L}_{MODE}$
   \UNTIL{training converged}
\end{algorithmic}
\end{algorithm}

\begin{figure*}[!t]
	\centering
	\includegraphics[width=0.9\textwidth]{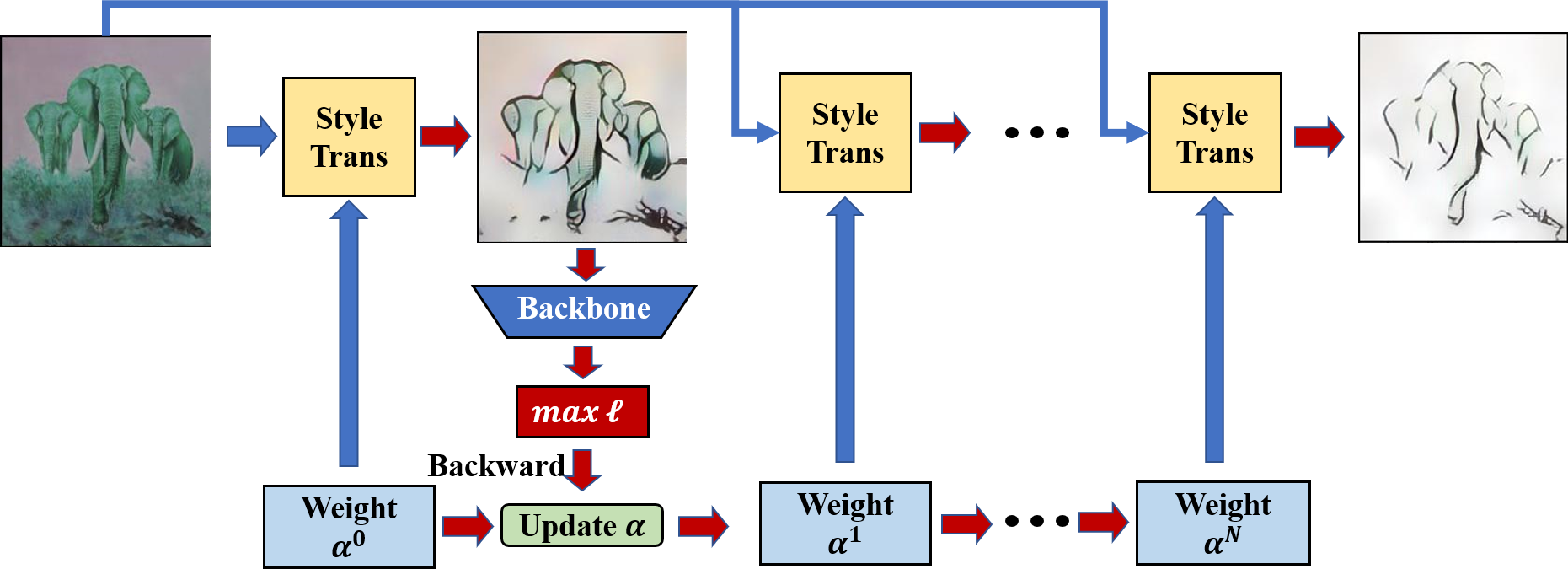}
	\caption{The overall approach. Using existing generation methods like Style trans, \textbf{MODE} could create more aggressive samples by updating the controlling parameters $\alpha$ through multiple steps, inspired by Adversarial attack.}
	\label{fig:framework2}
\end{figure*}

\subsection{MODE-A: AdaIN-based MODE}\label{ab}

\textbf{Neural Style Transfer.}
Although this Fourier-based transfer method has been widely used in many domain generalization methods, it still has obvious disadvantages: most of the changes in the generated augmented image are reflected in adding irregular color blocks and textures to the original image, which is rarely seen in reality. At the same time, compared with the good results applied to some low-resolution datasets such as handwritten digit datasets, when applied to some real-world high-resolution datasets, the generated results lacking authenticity are also difficult to be satisfactory. It prompts us to consider other style transfer methods with better results. 

Neural Style Transfer \cite{huang2017arbitrary} has developed rapidly in recent years. Compared with the Fourier-based method, using the pre-trained neural network model to process the image, the results generated by neural style transfer are usually more authentic. AdaIN \cite{huang2017arbitrary} is one of the representative methods of neural style transfer. It uses the mean and std of feature map output by the fixed encoder to represent style information and trains a decoder to restore stylized images from the feature map whose mean and std had been changed. 

To apply AdaIN in our approach, we use the mean and std introduced above to represent the non-semantic factor $S$ and use the normalized feature map to represent the semantic factor $C$.

AdaIN consists of an encoder $E$, a decoder $D$, and a mean-std processing module for the feature map. Encoder $E$ will process input content image $x$ and style image $\hat{\mathbf{x}}$ into feature map $\mathbf{z}=E(\mathbf{x})$, $\hat{\mathbf{z}}=E(\hat{\mathbf{x}})$, respectively. And then mean-std processing module will calculate the mean and std of the $\mathbf{z}$ and $\hat{\mathbf{z}}$ separately:
\begin{align}
\boldsymbol{\mu}(\mathbf{z}) & = \frac{1}{H W} \sum_{h = 1}^{H} \sum_{w = 1}^{W} \mathbf{z}_{c, h, w} \\
\boldsymbol{\sigma}(\mathbf{z}) & = \sqrt{\frac{1}{H W-1} \sum_{h = 1}^{H} \sum_{w = 1}^{W}\left(\mathbf{z}_{c, h, w}-\boldsymbol{\mu}(\mathbf{z})\right)^{2}}
\end{align}
where $\mathbf{z}$ should be a feature map of shape $C \times H \times W$ with $C, H, W$ being the number of channels, height, and width.

After obtaining the $\boldsymbol{\mu}(\mathbf{z}), \boldsymbol{\sigma}(\mathbf{z}), \boldsymbol{\mu}(\hat{\mathbf{z}}), \boldsymbol{\sigma}(\hat{\mathbf{z}})$, module will control their mixing by hyperparameter $\lambda$ :
\begin{align}
\tilde{\boldsymbol{\mu}}(\mathbf{z},\hat{\mathbf{z}},\lambda ) & = \lambda \boldsymbol{\mu}(\mathbf{z}) +(1-\lambda )\boldsymbol{\mu}(\hat{\mathbf{z}}) \label{mean} \\
\tilde{\boldsymbol{\sigma}}(\mathbf{z},\hat{\mathbf{z}},\lambda ) & = \lambda \boldsymbol{\sigma}(\mathbf{z}) +(1-\lambda )\boldsymbol{\sigma}(\hat{\mathbf{z}}) \label{std}
\end{align}
where $0 \le\lambda\le 1$ is the hyperparameter to control the level of stylization

And then mean-std processing module will apply the mixed mean and std to the normalized feature map of the content image :
\begin{align}
\tilde{\mathbf{z}} & = \tilde{\boldsymbol{\mu}}(\mathbf{z},\hat{\mathbf{z}},\lambda )+\tilde{\boldsymbol{\sigma}}(\mathbf{z},\hat{\mathbf{z}},\lambda ) \frac{\mathbf{z}-\boldsymbol{\mu}(\mathbf{z})}{\boldsymbol{\sigma}(\mathbf{z})}
\end{align}
where $\tilde{\mathbf{z}}$ is the output feature map.

Finally, the decoder $D$ will process the output feature map $\tilde{\mathbf{z}}$ to restore stylized images $\tilde{x} = D(\tilde{\mathbf{z}})$.

\textbf{Adapt AdaIN Transfer.}
Original AdaIN style transfer can only control the degree of stylization but not the direction of stylization.
We should make AdaIN style transfer controllable, i.e. the direction of style change in the transformation process can be controlled by some parameters, and these parameters can be updated regularly in the process.

To satisfy this requirement, since AdaIN use mean and std of feature map to represent style information,  we randomly select M images $\mathbf{x}_{1},\mathbf{x}_{2},\cdots,\mathbf{x}_{M}$ as style providers and calculate their feature map output using encoder $\mathbf{z}_{1} = E(\mathbf{x}_{1}),\mathbf{z}_{2} = E(\mathbf{x}_{2}),\cdots,\mathbf{z}_{M} = E(\mathbf{x}_{M})$, and finally calculate their mean and std of the feature map  $\boldsymbol{\mu}\left(\mathbf{z}_{1}\right),\boldsymbol{\mu}\left(\mathbf{z}_{2}\right),\cdots,\boldsymbol{\mu}\left(\mathbf{z}_{M}\right)$, $\boldsymbol{\sigma}\left(\mathbf{z}_{1}\right),\boldsymbol{\sigma}\left(\mathbf{z}_{2}\right),\cdots,\boldsymbol{\sigma}\left(\mathbf{z}_{M}\right)$, and then acquire their linear combination of mean and std:
\begin{align}
\tilde{\boldsymbol{\mu}}(\mathbf{z},\mathbf{z}_{1,\cdots,M},\alpha_{0,\cdots,M}) & = \alpha_{0} \boldsymbol{\mu}(\mathbf{z}) +\sum_{i=1}^{M}\alpha_{i} \boldsymbol{\mu}(\mathbf{z}_{i}) \label{meanmix}\\
\tilde{\boldsymbol{\sigma}}(\mathbf{z},\mathbf{z}_{1,\cdots,M},\alpha_{0,\cdots,M}) & = \alpha_{0} \boldsymbol{\sigma}(\mathbf{z}) +\sum_{i=1}^{M}\alpha_{i} \boldsymbol{\sigma}(\mathbf{z}_{i}) \label{stdmix}
\end{align}

And then apply the linearly mixed mean and std to the normalized feature map of content image $\mathbf{z}$, and use the decoder $D$ to restore stylized images $\tilde{x}$:
\begin{align}
\begin{split}
\tilde{\mathbf{z}}  &= \tilde{\boldsymbol{\mu}}(\mathbf{z},\mathbf{z}_{1,\cdots,M},\alpha_{0,\cdots,M}) \\ & +\tilde{\boldsymbol{\sigma}}(\mathbf{z},\mathbf{z}_{1,\cdots,M},\alpha_{0,\cdots,M}) \frac{\mathbf{z}-\boldsymbol{\mu}(\mathbf{z})}{\boldsymbol{\sigma}(\mathbf{z})} \label{apply}\end{split}\\
\tilde{x} & = D(\gamma\tilde{\mathbf{z}}+(1-\gamma)\mathbf{z}) \label{decoder}
\end{align}

where $\gamma$ is hyperparameter to determine what percentage of the amplitude is involved in the Exploration.

\textbf{Adversarial-attacks-inspired Update Strategy.}
Motivated by these theoretical insights, in each iteration, the parameters $\alpha_{0},\alpha_{1},\cdots,\alpha_{M}$ will be updated by maximizing the empirical risk of the augmented data, so that we could guide the augmented data to be closer to the distribution with highest empirical risk.
 
Inspired by adversarial attacks \cite{madry2017towards,szegedy2013intriguing,goodfellow2014explaining,zhang2020principal}, to speed up the process, after the gradient backpropagation, we update the parameters $\alpha_{0},\alpha_{1},\cdots,\alpha_{M}$ using the gradient's direction and fixed step size:
\begin{align}
\label{updatea2}
\tilde{\boldsymbol{\alpha}}^{k} & = \boldsymbol{\alpha}^{k-1}+\mu\operatorname{sign}\left(\nabla_{\boldsymbol{\alpha}} \ell(\mathbf{f_{w}};\hat{\mathbf{x}}^{k-1},y)\right) \\
\alpha^{k}_{l} & = \tilde{\alpha}^{k}_{l} / \sum_{i = 0}^{M} \tilde{\alpha}^{k}_{i} \label{norm2}
\end{align}  
where $\boldsymbol{\alpha}^{k} = [\alpha_{0}^{k},\alpha_{1}^{k},\cdots,\alpha_{M}^{k}]^\top$, $\mu$ is the step size, $\hat{x}^{k}$ comes from Equ $\ref{meanmix},\ref{stdmix},\ref{apply},\ref{decoder}$ with $\alpha^{k}$, $\ell$ is standard Cross-entropy loss, $\mathbf{f_{w}}$ is the network.
Equ $\ref{norm2}$ means that since the sum is limited to 1, the parameters $\alpha_{0},\alpha_{1},\cdots,\alpha_{M}$ are normalized after each update.

After $K$ iterations in inner optimization, we get $\boldsymbol{\alpha}^{K}$ , and then we calculate $\hat{x}^{final}$ using Equ $\ref{meanmix},\ref{stdmix},\ref{apply},\ref{decoder}$ and $\boldsymbol{\alpha}^{K}$.

Finally, we use the original image and augmented image to compute the total loss :
\begin{align}
\label{totalloss2}
\mathcal{L}_{MODE} & = (1-\beta)\ell(\mathbf{f_{w}};\mathbf{x},y) + \beta  \ell(\mathbf{f_{w}};\hat{\mathbf{x}}^{final},y)
\end{align}
where $\ell$ is standard Cross-entropy loss, $\beta$ is hyperparameter to control the influence of augmented images.

The full AdaIN-based training algorithm is shown in Algorithm \ref{alg:AdaIN} and Figure \ref{fig:adain}.
The overall approach is shown in Figure \ref{fig:framework2}.
\begin{algorithm}[h]
   \caption{AdaIN-based MODE}
   \label{alg:AdaIN}
\begin{algorithmic}
   \STATE {\bfseries Input:} data $x_i$, batch size $n$, number of iterations $K$ in inner optimization, step size $\mu$, number of style providers $M$, network architecture parametrized by $\mathbf{w}$, hyperparameter $\beta$ and $\gamma$
   \STATE {\bfseries Output:} Robust network $\mathbf{f_{w}}$
   \STATE Randomly initialize network $\mathbf{f_{w}}$, or initialize network with pre-trained configuration
   \REPEAT
   \STATE Read mini-batch $\boldsymbol{x} = [\mathbf{x}_{1}, ..., \mathbf{x}_{n}]$ from training set
   \STATE Calculate their feature map $[\mathbf{z}_{1}=E(\mathbf{x}_{1}),\cdots, \mathbf{z}_{n}=E(\mathbf{x}_{n})]$
   \STATE Calculate their mean and std of feature map $[\boldsymbol{\mu}(\mathbf{z}_{1}),\cdots, \boldsymbol{\mu}(\mathbf{z}_{n})]$, $[\boldsymbol{\sigma}(\mathbf{z}_{1}),\cdots, \boldsymbol{\sigma}(\mathbf{z}_{n})]$ using Equ $\ref{mean},\ref{std}$
   \STATE \textit{\textcolor{blue}{\# Exploration}}
   \FOR{$i=1$ {\bfseries to} $n$ (in parallel)} 
   \STATE Randomly select $M$ other image's mean and std as style providers $[\mu_{1}, ..., \mu_{M}]$, $[\sigma_{1}, ..., \sigma_{M}]$
   \STATE Initialize $\alpha_{0},\alpha_{1},\cdots,\alpha_{M}$ as $\boldsymbol{\alpha}^{0}$
   \FOR{$k=1$ {\bfseries to} $K$}
   \STATE Calculate $\hat{\mathbf{x}}_{i}^{k}$ using Equ $\ref{meanmix},\ref{stdmix},\ref{apply},\ref{decoder}$ and $\boldsymbol{\alpha}^{k-1}$
   \STATE Update $\boldsymbol{\alpha}^{k}$ using Equ $\ref{updatea2}, \ref{norm2}$
   \ENDFOR
   \STATE Calculate $\hat{\mathbf{x}}_{i}^{final}$ using Equ $\ref{meanmix},\ref{stdmix},\ref{apply},\ref{decoder}$ and $\boldsymbol{\alpha}^{K}$
   \ENDFOR
   \STATE \textit{\textcolor{blue}{\# Update Model}}
   \STATE Calculate $\mathcal{L}_{MODE}$ using Equ $\ref{totalloss2}$
   \STATE Update $\mathbf{w}$ by performing one step gradient update using $\nabla{\mathbf{w}}\mathcal{L}_{MODE}$
   \UNTIL{training converged}
\end{algorithmic}
\end{algorithm}

\section{Discussion}\label{Discussion}

\subsection{Why exploring the worst-case for each sample rather than data distribution is more suitable}

MODE explores the worst case for each sample, namely, $\min \mathbb{E} [\textbf{max} \ell (x,y)]$. In contrast, DRO performs exploration for the data distribution, namely, $\min \textbf{max} \mathbb{E} [\ell (x,y)]$.

Restricted capacity for searching distribution may make it hard to find the worst-case distribution. In contrast, the restricted capacity for searching worst-case samples is relatively easy for deep models \cite{su2019one}.

Moreover, it is challenging to exactly estimate a distribution under DG scenarios. Specifically, if we use a batch of samples to estimate and update the distribution, the estimated distribution could be imprecise, leading to bad explorations. But if we use all samples for the estimation in each iteration, the computational complexity could be excessive, particularly for high-resolution and large-scale datasets (e.g., DomainNet with 0.6M 224x224 images). A possible solution to address the challenge is to use the dual method (dual theorems for optimization need to be developed) for simplification. However, it is beyond the scope, as we mainly propose to perform moderately distributional exploration for DG. Thus, we will leave it as our future work. In contrast, performing sample-level exploration makes the distribution estimation unnecessary, bypassing the above issues.

\subsection{MODE does not require domain ID}

MODE does not require domain ID, as demonstrated in our theoretical proof and Algorithms. However, Domain ID, in certain situations, may provide a marginal performance gain (see Table below). 

\begin{table}[htbp]
  \centering
  \caption{The Impact of the Domain ID on MODE-A OfficeHome}
    \begin{tabular}{|c|c|c|c|c|c|}
    \hline
    MODE-A OfficeHome & A & C & P & R & Avg. \\
    \hline
    with domain ID & 60.0 & \textbf{57.3} & 74.0 & 75.7 & 66.7 \\
    without domain ID & \textbf{60.1} & 57.0 & \textbf{74.2} & \textbf{76.0} & \textbf{66.8} \\
    \hline
    \end{tabular}%
  \label{tab:addlabel5}%
\end{table}%

\subsection{Whether learnable $\lambda$ can bring additional improvements}

In the initial design, we merely thought of a simple approach, where $\lambda$ stands for the ratio of each domain. Namely, $\lambda=\frac{n_i}{N}$, where $n_i$ is the number of samples in the $i^{th}$ domain and $N$ denotes the total number.

We conduct experiments to investigate whether learnable $\lambda$ (such as the way in Group DRO) can bring additional improvements. The results are shown below. We can see that making $\lambda$ learnable can bring a certain performance gain, compared with a fixed $\lambda$.

\begin{table}[htbp]
  \centering
  \caption{Leave-one-domain-out classification accuracies (in \%) on PACS with Learnable $\lambda$}
    \begin{tabular}{|c|c|c|c|c|c|}
    \hline
    \textbf{MODE-F} & A & C & P & S & Avg. \\
    \hline
    Fixed $\lambda$ & 84.5$\pm$0.6 & 80.4$\pm$0.8 & 95.5$\pm$0.2 & 82.2$\pm$0.7 & 85.7 \\
    Learnable $\lambda$ & 84.1$\pm$0.7 & \textbf{81.2$\pm$0.9} & 95.1$\pm$0.3 & \textbf{83.6$\pm$0.4} & \textbf{86.0} \\
    \hline
    \textbf{MODE-A} & A & C & P & S & Avg. \\
    \hline
    Fixed $\lambda$ & 84.4$\pm$0.9 & 81.9$\pm$0.9 & 95.2$\pm$0.3 & 85.8$\pm$0.3 & 86.9 \\
    Learnable $\lambda$ & \textbf{85.5$\pm$1.2} & 81.7$\pm$0.6 & 95.2$\pm$0.4 & 85.5$\pm$0.9 & \textbf{87.1} \\
    \hline
    \end{tabular}%
  \label{tab:addlabel55}%
\end{table}%

\subsection{Why randomly select the style providers}

We choose to randomly select other images as style providers instead of using fixed images as providers. The motivation is straightforward. Randomly selecting samples can diversify the styles used for exploration. Theoretical results indicate a trade-off between the size of the search space in exploration, while exploring more styles can help improve performance. Therefore, we provide different search spaces for each exploration.

To further verify the perspective, we conducted experiments under various experimental settings. In our experiments, the only difference between the two settings is whether the style provider is fixed or not. The results are presented in the following tables. Built upon the results, we find that randomly selecting style providers can indeed enhance the model's performance, demonstrating the rationality of the random selection mechanism.

\begin{table}[htbp]
  \centering
  \caption{Leave-one-domain-out classification accuracies (in \%) on PACS with Fixed Style Provider}
    \begin{tabular}{|c|c|c|c|c|c|}
    \hline
    \textbf{MODE-F} & A & C & P & S & Avg. \\
    \hline
    Randomly & 84.5$\pm$0.6 & 80.4$\pm$0.8 & 95.5$\pm$0.2 & 82.2$\pm$0.7 & \textbf{85.7} \\
    Fixed & 83.1$\pm$0.5 & 79.2$\pm$0.7 & 95.4$\pm$0.4 & 81.1$\pm$0.3 & 84.4 \\
    \hline
    \textbf{MODE-A} & A & C & P & S & Avg. \\
    \hline
    Randomly & 84.4$\pm$0.9 & 81.9$\pm$0.9 & 95.2$\pm$0.3 & 85.8$\pm$0.3 & \textbf{86.9} \\
    Fixed & 82.4$\pm$0.7 & 80.6$\pm$0.2 & 95.2$\pm$0.2 & 83.2$\pm$1.0 & 85.3 \\
    \hline
    \end{tabular}%
  \label{tab:addlabel2}%
\end{table}%

Inspired by the mentioned approach of fixed-style providers, it is interesting to explore whether there exist optimal style providers for a given sample. This is an interesting and challenging problem that we would like to explore in our future work.

\subsection{Why MODE-A outperforms MODE-F}

The mentioned two realizations are related to the semantic and non-semantic partition. Thus, a possible explanation for the difference in model performance is that these two approaches have different abilities in partitioning semantic and non-semantic factors.

Regarding MODE-A, we use AdaIN \cite{huang2017arbitrary,li2022uncertainty} as the partition mechanism. AdaIN separates the semantic and non-semantic factors by processing the feature maps output by the model. The statistical features of the feature maps, such as mean and variance, are used as a good representation of the non-semantic factors, while the normalized feature maps are used as a good representation of the semantic factors. AdaIN creates new samples with different styles by applying different mean and std to the normalized feature map. Regarding MODE-F, we use the Fourier-based method \cite{xu2021fourier} as the partition mechanism. The Fourier-based method assumes that the amplitude spectrum contains more style information, and the phase spectrum contains more semantic information. The Fourier-based method creates new samples with different styles by adjusting the amplitude spectrum of the samples.

In the Fourier-based method, it is difficult to produce a reasonable stylized image by directly adjusting the amplitude spectrum. This often adds some chaotic and disordered color blocks to the generated image, which are unlikely to occur in the real world and may even affect semantic factors. On the other hand, AdaIN with the help of a pre-trained network can combine the low-level style features with the original semantics of the image in a reasonable way, in line with human intuition. These more reasonable images with different styles will enable the model to better learn the distinctions and connections between semantic and non-semantic factors.

Furthermore, we believe that the difference between domains is not limited to the style difference that Fourier and AdaIN target. For example, viewing angle and distance of objects in images are not something that Fourier and AdaIN can change, but these kinds of domain shifts often exist in reality. However, we also believe that selecting appropriate mechanisms to address various domain shifts will consistently yield favorable outcomes with our framework.

\subsection{More comparisons of work related to low confidence issues }

\citet{liu2021stable,liu2022distributionally} focuses on the uncertainty set in the DRO problem, and a Wasserstein distance is employed to determine the uncertainty. In contrast, MODE addresses the challenge when applying DRO to DG problems, where the overly large uncertainty set is shrunk to a subset through a semantic and non-semantic partition. Our semantic and non-semantic strategy is a unique contribution that distinguishes MODE from existing DRO methods \cite{liu2021stable,liu2022distributionally}.

Previous work \cite{liu2021stable,liu2022distributionally} uses adversarial attacks to generate new samples for exploitation. In contrast, MODE employs style transformation methods commonly used in DG to generate new samples.

GroupDRO \cite{sagawa2019distributionally} explores the worst-case by leveraging group information to re-weight groups. In contrast, MODE explores the worst-case by constructing new samples. 

Geometric Wasserstein DRO \cite{liu2022distributionally2} uses data geometry to construct more reasonable and effective uncertainty sets. In contrast, MODE shrinks the uncertainty set by introducing a semantic and non-semantic partition.

Topology-aware robust optimization (TRO) \cite{qiao2023topology} constructs the uncertainty using the data topology. In contrast, MODE constructs the uncertainty subset by constraining the search space with the same semantic factors. Thus, the main difference lies in how to constrain the uncertainty set.

Besides the above difference, previous methods perform exploration for the data distribution, namely, $\min \textbf{max} \mathbb{E} [\ell (x,y)]$. In contrast, MODE explores the worst case for each sample, namely, $\min \mathbb{E} [\textbf{max} \ell (x,y)]$.

\subsection{More comparisons of work related to data augmentation for non-semantic information}

DSU \cite{li2022uncertainty} focuses on addressing the uncertain nature of domain shifts by modeling feature statistics as uncertain distributions. This is achieved through the use of AdaIN, where non-semantic factors (i.e., feature map's mean and std) are replaced with randomly chosen values from the modeled distributions. By effectively modeling domain shifts with uncertainty, DSU significantly enhances the network's generalization ability.

CrossNorm and SelfNorm \cite{tang2021crossnorm} address the problem of domain shift by developing two simple and efficient normalization methods that can reduce the non-semantic domain shift between different distributions. It has been discovered that processing the mean and variance of channels for samples or feature maps can help improve generalization ability. These methods are complementary and can be applied to various fields.

In contrast, we take a different approach to solving the problem of domain shift. We aim to improve the model's overall generalization ability by exposing it to more difficult domains during training. This min-max game is common in DRO, but simply applying DRO does not always lead to good results. Instead, we focus on constraining the exploration of semantic and non-semantic factors and propose a theoretical framework to demonstrate the feasibility of our approach.

We theoretically prove that actively improving the model's performance on a range of data distributions can help enhance its overall generalization ability, even if the final test domain is not included in the range of distributions explored. To achieve this, we use Fourier and AdaIN and actively search for the most challenging domains before each update step.

By generating new samples, we enable the model to explore more difficult domains. In contrast, CrossNorm and SelfNorm focus on designing a new normalization method that can be embedded into the model, processing the mean and variance of channels of feature maps.

Although MODE and DSU both use AdaIN to generate samples, DSU models non-semantic factors as a multivariate Gaussian distribution and randomly samples the factors within this distribution. In contrast, following our theoretical results, we actively explore more challenging non-semantic factors in the space, resulting in more challenging samples each time.

Moreover, we highlight the difference between our method and previous works \cite{tang2021crossnorm,li2022uncertainty} through an empirical perspective. Specifically, we compare our baselines in experiments. We have since reproduced these two methods:

\begin{table}[htbp]
  \centering
  \caption{Additional experiments about DSU \cite{li2022uncertainty} and CNSN \cite{tang2021crossnorm}.}
    \begin{tabular}{|c|c|c|c|c|c|}
    \hline
    ResNet18 PACS & A     & C     & P     & S     & Avg. \\
    \hline
    CNSN \cite{tang2021crossnorm} & 83.6$\pm$0.3 & 79.1$\pm$0.3 & 96.5$\pm$0.1 & 80.2$\pm$0.3 & 84.8 \\
    DSU \cite{li2022uncertainty} & 83.1$\pm$0.3 & 79.8$\pm$0.4 & 96.3$\pm$0.1 & 77.3$\pm$0.1 & 84.1 \\
    MODE-F & 84.5$\pm$0.6 & 80.4$\pm$0.8 & 95.5$\pm$0.2 & 82.2$\pm$0.7 & 85.7 \\
    MODE-A & 84.4$\pm$0.9 & 81.9$\pm$0.9 & 95.2$\pm$0.3 & 85.8$\pm$0.3 & 86.9 \\
    \hline
    \end{tabular}%
  \label{tab:addlabe3}%
\end{table}%

We can see that our method can outperform the baselines. Besides the performance gain, we realize that it is necessary to consider the running time of each method. Accordingly, we also compare our method with our baselines, taking running time and FLOPs into consideration.

\begin{table}[htbp]
  \centering
  \caption{Running Time and FLOPs of ResNet18 PACS}
    \begin{tabular}{|c|c|c|}
    \hline
    ResNet18 PACS & Running Time & FLOPs \\
    \hline
    CNSN \cite{tang2021crossnorm} & 25min & 1x \\
    DSU \cite{li2022uncertainty} & 35min & 1.2x \\
    MODE-F & 4h & $\sim$8x \\
    MODE-A & 5h & $\sim$9x \\
    \hline
    \end{tabular}%
  \label{tab:addlabe4}%
\end{table}%

\begin{table}[htbp]
  \centering
  \caption{Performance Comparison on CNSN \cite{tang2021crossnorm} with different number of epochs. All results are conducted on the PACS dataset with Sketch(S) as the unknown target domain.}
    \begin{tabular}{|c|c|c|}
    \hline
    CNSN \cite{tang2021crossnorm} num of epoch & 50 epoch & 100 epoch  \\
    \hline
    Running Time (min) & 30 & 59  \\
    \hline
    Acc (\%) & 80.2 & 80.1  \\
    \hline
    \end{tabular}%
\end{table}%

\begin{table}[htbp]
  \centering
  \caption{Performance Comparison on different number of Inner Steps $K$ in MODE-A. All results are conducted on the PACS dataset with Sketch(S) as the unknown target domain.}
    \scalebox{0.9}{\begin{tabular}{|c|c|c|c|c|c|c|c|c|c|c|c|}
    \hline
     $K$ & 0 (Random) & 1 & 2 & 3 & 4 & 5 & 6 & 7 & 8 & 9 & 10 \\
    \hline
    Running Time (min) &  41 & 68 & 95 & 122 & 157 & 189 & 211 & 242 & 278 & 301 & 331 \\
    \hline
    Acc (\%) &  80.49 & 81.29 & 82.11 & 83.45 & 84.56 & 84.74 & 84.77 & 86.76 & 86.05 & 85.75 & 85.82 \\
    \hline
    \end{tabular}}%
  \label{tab:addlabel555}%
\end{table}%

It can be observed that due to the presence of the inner step, our method takes much longer running time than our baselines as the cost of promoting model performance. Thus, it is interesting to explore a more efficient approach to reduce the time cost while improving model performance, like \citet{shafahi2019adversarial,zhang2019you}. We thank the reviewer for the insightful comments and we will explore the exciting direction in our future work. 

Our method is unable to adequately augment out-of-distribution (OOD) samples which have different semantic factors with training domains \cite{wang2022watermarking}. 

\section{More Result and Implementation Details}\label{moreresult}
\subsection{Datasets}
\textbf{VLCS} \cite{torralba2011unbiased}
consists of 10,729 images from four domains, namely Caltech (C), Labelme (L), Pascal(V), Sun (S). There are five classes in each domain.

\textbf{DomainNet} \cite{peng2019moment} is a large-scale dataset designed for domain generalization, which contains 6.3 million images from 345 categories covering a wide range of visual domains. It has 6 domains:  ClipArt(C), Infograph (I), Painting (P), Quickdraw (Q), Real (R) and Sketch (S).

\textbf{Mini-Domainnet} \cite{zhou2021domain} is a highly challenging subset of DomainNet with a lower resolution (96x96) and 0.1M images. It has about 140K images with 126 classes and 4 domains: ClipArt(C), Painting (P), Real (R) and Sketch (S).

\subsection{Implementation Details}
Following the commonly used leave-one-domain-out strategy \cite{li2017deeper,xu2021fourier}, the model will be tested on one domain after training on all other domains.

\textbf{Basic Details} For VLCS, we use a pre-trained AlexNet backbone. We train the network using SGD optimizer with learning rate 5e-4, momentum 0.9, and weight decay 5e-4. We train the model for 50 epochs with batch size 32. The learning rate is decayed by 0.1 every 40 epochs.

For DomainNet, we use a pre-trained ResNet50 backbone. We train the network using adam optimizer with learning rate 2e-4 and weight decay 1e-4. We train the model for 50 epochs with batch size 256. The learning rate is decayed by 0.1 every 30 epochs.

For Mini-DomainNet, we use a pre-trained ResNet18 backbone. We train the network using sgd optimizer with learning rate 5e-3 and weight decay 5e-4. We train the model for 60 epochs with batch size 256. Cosine learning rate
scheduler is used.

\textbf{Method-specific Details}
The method-specific details are shown in \ref{md}.
% The number of iterations $K$ in inner optimization is 10, and the inner optimization step size $\boldsymbol{\mu}$ is 0.05.
% Using the AdaIN-based method, We set the hyperparameter $\beta$ to 0.4. We set the number of style providers $M$ to 9 (Providers are randomly selected) and hyperparameter $\gamma$ to 0.3.

\subsection{Experimental Results}
\textbf{Results on VLCS}
We show the Leave-one-domain-out classification accuracies (in $\%$) on VLCS on Tab $\ref{tab:3}$. It can be observed that our approach achieves the highest average accuracy, but our result is only a little better than other methods. We think that it is because VLCS is different from other datasets in domain shift. All the data in VLCS are real-world images having complex compositions and background, which can't be handled well by Fourier-based transfer and AdaIN transfer. There will be better results by choosing a more suitable generation method to apply to our framework.
\begin{table}\label{tab:3}
\centering
\caption{Leave-one-domain-out classification accuracies (in $\%$) on VLCS in AlexNet. The best and second-best results are highlighted in bold and underlined, respectively.}
\begin{tabular}{c|ccccc}
\hline
\textbf{Dataset} & \multicolumn{5}{c}{VLCS}                                                                              \\ \hline
Methods          & C              & L             & V             & \multicolumn{1}{c|}{S}              & Avg.           \\ \hline
DeepAll \cite{zhou2020deep}       & 96.3           & 59.7          & 70.6          & \multicolumn{1}{c|}{64.5}           & 72.8           \\
MLDG \cite{li2018learning}             & 97.9           & 59.5          & 66.4          & \multicolumn{1}{c|}{64.8}           & 72.2           \\
Epi-FCR \cite{li2019episodic}         & 94.1           & {\ul 64.3}    & 67.1          & \multicolumn{1}{c|}{65.9}           & 72.9           \\
MAML \cite{finn2017model}            & 97.8           & 58            & 67.1          & \multicolumn{1}{c|}{64.1}           & 71.8           \\
Jigen \cite{carlucci2019domain}           & 96.9           & 60.9          & {\ul 70.6}    & \multicolumn{1}{c|}{64.3}           & 73.2           \\
MMLD \cite{matsuura2020domain}            & 96.6           & 58.7          & \textbf{72.1} & \multicolumn{1}{c|}{66.8}           & 73.5           \\
CICF \cite{li2021confounder}            & {\ul 97.8}     & 60.1          & 69.7          & \multicolumn{1}{c|}{67.3}           & 73.7           \\
MASF \cite{dou2019domain}            & 94.8           & \textbf{64.9} & 69.1          & \multicolumn{1}{c|}{67.6}           & 74.1           \\ \hline
MODE-F (ours)    & \textbf{97.87} & 61.17         & 69.54         & \multicolumn{1}{c|}{\textbf{68.73}} & {\ul 74.33}    \\
MODE-A (ours)      & 96.92          & 63.05         & 70.28         & \multicolumn{1}{c|}{{\ul 67.97}}    & \textbf{74.55} \\ \hline
\end{tabular}
\end{table}

\textbf{Results on DoaminNet}
We show the Leave-one-domain-out classification accuracies (in $\%$) on VLCS on Tab $\ref{tab:4}$. It can be observed that our approach achieves the higher average accuracy.

\begin{table}[ht]
\centering
\caption{Leave-one-domain-out classification accuracies (in $\%$) on DomainNet in ResNet50.}
\label{tab:4}
\begin{tabular}{l|cccccc|c}
\hline
Methods & Clipart & Infograph & Painting & Quickdraw & Real & Sketch & Avg. \\ \hline
Baseline & 66.35 & 23.01 & 50.48 & 13.82 & 63.57 & 50.79 & 44.67 \\ 
MODE-F (ours) & \textbf{68.50} & 23.14 & \textbf{53.04} & 15.92 & \textbf{63.72} & \textbf{54.99} & \textbf{46.55} \\ 
MODE-A (ours) & 68.26 & \textbf{23.39} & 52.45 & \textbf{16.78} & 63.05 & 53.96 & 46.31 \\ \hline
\end{tabular}
\end{table}

\textbf{Results on Mini-DoaminNet}
We show the Leave-one-domain-out classification accuracies (in $\%$) on VLCS on Tab $\ref{tab:5}$. It can be observed that our approach achieves the higher average accuracy.

\begin{table}[htbp]
  \centering
  \caption{Leave-one-domain-out classification accuracies (in $\%$) on Mini-DomainNet in ResNet18. }
    \begin{tabular}{c|cccc|c}
    \hline
    Methods & Clipart & Painting & Real & Sketch & Avg. \\
    \hline
    Baseline & 59.04 & 47.20 & \textbf{56.18} & 51.74 & 53.54 \\
    MODE-F (ours) & 60.63 & 48.09 & 54.92 & \textbf{55.39} & 54.75 \\
    MODE-A (ours) & \textbf{63.56} & \textbf{48.25} & 55.87 & 52.69 & \textbf{55.09} \\
    \hline
    \end{tabular}%
  \label{tab:5}%
\end{table}%

%%%%%%%%%%%%%%%%%%%%%%%%%%%%%%%%%%%%%%%%%%%%%%%%%%%%%%%%%%%%%%%%%%%%%%%%%%%%%%%
%%%%%%%%%%%%%%%%%%%%%%%%%%%%%%%%%%%%%%%%%%%%%%%%%%%%%%%%%%%%%%%%%%%%%%%%%%%%%%%

\clearpage
\clearpage

\subsection{The Overall Method-specific Details}\label{md}

The method-specific details of our approach are shown in Table \ref{mdf} and Table \ref{mda}.
% The number of inner steps $K$ is set to $10$ and the step size $\boldsymbol{\mu}$ is set to 0.05.
% When using the Fourier-based method, we set the number of amplitude providers $M$ to 8 (Providers are randomly selected) and the hyperparameter $\beta$ and $\gamma$ to 0.3 and 1 for all datasets, respectively.
% When using the AdaIN-based method, we set the hyperparameter $\beta$ to 0.4 for all datasets. For PACS, we set the number of style providers $M$ to 3 (Randomly select one from each domain) and hyperparameter $\gamma$ to 1. For Office-Home, when the target domain is Clipart (C), the setting is the same as PACS's. For the rest domains, we set the number of style providers $M$ to 8 and hyperparameter $\gamma$ to 0.3.
% Please add the following required packages to your document preamble:
% \usepackage{multirow}

\begin{table}[htbp]\label{mdf}
\caption{The method-specific details of the Fourier-based approach.}
\begin{tabular}{|c|c|c|c|c|c|c|}
\hline
Dataset & Domain & Number of inner steps $K$ & Inner step size $\mu$ & $\beta$ & $\gamma$ & The number of style providers $M$ \\ \hline
Digit-DG     & All    & 10                        & 0.05                  & 0.3     & 1        & 3                                 \\ \hline
DomainNet     & All    & 7                        & 0.05                  & 0.3     & 1        & 12                                 \\ \hline
Others     & All    & 10                        & 0.05                  & 0.3     & 1        & 8                                 \\ \hline
\end{tabular}
\end{table}

\begin{table}[htbp]\label{mda}
\caption{The method-specific details of the AdaIN-based approach.}
\scalebox{0.92}{\begin{tabular}{|c|c|c|c|c|c|c|}
\hline
Dataset                     & Domain & Number of inner steps $K$ & Inner step size $\mu$ & $\beta$              & $\gamma$             & The number of style providers $M$           \\ \hline
PACS                        & All    & \multirow{5}{*}{10}       & \multirow{5}{*}{0.05} & \multirow{5}{*}{0.4} & \multirow{2}{*}{1}   & \multirow{2}{*}{3 (Each domain provide one)} \\ \cline{1-2}
\multirow{2}{*}{OfficeHome} & C      &                           &                       &                      &                      &                                             \\ \cline{2-2} \cline{6-7} 
                            & other  &                           &                       &                      & \multirow{2}{*}{0.3} & \multirow{3}{*}{8}                          \\ \cline{1-2}
VLCS                        & All    &                           &                       &                      &                      &                                             \\ \cline{1-2} \cline{6-6} 
Mini-DomainNet                        & All    &                           &                       &                      &          \multirow{2}{*}{1}            &                                            \\ \cline{1-3} \cline{7-7} 
DomainNet                        & All    &            5               &                       &                      &                      &                   5 (Each domain provide one)                         \\ \hline
\end{tabular}}
\end{table}

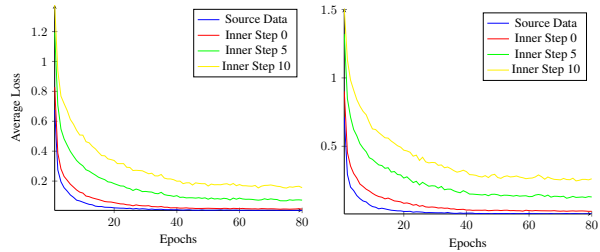
\begin{figure}[htbp] %插入图片
\centering %图片居中
\begin{minipage}{.24\textwidth}
\begin{tikzpicture}[scale=0.48] %tikz图片
\begin{axis}[
    xlabel= Epochs, %横坐标名
    ylabel= Average Loss, %纵坐标名
    tick align=outside,
    axis y line=left,
    axis x line=bottom%, %刻度在外显式%, %刻度在外显式
    %legend style={at={(0.5,-0.2)},anchor=north} %图例在图下方显示
    ]
\addplot [mark = none, blue] table {b00.txt};
\addlegendentry{Source Data}
\addplot [mark = none, red] table {b0.txt};
\addlegendentry{Inner Step 0}
\addplot [mark = none, green] table {b5.txt};
\addlegendentry{Inner Step 5}
\addplot [mark = none, yellow] table {b10.txt};
\addlegendentry{Inner Step 10}
\end{axis}
\end{tikzpicture}
\end{minipage}%
\begin{minipage}{.24\textwidth}
\begin{tikzpicture}[scale=0.48] %tikz图片
\begin{axis}[
    xlabel= Epochs, %横坐标名
    % ylabel= Average Loss, %纵坐标名
    tick align=outside,
    axis y line=left,
    axis x line=bottom%, %刻度在外显式%, %刻度在外显式
    %legend style={at={(0.5,-0.2)},anchor=north} %图例在图下方显示
    ]
\addplot [mark = none, blue] table {00.txt};
\addlegendentry{Source Data}
\addplot [mark = none, red] table {0.txt};
\addlegendentry{Inner Step 0}
\addplot [mark = none, green] table {5.txt};
\addlegendentry{Inner Step 5}
\addplot [mark = none, yellow] table {10.txt};
\addlegendentry{Inner Step 10}
\end{axis}
\end{tikzpicture}
\end{minipage}
\caption{The average loss of augmented samples in different inner steps to the current model changes with the number of epochs in a training process. The left results are conducted on the PACS dataset with Art-painting (A) as the unknown target domain; the right results are conducted with Sketch (S) as the unknown target domain.}
\end{figure}

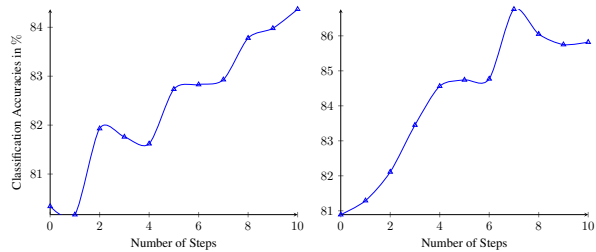
\begin{figure}[!h] %插入图片
% \caption{The effect of the number of steps in DR-DG. . Left results are conducted on PACS dataset with Art-painting (A) as unknown target domain, Right results are conducted on PACS dataset with Sketch (S) as unknown target domain.}
\centering %图片居中
\begin{minipage}{.24\textwidth}
\begin{tikzpicture}[scale=0.48] %tikz图片
\begin{axis}[
    xlabel= Number of Steps, %横坐标名
    ylabel= Classification Accuracies in $\%$, %纵坐标名
    tick align=outside,
    axis y line=left,
    axis x line=bottom%, %刻度在外显式%, %刻度在外显式
    %legend style={at={(0.5,-0.2)},anchor=north} %图例在图下方显示
    ]
\addplot[smooth,mark=triangle,blue] plot coordinates { 
    (0,80.34)
    (1,80.17)
    (2,81.93)
    (3,81.76)
    (4,81.62)
    (5,82.735)
    (6,82.83)
    (7,82.93)
    (8,83.78)
    (9,83.98)
    (10,84.37)
};
\end{axis}
\end{tikzpicture}
\end{minipage}%
\begin{minipage}{.24\textwidth}
\begin{tikzpicture}[scale=0.48] %tikz图片
\begin{axis}[
    xlabel= Number of Steps, %横坐标名
    % ylabel= Classification Accuracies, %纵坐标名
    tick align=outside,
    axis y line=left,
    axis x line=bottom%, %刻度在外显式%, %刻度在外显式
    %legend style={at={(0.5,-0.2)},anchor=north} %图例在图下方显示
    ]
\addplot[smooth,mark=triangle,blue] plot coordinates { 
    (0,80.89)
    (1,81.29)
    (2,82.11)
    (3,83.45)
    (4,84.56)
    (5,84.74)
    (6,84.77)
    (7,86.76)
    (8,86.05)
    (9,85.75)
    (10,85.82)
};
\end{axis}
\end{tikzpicture}
\end{minipage}
\caption{The effect of the number of inner steps. The left results are conducted on the PACS dataset with Art-painting (A) as the unknown target domain; the right results are conducted with Sketch (S) as the unknown target domain.}
\end{figure}
% We set the largest risk in $\Omega$ w.r.t. $\mathbf{f}_{\mathbf{w}}$ is
% \begin{align}
% R_{\Omega}(\mathbf{w})=\max_{{\boldsymbol{\alpha} \in \Delta_N} }\mathbb{E} \ell\left(\mathbf{f}_{\mathbf{w}}\circ \mathbf{G} ; S_{\boldsymbol{\alpha}},C, Y\right).
% \end{align}

\clearpage
\clearpage

\subsection{Visualization Results}\label{vr}

\begin{figure}[htbp]
	\centering
	\includegraphics[width=1\textwidth]{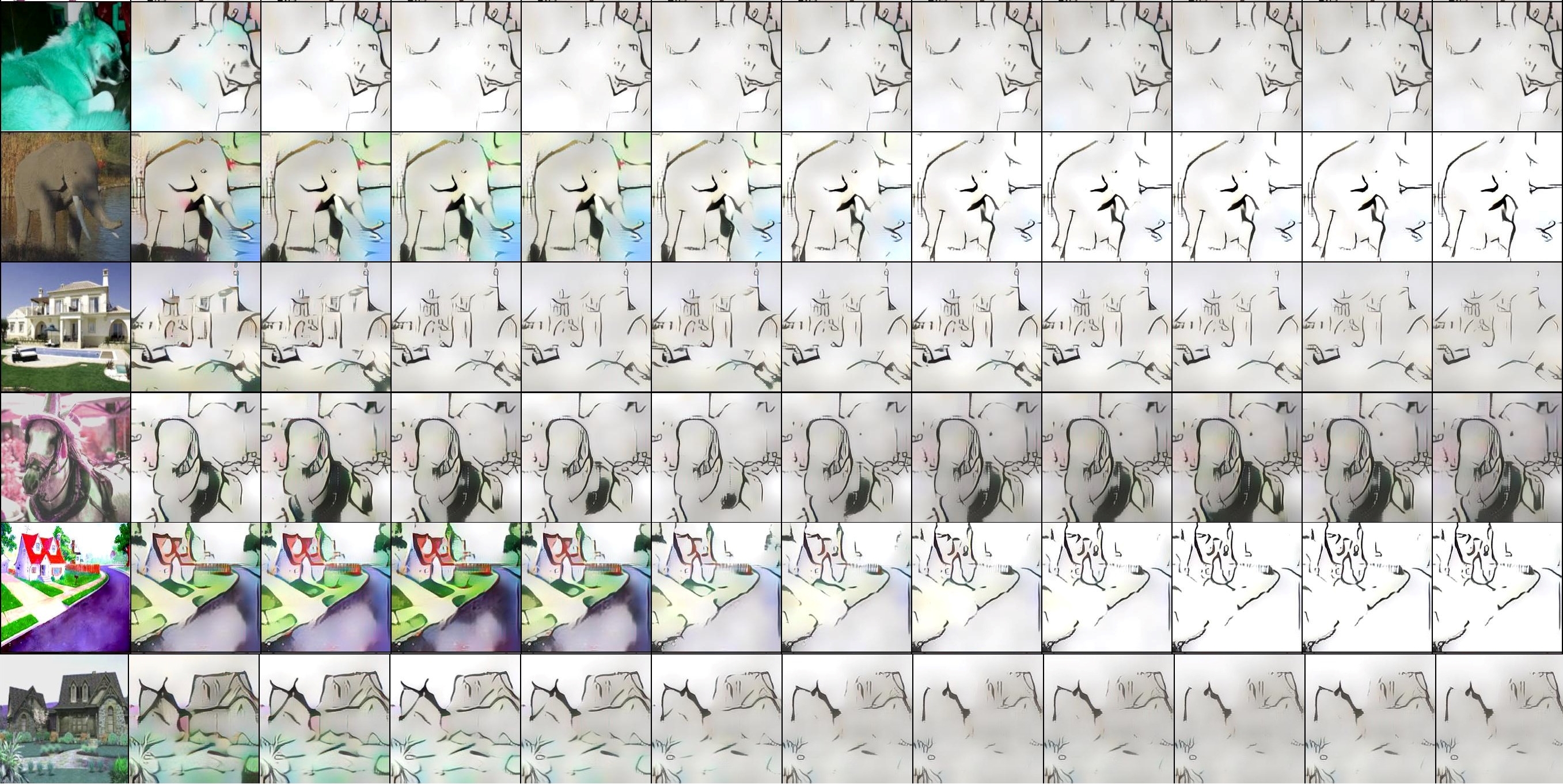}
	\caption{The changes of images during exploration when $\Omega$ is too large in our approach. In each row, the leftmost image is the original image, and from left to right is the result of each inner explore step of this image. The results are conducted on the PACS dataset with Art-painting (A) as the target with $\gamma=1$ and $M=10$.}
	\label{fig:bad}
\end{figure}

\begin{figure}[htbp]
	\centering
	\includegraphics[width=1\textwidth]{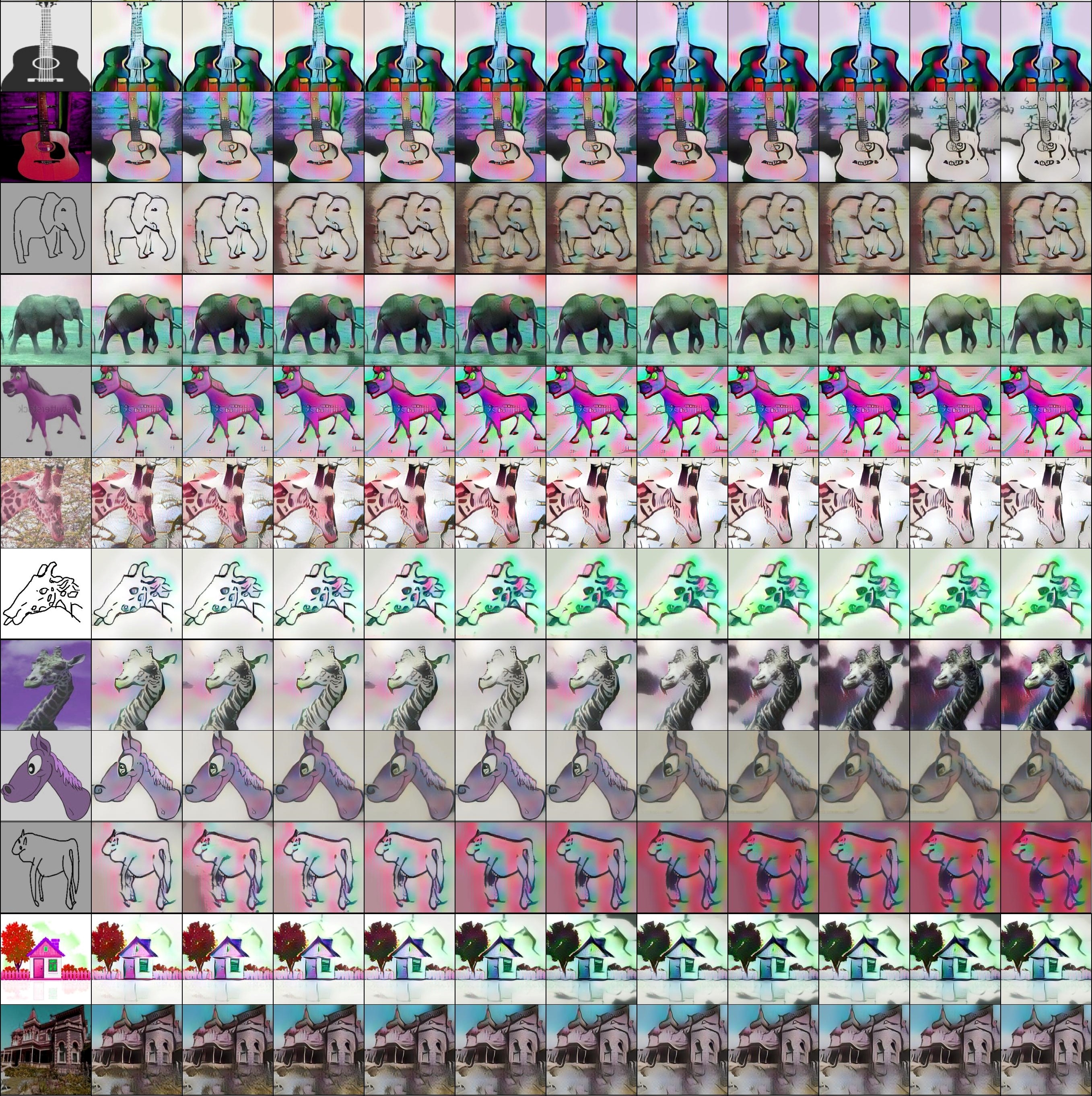}
	\caption{The changes of images during exploration in our approach. In each row, the leftmost image is the original image, and from left to right is the result of each inner explore step of this image.}
	\label{fig:good}
\end{figure}

\end{document}